%% file: iclr2027_conference.tex
\documentclass{article} 
\usepackage{iclr2027_conference,times}

\iclrfinalcopy
\input{math_commands.tex}

\usepackage{hyperref}
\usepackage{url}
\usepackage{amsmath}
\usepackage{amssymb}
\usepackage{amsthm}
\usepackage{amsfonts}
\usepackage{mathtools}
\usepackage{booktabs}
\usepackage{array}
\usepackage{microtype}
\usepackage{algorithm}
\usepackage{algpseudocode}
\usepackage{graphicx}
\usepackage{wrapfig}
\usepackage{lipsum}
\usepackage{titletoc}
\usepackage{enumerate}
\usepackage{multirow}
\usepackage{graphicx}
\usepackage[table]{xcolor}
\usepackage[textsize=scriptsize,color=cyan!15,bordercolor=cyan!60!black]{todonotes}

\newcommand{\refig}[1]{Fig.~\ref{#1}}
\newcommand{\refapp}[1]{App.~\ref{#1}}

\newcommand{\refsec}[1]{Sec.~\ref{#1}}

\newcommand{\Sset}{\mathcal{S}}

\DeclareMathOperator{\LSE}{LSE}

\newtheorem{theorem}{Theorem}

\newtheorem{proposition}{Proposition}
\newtheorem{corollary}{Corollary}
\newtheorem{remark}{Remark}
\newtheorem{definition}{Definition}

\providecommand{\KL}{\mathrm{KL}}

\ifcsname theorem\endcsname\else
  \newtheorem{theorem}{Theorem}
\fi
\ifcsname proposition\endcsname\else
  \newtheorem{proposition}[theorem]{Proposition}
\fi
\ifcsname observation\endcsname\else
  \newtheorem{observation}[theorem]{Observation}
\fi
\ifcsname lemma\endcsname\else
  \newtheorem{lemma}[theorem]{Lemma}
\fi
\ifcsname corollary\endcsname\else
  \newtheorem{corollary}[theorem]{Corollary}
\fi
\ifcsname definition\endcsname\else
  \newtheorem{definition}[theorem]{Definition}
\fi
\ifcsname assumption\endcsname\else
  
\fi
\ifcsname remark\endcsname\else
  \newtheorem{remark}[theorem]{Remark}
\fi
 
\newcommand{\Rbar}{\mathbb{R}\cup\{-\infty\}}
\newcommand{\Yp}{\mathcal{Y}^{+}_{x}}
\newcommand{\Cp}{\mathcal{C}^{+}}

\newcommand{\Ppi}{P_{\bar\pi}}
\newcommand{\VER}{V^{\bar\pi}}
\newcommand{\Vtrop}{V^{\bar\pi}_{\mathrm{trop}}}
\newcommand{\Qtrop}{Q^{\bar\pi}_{\mathrm{trop}}}
\newcommand{\Atrop}{A^{\bar\pi}_{\mathrm{trop}}}
\newcommand{\Vcap}{V_{\mathrm{cap}}}

\newcommand{\Jtrop}{J_{\mathrm{trop}}}
\newcommand{\JER}{J_{\mathrm{ER}}}
\newcommand{\JML}{J_{\mathrm{ML}}}

\newcommand{\basis}{\mathcal{B}}

\makeatletter
\patchcmd{\proof}
  {\topsep6\p@\@plus6\p@\relax}
  {\topsep0pt\relax}
  {}{}
\makeatother

\title{Tropical Reinforcement Learning}
\author{
  \hspace{-1.5em}\textbf{Arip Asadulaev}$^{\dagger}$ %
  \textbf{Aladin Djuhera}$^{\ddagger}$ %
  \textbf{Karim Salta}$^{\dagger}$ %
  \textbf{Holger Boche}$^{\ddagger}$ %
  \textbf{Fakhri Karray}$^{\dagger}$ %
  \textbf{Martin Takac}$^{\dagger}$%
}

\definecolor{skyblueA}{HTML}{EAF6FF}   
\definecolor{skyblueB}{HTML}{CFEAFF}   
\definecolor{skyblueC}{HTML}{4FA3E3}   
\definecolor{skyblueD}{HTML}{1D70B8}   
\definecolor{skyblueE}{HTML}{0F3D63}   
\definecolor{anthropicorange}{RGB}{204, 120, 92}
\definecolor{neuorange}{HTML}{E64A19}   

\usepackage[most]{tcolorbox}
\usepackage{xcolor}
\usepackage[scaled]{helvet}

\definecolor{takeawayborder}{HTML}{111111} 
\definecolor{takeawaytitlebg}{HTML}{7A0F14} 
\definecolor{takeawaybg}{HTML}{FFFFFF}      
\usepackage[T1]{fontenc}
\usepackage[scale=0.99]{sourcecodepro}      

\newcommand{\ccfont}{\ttfamily}
 
\newcommand{\pmv}[2]{#1\,$\pm$\,#2}
\newcommand{\pmvb}[2]{\textbf{#1\,{\boldmath$\pm$}\,#2}}

\hypersetup{
    colorlinks=true,
    citecolor=anthropicorange,     
    linkcolor=red,      
    urlcolor=anthropicorange       
}

\newtcolorbox{takeawaybox}[1][]{
  enhanced,
  breakable,
  colback=takeawaybg,
  colframe=takeawayborder,
  boxrule=1.1pt,
  arc=2.2mm,
  left=2mm,
  right=2mm,
  top=1.3mm,
  bottom=1.3mm,
  title=Takeaway,
  coltitle=white,
  fonttitle=\bfseries\fontfamily{phv}\selectfont,
  fontupper=\normalfont,
  attach boxed title to top left={xshift=2mm,yshift=-2mm},
  boxed title style={
    colback=anthropicorange,
    colframe=anthropicorange,
    arc=1.3mm,
    boxrule=0pt
  },
  before skip=8pt,
  after skip=10pt,
  #1
}

\begin{document}

\maketitle
\begingroup
  \renewcommand\thefootnote{}
  \footnotetext{\texttt{arip.asadulaev@mbzuai.ac.ae}\quad $^{\dagger}$MBZUAI \quad $^{\ddagger}$TUM}%
\endgroup

\input{chapters/0_abstract}
\input{chapters/1_introduction_3}

\input{chapters/preliminaries}
\input{chapters/antropical_sections_3}
\input{chapters/antropical_sections_4}
\input{chapters/4_related_work}
\input{chapters/results}


\bibliography{bibliography}
\bibliographystyle{iclr2027_conference}


\newpage

\appendix
\input{appendix/appendix_antropical}
\input{appendix/appendix_environments}
\input{appendix/appendix_training_setup}
\input{appendix/appendix_limitations}

\end{document}

%% file: math_commands.tex
\usepackage{amsmath,amsfonts,bm}

\def\eqref#1{equation~\ref{#1}}

\def\1{\bm{1}}

\DeclareMathAlphabet{\mathsfit}{\encodingdefault}{\sfdefault}{m}{sl}
\SetMathAlphabet{\mathsfit}{bold}{\encodingdefault}{\sfdefault}{bx}{n}

\newcommand{\KL}{D_{\mathrm{KL}}}



%% file: chapters/0_abstract.tex
\begin{abstract}
    Reinforcement learning for large language models typically maximizes expected return, adding up the probabilities of all successful trajectories.
    However, the classical sum formulation can only report \emph{how often the model policy succeeds, not which solution actually worked}, and because probabilities sum to one, reinforcing one solution can make the model forget another that was never shown to be wrong.
    This makes expected return a poor fit for compositional reasoning, where a solution must be assembled from reasoning steps that the model produces in separate, often failed, attempts but rarely produces together.
    To address this, we propose Tropical Reinforcement Learning, which rests on a simple change of algebra: instead of adding the probabilities of alternative solutions, we take their maximum, which yields the \emph{tropical semiring}.
    The value of a state then becomes the log-probability of its most likely verified solution, together with an explicit path that can be replayed and reused.
    This enables true composition, since the best prefix and the best suffix meeting at a shared state can be joined even when they come from different rollouts.
    To put this into practice, we introduce \textsc{Tropic}, a training algorithm for deterministic, resettable environments with verifiable outcomes.
    %
    %
    On four agentic tasks (Sokoban, Countdown, FrozenLake, WebShop), \textsc{Tropic} outperforms the strongest on-policy baselines by up to 16 percentage points.
    Changing the algebra of reinforcement learning, not just its estimators, can thus substantially improve compositional reasoning in language models.
\end{abstract}

%% file: chapters/1_introduction_3.tex
\vspace{-3mm}
\section{Introduction}
\vspace{-2mm}
\begin{wrapfigure}{r}{0.45\textwidth}
  \vspace{-5mm}
  \centering
  \includegraphics[width=\linewidth]{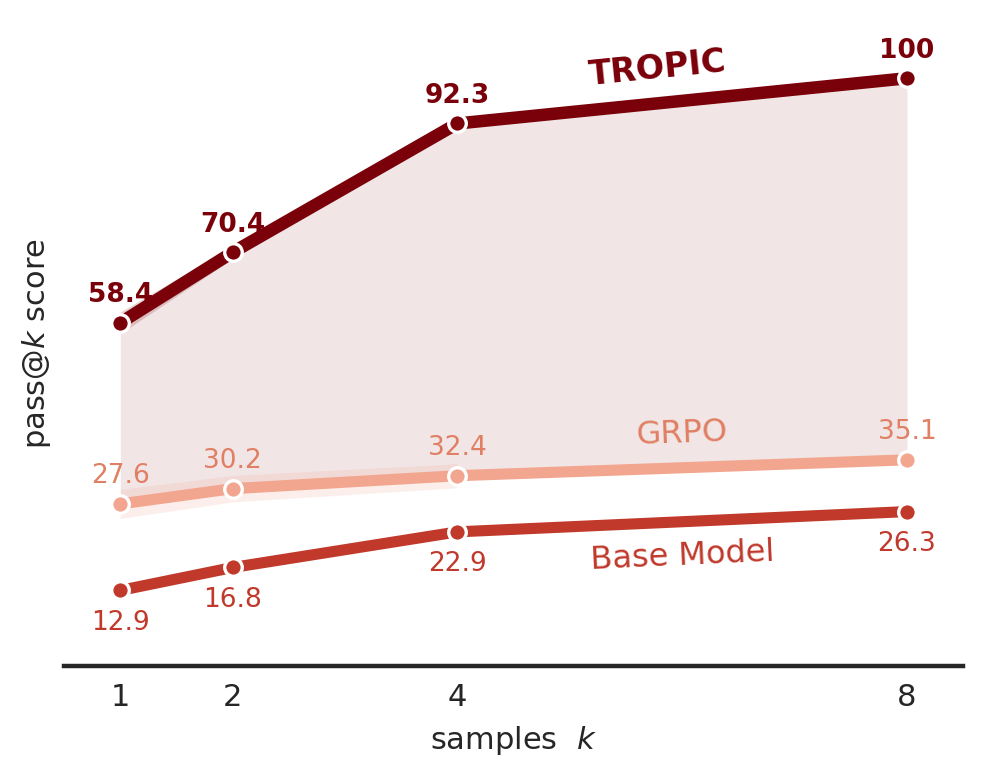}
  \vspace{-6mm}
  \caption{\emph{Sokoban pass@k results}}
  \vspace{-4mm}
  \label{fig:teaser}
\end{wrapfigure}

The best solution a language model can produce is often one it has never produced.
A model may know how to formalize a problem, derive an intermediate result, invoke the right theorem, and check an answer, and may have done each of these in earlier attempts, yet never have chained them in the order a new problem requires~\citep{press2023measuring, lake2018generalization}.
Reinforcement learning with verifiable rewards (RLVR) is the standard post-training tool for closing this gap by sampling many attempts and reinforcing the ones a verifier accepts~\citep{kimiteam2025kimi, deepseek2026v4, glm5team2026glm5, yuan2026composing, wen2026reinforcement}.
However, the classical expected-return formulation learns only from what it samples, as a trajectory is credited only after the policy has generated it end-to-end.
Recent works therefore focus on trajectory design, improving which rollouts are sampled and how they are credited~\citep{wang2026ragen2reasoningcollapseagentic, feng2025groupingrouppolicyoptimizationllm, li2026reason}.
Yet these methods keep the expected-return objective, and with it the requirement that a solution be sampled in one piece.
For composition, this requirement is the bottleneck: a solution that chains $d$ fragments, each found in a separate attempt, is sampled in one piece with probability exponentially small in $d$, even though the model has already produced every part of it.
%
%
%

%
%
%
%
%
%

Beyond failing to reward composition, expected return can also actively suppress the fragments it requires.
%
%
Suppose the model produces outcomes $A$, $B$, their composition $A\circ B$, and a failure $0$, with probabilities $(p_A,p_B,p_{AB},p_0)$. The first three receive reward one, so $J_{\mathrm{ER}}=p_A+p_B+p_{AB}$, but the composition has not yet been sampled.
Moving from $(0.30,0.30,0,0.40)$ to $(0.90,0.05,0,0.05)$ raises expected reward from $0.60$ to $0.95$.
Yet the probability that a batch of twenty rollouts contains $B$ at least once drops from $1-0.70^{20}\approx0.999$ to $1-0.95^{20}\approx0.642$.
The objective has improved, but one of the two fragments that $A\circ B$ requires is now absent from about a third of all batches.
In general, a high expected reward bounds the \emph{total} probability of currently successful behavior, but places no lower bound on any \emph{individual} fragment that a future composition requires.
This is consistent with recent findings that loss of policy entropy and reasoning diversity limits what RL post-training can achieve~\citep{cui2025entropy, jin2026revisiting}.


Both problems stem from the same choice: expected return \emph{adds up} the probabilities of alternative trajectories.
A sum does not record which trajectory succeeded, so fragments from different rollouts cannot be combined, and, because probabilities sum to one, raising one term lowers another.
Tropical RL changes this algebra. 
It still adds log-probabilities \emph{along} each trajectory, but combines alternative trajectories by their \emph{maximum}, which yields the max-plus, or \emph{tropical}, semiring.
The value of a state becomes the log-probability of its most likely verified path, and, because addition distributes over the maximum, it splits at any shared state into a best prefix and a best suffix, possibly from different rollouts.
Compositions can thus be assembled without ever being sampled in one piece, and since the maximum ignores all but the best path, the value never loses a verified path once found.
RL thus shifts from learning what the policy samples to learning what can be assembled from it.


We develop this shift from first principles to a practical algorithm. Our contributions are:
(i)~a \textsc{diagnosis}: we show that expected return, by adding probabilities over alternative trajectories, loses track of which solution succeeded and lets verified solutions be forgotten (Section~\ref{sec:flaw});
(ii)~a \textsc{forcing result}: we prove that the maximum is the only aggregation whose value always names a verified path and never decreases as new paths are found (Theorem~\ref{thm:central}); and
(iii)~\textsc{Tropic}: an algorithm of Tropical RL for agentic environments that joins fragments across rollouts and outperforms all baselines (\refig{fig:stitch}, \refig{fig:teaser}) on every task and model we test, by up to 16 points over the strongest one.

%% file: chapters/preliminaries.tex
\vspace{-3mm}
\section{Preliminaries}
\label{sec:preliminaries}
\vspace{-2mm}
\textbf{Notations.} Let \(x\) denote a reasoning problem and let a state \(s \in \mathcal{S}\)
represent a computational interface, such as a token prefix. An action \(a \in \mathcal{A}(s)\) is a token, reasoning step, tool call, or reusable computation that produces the next state $s' = T(s,a)$. Let $\bar{\pi}(a | s)$
denote a frozen language model used as a decoding or search prior. For a
trajectory $\tau = (s_0,a_0,s_1,a_1,\ldots,s_H),$ its probability under the frozen model is
$P_{\bar{\pi}}(\tau \mid s_0) = \prod_{t=0}^{H-1} \bar{\pi}(a_t | s_t).$
Consequently, $\log P_{\bar{\pi}}(\tau \mid s_0) = \sum_{t=0}^{H-1} \log \bar{\pi}(a_t \mid s_t).$ Let \(R_x(y) \in \{0,1\}\) be a terminal verifier applied to a complete response \(y\). We write $\mathcal{Y}^{+}_x(s) = \left\{ \tau : \tau \text{ begins at } s \text{ and terminates at } y \text{ with } R_x(y)=1 \right\}$ for the set of verified successful continuations from \(s\). A verified trajectory may receive a terminal quality score \(R_x(\tau)\). $\Sset_{\mathrm{T}}$ is the set of terminal states, and $R_x(s)$ is the terminal score for task $x$.

\textbf{Compositional reasoning and generalization.}
We call a solution \emph{compositional} when it is a sequence of reusable
computations, $\tau=f_1\circ\cdots\circ f_d$, in which each fragment $f_i$,
such as a reduction, a theorem application, or an arithmetic step, starts at
the state produced by the previous one. \emph{Compositional reasoning} is the
ability to produce such a sequence for a given problem.
\emph{Compositional generalization} is the ability to solve problems whose
solutions combine known fragments in a way that has not been observed before
\citep{lake2018generalization,keysers2020measuring,press2023measuring}. In this
paper, ``not observed'' has a specific meaning: every fragment of the solution
may already have appeared in some rollout, but never all of them together in a
single one. The question we study is how a learner can turn fragments found in
separate attempts into a verified solution that it was never able to sample.

\textbf{Expected-Return RL.} In a deterministic computation graph, the corresponding Bellman equations are $Q^{\bar{\pi}}(s,a) = r(s,a) + V^{\bar{\pi}}(T(s,a))$,  $V^{\bar{\pi}}(s) = \sum_{a \in \mathcal{A}(s)} \bar{\pi}(a \mid s) Q^{\bar{\pi}}(s,a).$
With binary verification on a deterministic computation graph, the classical
Bellman equation is
$\VER(s)=\sum_{a}\bar\pi(a\mid s)\,\VER\bigl(T(s,a)\bigr)$ with
$\VER(y)=R_x(y)$ at terminals. It uses two operations: probabilities of
successive steps are \emph{multiplied}, and probabilities of alternative
actions are \emph{added}. Unrolling the recursion yields
\begin{equation}
\VER(s)=\sum_{\tau\in\Yp(s)}\ \prod_{t}\bar\pi(a_t\mid s_t)
=\sum_{\tau\in\Yp(s)}\Ppi(\tau\mid s) .
\label{eq:ver}
\end{equation}
The classical value therefore has a direct operational meaning: \emph{\(V^{\bar{\pi}}(s)\) is the probability that one continuation sampled from \(\bar{\pi}\), beginning at \(s\), will eventually be accepted by the verifier.}

%% file: chapters/antropical_sections_3.tex
\begin{figure}[t!]
  \centering
  \vspace{-2em}
  \includegraphics[width=\linewidth]{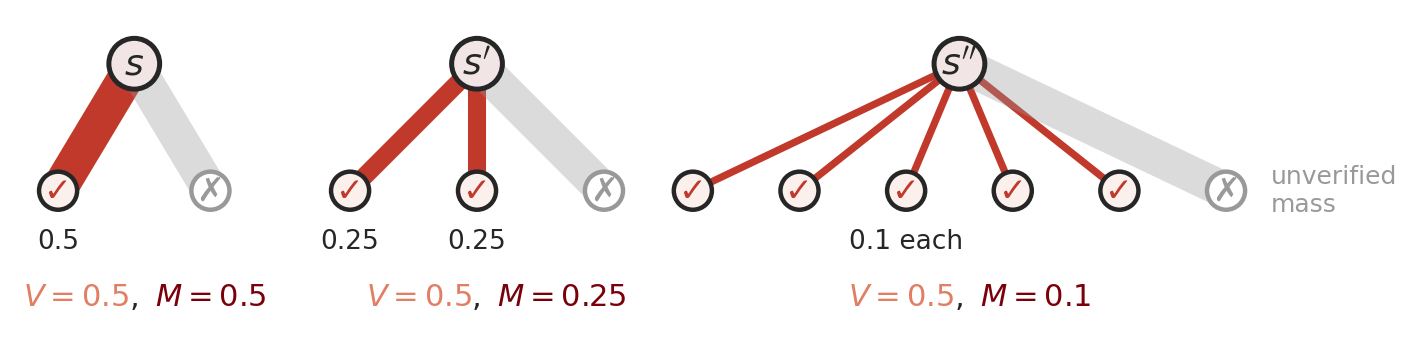} 
  \vspace{-2em}
  \caption{A difference between expected $V$ and max $M$ continuation.}
  \label{fig:Proposition_1}
  \vspace{-1em}
\end{figure}
\section{The Algebra of Expected Return}
\label{sec:flaw}
Expected return is a path sum in the semiring $(\mathbb R_{\ge0},+,\times)$ (Eq.~\ref{eq:ver}): steps along a path combine by $\otimes=\times$, and alternative paths combine by $\oplus=+$. The two operations play different roles. We keep the first and identify the second as the source of the problem.

Multiplying step probabilities is the autoregressive factorization of the model. In log space it becomes a sum of fragment scores: for an edge $f=(s,u_f,s')$ and a path $\tau$,
\begin{equation}
w(f)=\sum_{j=1}^{|u_f|}\log\bar\pi(u_{f,j}\mid x,s,u_{f,<j}),
\qquad
W(\tau)=\sum_{f\in\tau}w(f)=\log\Ppi(\tau\mid s).
\label{eq:additive}
\end{equation}
A composed path is scored from its fragments without resampling, and because multiplication distributes over addition, the value admits dynamic programming.

Adding the probabilities of alternative continuations has three properties. Each is a strength when the goal is the success probability of a single sample, and a weakness when the goal is to retain and compose computations.
\emph{(i) Addition is not selective.} $a+b$ is the score of no single path, so the value reports \emph{how much} of the actor succeeds, \textbf{not} \emph{which} computation succeeds (Observation~\ref{obs:nonsuff}).
\emph{(ii) Addition acts on normalized weights.} Since $\sum_a\pi(a\mid s)=1$, raising one alternative lowers the others (Proposition~\ref{prop:forget}).
\emph{(iii) Addition is not idempotent.} $a+a=2a$: a computation found twice counts twice, so what is sampled often is reinforced more (Appendix~\ref{app:frequency}).

\textbf{The sum does not say which computation succeeds.}
Composition needs the most accessible verified continuation, the one to reuse:
\begin{equation}
M^{\bar\pi}(s)=\max_{\tau\in\Yp(s)}\Ppi(\tau\mid s).
\label{eq:witness}
\end{equation}

\begin{observation}[Non-sufficiency]
\label{obs:nonsuff}
No function $g$ satisfies $M^{\bar\pi}(s)=g\bigl(\VER(s)\bigr)$ for all states and continuation sets.
\end{observation}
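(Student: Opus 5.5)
The plan is to argue by contradiction with two explicit continuation sets that have the same expected-return value $\VER(s)$ but different max-continuation values $M^{\bar\pi}(s)$. If a function $g$ existed, it would have to send that one number to two different outputs, which is impossible. The only thing to get right is that both examples are legitimate, meaning realizable by a frozen model $\bar\pi$ on a deterministic computation graph with a binary verifier. So the construction comes first and the contradiction is immediate.

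\textbf{Construction.} Take a root state $s$ with two actions $a_1,a_2$ and $\bar\pi(a_1\mid s)=\bar\pi(a_2\mid s)=\tfrac12$, each leading deterministically to a terminal state $y_1,y_2\in\Sset_{\mathrm{T}}$.
\begin{itemize}
\item In instance (A), set $R_x(y_1)=R_x(y_2)=1$. Then $\Yp(s)$ contains both one-step paths, and by Eq.~\ref{eq:ver} we get $\VER(s)=\tfrac12+\tfrac12=1$ and $M^{\bar\pi}(s)=\tfrac12$.
\item In instance (B), use a single action $a$ with $\bar\pi(a\mid s)=1$ leading to a verified terminal. Then $\VER(s)=1$ and $M^{\bar\pi}(s)=1$.
\end{itemize}
Any $g$ would therefore need $g(1)=\tfrac12$ and $g(1)=1$ at once, a contradiction.

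If one prefers to keep the action set fixed across instances, or to avoid the degenerate value $1$, the same construction works with three terminals under a uniform prior. Verifying two terminals gives $\VER=\tfrac23$ and $M=\tfrac13$. Replacing the root prior with $(\tfrac23,\tfrac16,\tfrac16)$ and verifying only the first terminal gives $\VER=\tfrac23$ and $M=\tfrac23$. This shows that the sum value fixes only the total verified mass, while $M$ depends on how that mass is split among paths.

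\textbf{Main obstacle.} The step I expect to need care is interpreting the quantifier ``for all states and continuation sets.'' The issue is whether $\bar\pi$ and the graph may vary between the two instances, or only the verifier, and hence $\Yp(s)$. I would handle the stricter reading: fix $\bar\pi$ and the graph, and vary only which terminals are verified. The three-terminal uniform example shows this reading is still enough, provided it is adjusted so the two verified sets have equal mass but unequal maxima. Concretely, take a prior $(\tfrac12,\tfrac14,\tfrac14)$. Verifying $\{y_1\}$ gives $\VER=\tfrac12$ and $M=\tfrac12$. Verifying $\{y_2,y_3\}$ gives $\VER=\tfrac12$ and $M=\tfrac14$. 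With this one fixed model and graph, the contradiction goes through exactly as before.
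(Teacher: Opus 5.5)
Your proof is correct and is essentially the paper's argument: exhibit two continuation sets with equal $\VER$ but different $M^{\bar\pi}$, so no $g$ can exist. Your fixed-prior example $(\tfrac12,\tfrac14,\tfrac14)$, verifying $\{y_1\}$ versus $\{y_2,y_3\}$, reproduces the paper's exact numbers ($0.5$ versus $0.25+0.25$), and realizing both sets inside one fixed model and graph by varying only the verifier is a harmless strengthening of the paper's version, which simply posits two states $s,s'$ with the given continuation probabilities.
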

\vspace{-1em}
\begin{proof}
Let $\Yp(s)=\{\tau_1\}$ with $\Ppi(\tau_1\mid s)=0.5$, and
$\Yp(s')=\{\tau_2,\tau_3\}$ with $\Ppi(\tau_2\mid s')=\Ppi(\tau_3\mid s')=0.25$.
Then $\VER(s)=\VER(s')=0.5$, while $M^{\bar\pi}(s)=0.5$ and $M^{\bar\pi}(s')=0.25$.
\end{proof}
\vspace{-1em}
The gap can be arbitrarily large: $n$ continuations of probability $1/n$ give $\VER=1$ but $M^{\bar\pi}=1/n$. The converse also fails, since $\{0.5\}$ and $\{0.5,\,0.25\}$ share $M^{\bar\pi}=0.5$ but differ in $\VER$. Once alternatives are added, the sum cannot tell which computations produced it (Fig.~\ref{fig:Proposition_1}).

\textbf{The sum can forget a verified solution.}
We call a value \emph{policy-indexed} if it depends only on the current policy, $V_t(s)=\Phi(\pi_{\theta_t},s)$. Every on-policy expected-return method is policy-indexed, with or without baselines, critics, or entropy bonuses.

\begin{proposition}[Policy-indexed values forget]
\label{prop:forget}
Let $s$ have two actions $a,b$ with independent softmax logits. The continuation after $b$ always succeeds, $\VER(T(s,b))=1$, and the continuation after $a$ succeeds half the time, $\VER(T(s,a))=\tfrac12$. Suppose a batch samples only $a$ at $s$, some of these rollouts succeed, and the group baseline is below $1$, so $a$ receives estimated advantage $\widehat A>0$. Then after one policy-gradient step, $\pi^{+}(b\mid s)<\pi(b\mid s)$ and $\VER_{+}(s)<\VER(s)$, even though no rollout observed $b$ and the verifier's judgement of the continuation through $b$ is unchanged.
\end{proposition}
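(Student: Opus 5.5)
We work directly with the softmax parametrization at $s$ and compute the first-order effect of one policy-gradient step. Write $\pi(a\mid s)=e^{\theta_a}/(e^{\theta_a}+e^{\theta_b})$ and $\pi(b\mid s)=1-\pi(a\mid s)$, with independent logits $\theta_a,\theta_b$. Since the batch samples only $a$ at $s$, the REINFORCE/GRPO-style estimator has the form $g=\widehat A\,\nabla_\theta\log\pi(a\mid s)$, possibly summed over the sampled rollouts through $a$. Any terms from rollouts that fail carry the same score function $\nabla\log\pi(a\mid s)$. The assumption $\widehat A>0$ is the net coefficient after the group baseline, which lies below $1$, is subtracted. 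Logits at other states are irrelevant to the two claims, because $\VER(T(s,a))$ and $\VER(T(s,b))$ are stated as fixed; the verifier and the continuation through $b$ are untouched. We should state explicitly that we treat these downstream values as held fixed under the step. That is the reading of ``the verifier's judgement of the continuation through $b$ is unchanged''.

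The key computation is the softmax score. We have $\partial_{\theta_a}\log\pi(a\mid s)=1-\pi(a\mid s)=\pi(b\mid s)>0$ and $\partial_{\theta_b}\log\pi(a\mid s)=-\pi(b\mid s)<0$. A step of size $\eta>0$ therefore gives $\theta_a^+=\theta_a+\eta\widehat A\,\pi(b\mid s)$ and $\theta_b^+=\theta_b-\eta\widehat A\,\pi(b\mid s)$. The logit gap $\theta_b-\theta_a$ strictly decreases by $2\eta\widehat A\,\pi(b\mid s)>0$. Because $\pi(b\mid s)=\sigma(\theta_b-\theta_a)$ is strictly increasing in the gap, we get $\pi^+(b\mid s)<\pi(b\mid s)$ exactly, not merely to first order. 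This is the main point, and it holds for any positive step size. No smallness argument is needed, because the two-action softmax reduces to a monotone sigmoid of one scalar.

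For the value, the Bellman equation from the Preliminaries gives $\VER(s)=\pi(a\mid s)\cdot\tfrac12+\pi(b\mid s)\cdot1=\tfrac12+\tfrac12\pi(b\mid s)$. This is strictly increasing in $\pi(b\mid s)$, so $\VER_+(s)-\VER(s)=\tfrac12\bigl(\pi^+(b\mid s)-\pi(b\mid s)\bigr)<0$. Finally, no rollout visited $b$, so $b$'s value entered the update only through normalization. This is exactly property (ii), normalized weights, from Section~\ref{sec:flaw}.

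The only delicate step is the first one. We must justify that the update at $s$ is a positive multiple of $\nabla\log\pi(a\mid s)$. That requires that entropy bonuses and KL terms, if present, are either absent or do not overturn the sign. We would state the proposition for the plain estimator and remark that adding such regularizers changes the magnitude but, for small enough regularization weight, not the sign.
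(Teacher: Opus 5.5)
Your proof is correct and follows the paper's argument: the two-action softmax score gives $\Delta z_a>0$ and $\Delta z_b<0$, so $\pi^{+}(b\mid s)<\pi(b\mid s)$, and then $\VER(s)=\tfrac12+\tfrac12\pi(b\mid s)$ strictly decreases. You state explicitly that the downstream values are held fixed under the step; the paper makes the same assumption when it says that only the logits at $s$ change.
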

\vspace{-1em}
\begin{proof}
From $\partial\log\pi(a\mid s)/\partial z_j=\mathbf 1\{j=a\}-\pi(j\mid s)$, the step gives $\Delta z_a=\eta\widehat A\,(1-\pi(a\mid s))>0$ and $\Delta z_b=-\eta\widehat A\,\pi(b\mid s)<0$, so $\pi^{+}(b\mid s)<\pi(b\mid s)$. Only the logits at $s$ change, so $\VER(s)=\tfrac12\pi(a\mid s)+\pi(b\mid s)=\tfrac12+\tfrac12\pi(b\mid s)$ strictly decreases.
\end{proof}
\vspace{-1em}
A policy-indexed learner can restore $\pi(b\mid s)$ only by sampling $b$ again. Normalization makes reinforcement zero-sum, so a verified capability can disappear without ever being refuted.
\subsection{Replacing the aggregation}
\label{sec:forcing}
\begin{wrapfigure}{r}{0.48\textwidth}
  \vspace{-4mm}
  \centering
  \includegraphics[width=\linewidth]{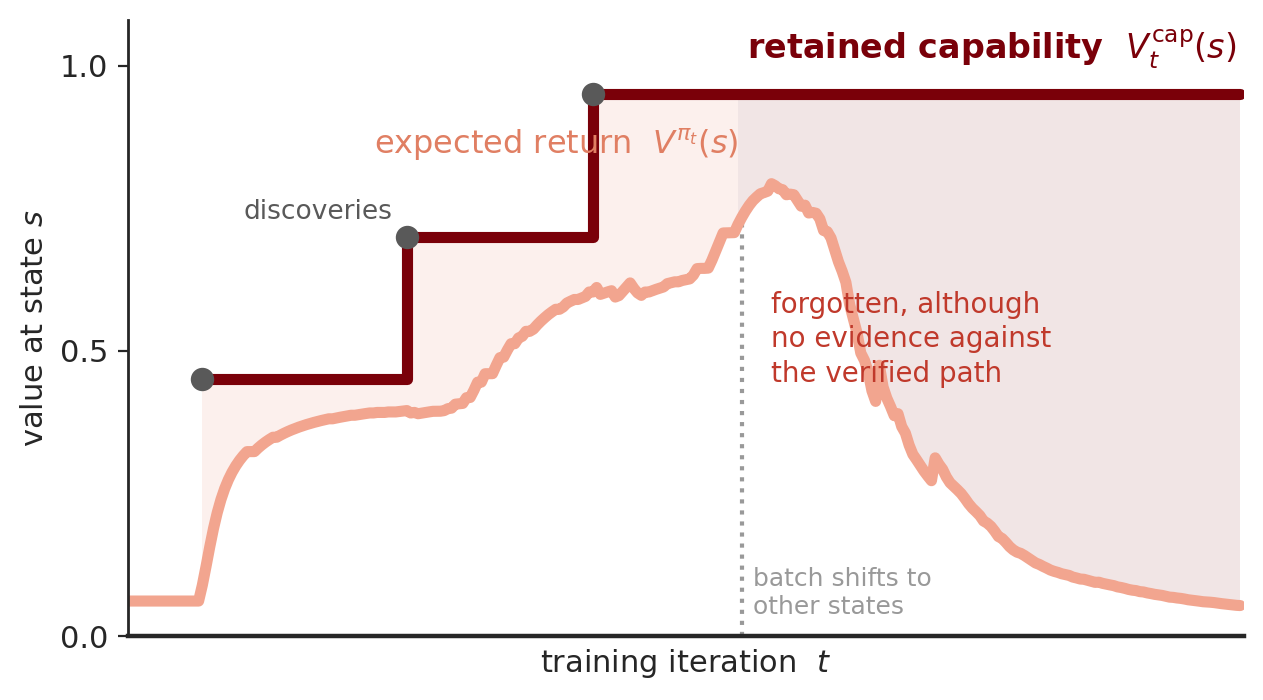}
  \vspace{-6mm}
  \caption{\emph{Lemma \ref{thm:mono} schematic illustration}}
  \vspace{-4mm}
  \label{fig:Theorem-1}
\end{wrapfigure}

The diagnosis points to a single change. We keep composition by adding
log-scores along a path (Eq.~\ref{eq:additive}), keep the model's
log-probability as the score of each path, and replace only the operation
that combines alternatives. Write $\oplus$ for that operation. The value of a
state $s$ is the aggregate of the scores of its verified continuations $V_\oplus(s)=\bigoplus_{\tau\in\Yp(s)}W(\tau)$,
where $\oplus$ is commutative and associative, so that the value does not
depend on the order in which continuations were found. Expected return is one
choice: with $\oplus=\LSE$, Eq.~\ref{eq:ver} gives $V_\oplus(s)=\log\VER(s)$.
For composition, we ask two things of the value.

\textbf{Certification.} The value of $s$ is always the score of one of
its verified continuations: $a\oplus b\in\{a,b\}$. The value then names a
concrete path that can be replayed, verified, and joined with others.

\textbf{Retention.} Finding a new verified continuation never lowers the
value: $V_\oplus(s)$ cannot decrease when a path is added to $\Yp(s)$.

\begin{theorem}[Certification and retention force the maximum]
\label{thm:central}
If $\oplus$ satisfies \textnormal{(Certification)} and \textnormal{(Retention)}, then
$a\oplus b=\max\{a,b\}$. With additive scores, the resulting structure is
$(\Rbar,\max,+)$, the max--plus or tropical semiring.
\end{theorem}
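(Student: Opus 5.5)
The argument will be a direct case analysis on two arbitrary scores $a,b\in\Rbar$, using that $\oplus$ is commutative and associative as stipulated in Section~\ref{sec:forcing}.

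First, we restate (Retention) as an inequality on $\oplus$. Take a state whose verified set is $\{\tau_1\}$ with $W(\tau_1)=a$, and add a second verified continuation $\tau_2$ with $W(\tau_2)=b$. Retention says $V_\oplus$ cannot decrease, so $a\oplus b\ge a$. Starting instead from $\{\tau_2\}$ and using commutativity gives $a\oplus b=b\oplus a\ge b$. Together these yield $a\oplus b\ge\max\{a,b\}$.

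Second, (Certification) gives $a\oplus b\in\{a,b\}$, so $a\oplus b\le\max\{a,b\}$. Combined with the first step, this forces $a\oplus b=\max\{a,b\}$. The argument only requires that every pair of scores $a,b$ actually arises as path scores of verified continuations at some state. This is the main obstacle. I would handle it by noting that the axioms are imposed on $\oplus$ as a binary operation on $\Rbar$, i.e.\ for all continuation sets, exactly as in Observation~\ref{obs:nonsuff}, where continuation sets with arbitrary probabilities were posited. In a concrete construction, one can always build a one-edge graph whose edge has log-probability $a$ (or $b$) via Eq.~\ref{eq:additive}. Since only $a,b\le 0$ are realized as log-probabilities, I would state the universality assumption explicitly to cover all of $\Rbar$. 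I would also check $-\infty$ as the empty aggregate: $\max\{a,-\infty\}=a$, so $-\infty$ is the identity for $\oplus$.

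Finally, for the semiring claim I would verify the axioms of $(\Rbar,\max,+)$ directly. Max is commutative and associative with identity $-\infty$. Addition is associative with identity $0$. Addition distributes over max because $c+\max\{a,b\}=\max\{c+a,c+b\}$, which holds since translation by $c$ is monotone, including the case $c=-\infty$. Finally, $-\infty$ is absorbing for $+$. With path scores additive along edges as in Eq.~\ref{eq:additive}, this identifies the resulting structure as the tropical semiring and completes the proof.
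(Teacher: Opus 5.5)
Your proof is correct and follows the paper's own argument: you apply (Retention) in both orders and use commutativity to get $a\oplus b\ge\max\{a,b\}$, and then (Certification) forces equality. Your extra remarks on realizability of scores and your explicit check of the max--plus semiring axioms go slightly beyond the paper, which treats both conditions as properties of the binary operation itself and takes the semiring structure as standard.
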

\vspace{-1em}
\begin{proof}
Adding a continuation of score $b$ to one of score $a$ gives $a\oplus b\ge a$
by (Retention); adding $a$ to $b$ gives $b\oplus a\ge b$, and $b\oplus a=a\oplus b$.
Hence $a\oplus b\ge\max\{a,b\}$. By (Certification), $a\oplus b$ equals $a$ or $b$, so
it equals $\max\{a,b\}$.
\end{proof}
\vspace{-1em}

Each requirement rules out a familiar choice. Addition and $\LSE$ satisfy
(Retention) but fail (Certification): their value exceeds every term and is the score of no
path. The minimum satisfies (Certification) but fails (Retention). Every policy-indexed value
fails (Retention) over the course of training, by Proposition~\ref{prop:forget}:
its value can fall although no verified continuation was removed.

By Theorem~\ref{thm:central}, the value of a
state is the score of its best verified continuation,
\begin{equation}
\Vtrop(s)=\max_{\tau\in\Yp(s)}W(\tau)=\log M^{\bar\pi}(s),
\label{eq:vtropdef}
\end{equation}
which is exactly the quantity that Observation~\ref{obs:nonsuff} shows
expected return cannot supply. It also has a direct meaning: it is the
distance from the actor to its nearest verified computation,
$\Vtrop(s)=-\min_{\tau\in\Yp(s)}\KL\bigl(\delta_\tau\,\|\,\Ppi(\cdot\mid s)\bigr)$,
where $\delta_\tau$ executes $\tau$ deterministically
(Lemma~\ref{lem:kl}, Appendix~\ref{app:projections}).
Section~\ref{sec:tropicalrl} builds reinforcement learning on this value.

\textbf{The maximum reverses the three failures.} It is \emph{selective}: the
value is the score of one path, recoverable by backpointers
(Observation~\ref{obs:nonsuff}). It is \emph{idempotent}:
$\max\{a,a\}=a$, so rediscovering a computation changes nothing
(Appendix~\ref{app:frequency}). And it is \emph{monotone under set growth},
which the normalized sum cannot be (Proposition~\ref{prop:forget}). Let
$\mathcal G_t=(\mathcal S_t,\mathcal E_t)$ be the graph of transitions
discovered by iteration $t$, $\Cp_t(s)$ the verified continuations
constructible from it, and
\begin{equation}
\Vcap[t](s)=\max_{\tau\in\Cp_t(s)}R_x(\tau),\qquad\max\emptyset=-\infty .
\label{eq:vcap}
\end{equation}

\begin{lemma}[Retained capability is monotone]
\label{thm:mono}
If the verifier is fixed and the memory retains its discoveries,
$\mathcal E_t\subseteq\mathcal E_{t+1}$, then $\Vcap[t+1](s)\ge\Vcap[t](s)$
for every state $s$ and iteration $t$.
\end{lemma}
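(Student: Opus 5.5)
The argument is a set-inclusion followed by monotonicity of the maximum, i.e.\ the Retention property behind Theorem~\ref{thm:central}. I would first fix what ``constructible from $\mathcal G_t$'' means: $\tau\in\Cp_t(s)$ iff $\tau$ is a finite path $s=s_0\to s_1\to\cdots\to s_H$ whose every transition $(s_k,a_k,s_{k+1})$ lies in $\mathcal E_t$, whose endpoint $s_H$ is terminal, and whose response is accepted, $R_x(s_H)=1$.

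\textbf{Inclusion step.} Take any $\tau\in\Cp_t(s)$. Each of its edges is in $\mathcal E_t\subseteq\mathcal E_{t+1}$, so $\tau$ is also a path in $\mathcal G_{t+1}$. Its vertices are endpoints of retained edges, so they lie in $\mathcal S_{t+1}$; when $H=0$, the state $s$ itself is terminal and verified, so no edge is needed. Terminality of $s_H$ is a property of the environment and not of the memory. Verification holds because the verifier is fixed, so $R_x$ at iteration $t+1$ equals $R_x$ at iteration $t$, and in particular the score $R_x(\tau)$ is unchanged. Hence $\Cp_t(s)\subseteq\Cp_{t+1}(s)$.

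\textbf{Monotonicity step.} The maximum over a superset is at least the maximum over the subset: for every $\tau\in\Cp_t(s)$ we have $R_x(\tau)\le\max_{\tau'\in\Cp_{t+1}(s)}R_x(\tau')=\Vcap[t+1](s)$. Taking the maximum over $\tau$ gives $\Vcap[t](s)\le\Vcap[t+1](s)$. If $\Cp_t(s)=\emptyset$, then $\Vcap[t](s)=-\infty$ by the convention in Eq.~\ref{eq:vcap}, and the inequality holds trivially. If $\Cp_t(s)$ is infinite because $\mathcal G_t$ has cycles, I would read $\max$ as $\sup$; the same inclusion argument still applies. With binary or bounded scores the supremum is attained, so nothing changes. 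This step is the Retention property used in Theorem~\ref{thm:central}, applied one added path at a time, or directly to the set.

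\textbf{Main obstacle.} There is no real analytic difficulty. The one point to handle carefully is the modelling assumption that constructibility and the score $R_x(\tau)$ depend only on the edge set and the fixed verifier, and not on the current policy $\pi_{\theta_t}$. Otherwise a policy-dependent quantity could decrease, as in Proposition~\ref{prop:forget}. I would state explicitly that $\Cp_t$ is defined purely graph-theoretically from $\mathcal E_t$; this is exactly what makes $\Vcap$ not policy-indexed.
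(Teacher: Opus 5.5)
Your proof is correct and follows the paper's argument exactly: $\mathcal E_t\subseteq\mathcal E_{t+1}$ gives $\mathcal{C}^{+}_t(s)\subseteq\mathcal{C}^{+}_{t+1}(s)$, and a maximum over a larger set cannot be smaller; your handling of the empty set and your graph-theoretic, policy-independent definition of constructibility make explicit what the paper leaves implicit. One small slip does not affect the argument: bounded scores alone do not guarantee that a supremum over an infinite set is attained, but monotonicity under inclusion holds for suprema anyway, and the paper's step-indexed graph is finite and acyclic, so that case does not arise.
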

\vspace{-1em}
\begin{proof}
$\mathcal E_t\subseteq\mathcal E_{t+1}$ implies $\Cp_t(s)\subseteq\Cp_{t+1}(s)$,
and the maximum is monotone under inclusion.
\end{proof}
\vspace{-1em}

Under the maximum, discovery enlarges a set, and a maximum over a growing set
cannot fall. Distributivity also survives the change,
$c+\max\{a,b\}=\max\{c+a,\,c+b\}$, so a best prefix and a best suffix can be
chosen independently at a shared state (Eq.~\ref{eq:decouple}), and dynamic
programming over exponentially many compositions remains possible.

%% file: chapters/antropical_sections_4.tex
\vspace{-1em}
\section{Tropical Reinforcement Learning}
\label{sec:tropicalrl}
\vspace{-0.5em}

Section~\ref{sec:forcing} fixed the value: the score of the best verified
continuation. This section turns it into reinforcement learning. We write its
Bellman equations (Section~\ref{sec:bellman}), prove the property that
separates it from classical RL, namely that improvement composes
(Section~\ref{sec:composes}), and build the algorithm \textsc{Tropic} on it
(Section~\ref{sec:tropic}).

\subsection{Tropical Bellman equations}
\label{sec:bellman}

With $\lambda\ge0$ weighing how much the model's likelihood counts, the
tropical Bellman equations are
\begin{equation}
\Qtrop(s,a)=\lambda\log\bar\pi(a\mid s)+r(s,a)+\Vtrop\bigl(T(s,a)\bigr),
\qquad
\Vtrop(s)=\max_{a\in\mathcal A(s)}\Qtrop(s,a).
\label{eq:tropbellman}
\end{equation}
The classical recursion averages over actions; this one takes the best. In
max--plus notation it is linear,
$\Vtrop(s)=\bigoplus_a\bigl(\lambda\log\bar\pi(a\mid s)\otimes r(s,a)\otimes\Vtrop(T(s,a))\bigr)$,
and unrolled it is the score of the best path,
\begin{equation}
\Vtrop(s)=\max_{\tau}\Bigl\{\lambda\log\Ppi(\tau\mid s)+\sum_{t}r(s_t,a_t)+R_x(s_H)\Bigr\}.
\label{eq:troppath}
\end{equation}
With a binary verifier, internal rewards are zero, accepted terminals have
value $0$ and rejected ones $-\infty$, so failed paths drop out. At
$\lambda=1$,
\begin{equation}
\Vtrop(s)=\max_{\tau\in\Yp(s)}\log\Ppi(\tau\mid s),
\label{eq:tropbinary}
\end{equation}
which is the tropical value of Eq.~\ref{eq:vtropdef}. Two special cases place
the equation among known objects. At $\lambda=0$ the model drops out, and
Eq.~\ref{eq:tropbellman} is classical Bellman optimality on a deterministic
graph; its value is the capability of Lemma~\ref{thm:mono}, which training can
never lower. At $\lambda=1$ it is Viterbi decoding under $\bar\pi$ restricted
to verified paths, and it is the zero-temperature limit of the soft Bellman
equation \citep{ziebart2010,haarnoja2018}. What is new is not the recursion
but what it is used for: a value that composes, and a learning rule built on
it.

Scores add along paths and addition distributes over max, so the best solution through a state $s$ splits into a
best prefix and a best suffix,
\begin{equation}
\max_{p\in\mathcal P_t(s),\,q\in\Cp_t(s)}W(p\circ q)
=\max_{p\in\mathcal P_t(s)}W(p)+\max_{q\in\Cp_t(s)}W(q),
\label{eq:decouple}
\end{equation}
where $\mathcal P_t(s)$ are retained prefixes ending at $s$ and $\Cp_t(s)$
retained verified suffixes starting there. The two halves may come from
different rollouts. Eq.~\ref{eq:decouple} requires that the score of a suffix
depend only on the state where it starts, not on how that state was reached.
Our approach therefore prompts the actor with the problem and the current environment state only, $\bar\pi(u\mid x,s)$, and includes the step count in $s$ so that the graph is acyclic; any joined path is also replayed and rescored before it is used. A solution of $d$ fragments, each produced with
probability about $\rho$, takes about $\rho^{-d}$ rollouts to sample in one
piece, but is available to Eq.~\ref{eq:decouple} as soon as each fragment has
been found once. Table~\ref{tab:classical-vs-tropical} lists how the change
from sum to max alters every object built on the Bellman equation.

\begin{table}[t]
\centering
\tiny
\caption{Classical and tropical RL on the same verified paths. Proofs in Appendix~\ref{app:proofs}.}
\label{tab:classical-vs-tropical}
\ccfont
\begin{tabular}{@{}p{0.25\linewidth}p{0.31\linewidth}p{0.36\linewidth}@{}}
\toprule
 & Classical (sum) & Tropical (max) \\
\midrule
The value measures & how much of the actor already succeeds & how far the actor is from its nearest verified solution (Lem.~\ref{lem:kl}) \\ \hline
Advantage & relative to the average action, must be estimated & $\le0$, exactly $0$ on the best path, computed by the DP \\\hline
Performance difference & expectation over the new policy's trajectories & a sum over one path (Thm.~\ref{thm:pdl}) \\\hline
Gradient weight on a step & visit prob.\ $\times$ action prob.\ $\times$ completion prob. & $1$ on every step of the best path (Cor.~\ref{cor:pg}) \\\hline
Rare path of depth $d$ & escape takes $\sim\rho^{-d}$ steps & $O(\log\tfrac1\rho)$ steps; MaxRL matches this on one path (Thm.~\ref{thm:plateau}) \\\hline
Training on one path & can lower an unsampled solution (Prop.~\ref{prop:forget}) & never lowers any state; shared parts gain in full (Thm.~\ref{thm:cpi-main}) \\\hline
Rediscovery & counts again & changes nothing (Thm.~\ref{thm:freqfree}) \\
\bottomrule
\end{tabular}
\vspace{-1em}
\end{table}

\subsection{Improvement composes}
\label{sec:composes}
The central difference between the two algebras shows up when the actor is trained. Under expected return, reinforcing one solution takes probability from the others, and a verified solution can lose value without ever being refuted (Proposition~\ref{prop:forget}). Under the maximum, the opposite holds.

\begin{theorem}[Compositional policy improvement]
\label{thm:cpi-main}
Let $\mathcal G$ be the retained graph of a problem $x$, and for each state
$s$ let $\mathcal C(s)$ be the verified paths in $\mathcal G$ that start at
$s$. Let the actor be a softmax policy with separate logits at each state, and
let $V_\theta(s)=\max_{\tau\in\mathcal C(s)}\log P_\theta(\tau\mid s)$. Let
$\tau^\star$ be the best verified path from the root, and take one gradient
step on its log-likelihood,
$\theta'=\theta+\eta\,\nabla_\theta\log P_\theta(\tau^\star\mid x)$, with any
$\eta>0$. Then:
\begin{enumerate}
\item[(i)] $V_{\theta'}(s)\ge V_\theta(s)$ for every state $s$ in $\mathcal G$;
\item[(ii)] if the best verified path from $s$ passes through a state $u$ of
$\tau^\star$, then $V_{\theta'}(s)-V_\theta(s)\ge V_{\theta'}(u)-V_\theta(u)>0$.
\end{enumerate}
Best paths are taken with ties broken by a fixed rule. Proof in
Appendix~\ref{app:pit}.
\end{theorem}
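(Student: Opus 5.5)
The plan is to make the structure of the gradient step explicit, reduce both claims to statements about individual paths, and then use subpath optimality of $\tau^\star$.

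\textbf{Step 1: only the logits on $\tau^\star$ change.} Write $\tau^\star=(s_0,a_0,s_1,\dots,s_H)$. Since the step count is part of the state, $\mathcal G$ is acyclic and each state of $\tau^\star$ appears once. With separate softmax logits $z_{s}$ at each state,
\[
\nabla_{z_s}\log P_\theta(\tau^\star\mid x)=\mathbf e_{a_t}-\pi_\theta(\cdot\mid s_t)
\]
if $s=s_t$ lies on $\tau^\star$, and it is $0$ otherwise. So $\theta'$ differs from $\theta$ only at the states $s_t$.

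\textbf{Step 2: the chosen action gains at least as much as any rival.} Fix $s_t$ and write $\pi=\pi_\theta(\cdot\mid s_t)$. For every $j\neq a_t$ the logit gap $z_{a_t}-z_j$ grows by $\eta\bigl(1-\pi(a_t)+\pi(j)\bigr)>0$. Two consequences follow.
\begin{enumerate}
\item[(a)] $\pi'(a_t)/\pi'(j)>\pi(a_t)/\pi(j)$ for all $j\neq a_t$, hence $\Delta\log\pi(a_t\mid s_t)\ge\Delta\log\pi(j\mid s_t)$.
\item[(b)] $\pi'(a_t)>\pi(a_t)$ whenever $\pi(a_t)<1$, i.e. whenever $s_t$ has at least two actions. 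This holds for every $\eta>0$, not only for small steps.
\end{enumerate}
Let $\Delta_\theta(\tau)=W_{\theta'}(\tau)-W_\theta(\tau)$ denote the change in a path's log-score. By Step 1 it is the sum of per-step changes at the $\tau^\star$-states that $\tau$ visits, and it is $0$ if $\tau$ visits none.

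\textbf{Step 3: subpath optimality.} For each state $u=s_t$ on $\tau^\star$, the suffix $\tau^\star_u$ is the best verified path from $u$ under $\theta$. Otherwise, splicing a better suffix onto the prefix of $\tau^\star$ would beat $\tau^\star$ at the root, using the additivity in Eq.~\ref{eq:decouple}; the fixed tie-breaking rule keeps the choice consistent.

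I then claim $\tau^\star_u$ is still best from $u$ under $\theta'$. Take any verified $\sigma$ from $u$. It agrees with $\tau^\star_u$ up to a first departure state $s_k$, where it takes some $b\neq a_k$. On the shared segment both paths gain equally. At $s_k$, $\tau^\star_u$ gains at least as much, by Step 2(a). After the departure, $\tau^\star_u$ gains $\ge0$ at each remaining step by Step 2(b).

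After departing, $\sigma$ may re-enter $\tau^\star$ at some $s_m$, possibly at a different action. Treat such a re-entry recursively: $\sigma$'s remainder from $s_m$ is dominated by $\tau^\star_{s_m}$ under both $\theta$ and $\theta'$, by induction on the remaining depth in the acyclic graph. Hence $V_{\theta'}(u)=W_{\theta'}(\tau^\star_u)$ and
\[
V_{\theta'}(u)-V_\theta(u)=\Delta_\theta(\tau^\star_u)=\sum_{t\ge k(u)}\Delta\log\pi(a_t\mid s_t),
\]
which is $>0$ as soon as some state on $\tau^\star_u$ branches. Strict positivity in (ii) needs exactly this mild nondegeneracy, and I would state it explicitly.

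\textbf{Step 4: the claims (i) and (ii).} Let $s$ be any state with best path $\sigma$. If $\sigma$ avoids every $\tau^\star$-state, then $\Delta_\theta(\sigma)=0$, so $V_{\theta'}(s)\ge W_{\theta'}(\sigma)=V_\theta(s)$.

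Otherwise, let $u$ be the first $\tau^\star$-state on $\sigma$, and write $\sigma=p\circ q$ with $p$ ending at $u$. By optimality of $\sigma$ and Step 3, $q$ may be taken to be $\tau^\star_u$, which tie-breaking permits. The prefix $p$ touches no $\tau^\star$-state before $u$, so its log-score is unchanged. Therefore
\[
V_{\theta'}(s)\ge W_{\theta'}(p\circ\tau^\star_u)=V_\theta(s)+\Delta_\theta(\tau^\star_u)=V_\theta(s)+\bigl(V_{\theta'}(u)-V_\theta(u)\bigr),
\]
which gives (ii) and, since the increment is $\ge0$, also (i). If the best path passes through several $\tau^\star$-states, apply this with the first one. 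For a later one, the inequality follows because the gains along $\tau^\star$ are nonnegative and accumulate.

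\textbf{Main obstacle.} The delicate step is Step 3: showing that no competing path leaving $\tau^\star$ overtakes it after the update. This matters because rival actions at $\tau^\star$-states lose probability, so paths that depart $\tau^\star$ do get worse. That is harmless only because every such path is dominated by the $\tau^\star$ suffix, which loses nothing. I would handle re-entering paths by induction on depth in the acyclic graph.
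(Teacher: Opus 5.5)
Your proposal is correct and is essentially the paper's proof. Steps 1--2 are Lemma~\ref{lem:local}, and Step 3 is the stability Lemma~\ref{lem:stability}: your first-departure/re-entry recursion is a path-by-path form of the gain bound (Lemma~\ref{lem:gainbound}), which gives $\Delta_\theta(\sigma)\le\sum_{x\in S_\sigma\cap S_{\tau^\star}}\Delta w(x,a^\star(x))\le\Delta_\theta(\tau^\star_u)$ in one line. Step 4 is the join argument of Theorem~\ref{thm:cpi}, with your explicit splice onto $\tau^\star_u$ replacing the paper's greedy-path coincidence, so it does not even depend on the tie-breaking rule. The nondegeneracy you flag for strict positivity matches the paper's standing assumption that every nonterminal state on the trained path has at least two actions. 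One small inaccuracy in your closing remark: with three or more actions some rivals can actually gain probability, but your argument only uses the relative bound of Step 2(a), so nothing breaks.
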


The state $s$ in (ii) need not lie on $\tau^\star$ and may never have been
trained from. Its best solution can be a prefix from one rollout joined at
$u$ to a suffix trained as part of another, and it inherits the full gain made
on that suffix. The reason is a property of the maximum: the step can lower
the probability of actions that are not on the best path, but no best path
uses them, so no value sees the loss. Under a sum every alternative counts,
and the loss is always seen.

The theorem is exact for an actor with separate parameters per state. With
shared parameters, as in a language model, a step on $\tau^\star$ also moves
the actor at other states, and the guarantee holds up to this interference.

\textbf{Tropical improves pass@$k$.} Let $p(x)=\sum_{\tau\in\Yp}P_\theta(\tau\mid x)$
be the pass@1 of problem $x$, so pass@$k$ is $1-(1-p(x))^k$. The best verified
path is one of the paths in this sum, so for every problem
\begin{equation}
\log p(x)\;\ge\;\Vtrop(s_0(x)),
\qquad
\text{pass@}k(x)\;\ge\;1-\bigl(1-e^{\Vtrop(s_0(x))}\bigr)^k .
\label{eq:passk}
\end{equation}
Raising the tropical value raises a lower bound on both quantities for every
problem. The bound does not by itself force pass@1 up; what it explains is
\emph{which} problems improve. The expected-return gradient on problem $x$ is
$\nabla p(x)=p(x)\,\nabla\log p(x)$, so a problem solved once in a hundred
attempts receives a hundredth of the push. The tropical gradient gives every
problem with a verified path unit weight on that path (Corollary~\ref{cor:pg}),
however rare, and the archive supplies such paths, including composed ones,
for problems the actor almost never solves alone. Pass@$k$ is decided by
exactly these problems: when $p(x)$ is small, pass@$k\approx k\,p(x)$.
Expected return spends its gradient on problems that are already solved, which
is why its pass@$k$ curve flattens toward its pass@1 (Figure~\ref{fig:teaser}).
Empirically, \textsc{Tropic} also has the highest pass@1 on every task and
model we test (Table~\ref{tab:main_results}).
\begin{takeawaybox}
\textbf{Classical RL:} improving one solution can erase another.\\
\textbf{Tropical RL:} improving one solution never lowers any other, and every
solution that shares a piece of it gains at least as much as that piece.
\end{takeawaybox}

\subsection{\textsc{Tropic}: tropical policy optimization}
\label{sec:tropic}

\textbf{Objective.} \textsc{Tropic} makes the best verified solution of each problem as likely as possible:
\begin{equation}
\Jtrop(\theta)=\mathbb{E}_{x\sim\mu}\Bigl[\max_{\tau\in\Yp(s_0(x))}\log P_\theta(\tau\mid x)\Bigr].
\label{eq:jtrop}
\end{equation}

\textbf{Memory.} For each problem, \textsc{Tropic} keeps a graph whose edges are all valid steps $f=(s,u_f,s')$ seen so far. This includes steps from failed attempts, since a later attempt may succeed from where they ended. Complete paths that are replayed from the start and accepted by the verifier form the \emph{archive}. At the start of each iteration the actor is frozen as $\bar\pi_t$, and every stored step is rescored under it, $w_t(f)=\log\bar\pi_t(u_f\mid x,s)$.

\textbf{Prefix and suffix values.} For every state, \textsc{Tropic} computes how easily the actor can \emph{finish} from it ($V_t$) and how easily it can \emph{reach} it ($F_t$):
\begin{equation}
V_t(s)=\max_{f=(s,u_f,s')}\bigl\{w_t(f)+V_t(s')\bigr\},
\qquad
F_t(s')=\max_{f=(s,u_f,s')}\bigl\{F_t(s)+w_t(f)\bigr\}.
\label{eq:dp}
\end{equation}
$V_t$ is the tropical Bellman equation (Eq.~\ref{eq:tropbellman}) run backward from verified endings, and $F_t$ is the same recursion run forward from the start. Their sum $F_t(s)+V_t(s)$ is the score of the best verified solution through $s$ (Eq.~\ref{eq:decouple}). Both passes take time linear in the size of the graph.

\textbf{Growing the memory.} New paths enter the graph in two ways.
\emph{Sampling.} Each problem receives $G$ groups of $n$ attempts. The first group starts from the initial state, and the others start from \emph{frontier} states, which the graph can reach but not yet finish from ($V_t(s)=-\infty$). To choose a start, \textsc{Tropic} first picks the depth $r^\star$ from which the fewest attempts have started so far. At that depth it picks the state that is easy to reach and rarely tried:
\begin{equation}
s^\star=\arg\max_{s\ \text{at depth}\ r^\star}\bigl\{F_t(s)-\kappa\log(1+n_s)\bigr\},
\label{eq:select}
\end{equation}
where $n_s$ is the number of earlier attempts started from $s$. Each new attempt is prepended with the best known prefix to $s^\star$, then replayed and verified as a complete path.
\emph{Composition.} Wherever a state has both a known way in and a known way out, \textsc{Tropic} joins the $L$ best prefixes with the $L$ best suffixes, then replays and verifies each joined path. Accepted joins enter the archive. This is how \textsc{Tropic} finds solutions the actor never produced in one piece.

\textbf{Training.} For each problem, \textsc{Tropic} selects up to $K$ archived paths $\basis_{x,t}$. The first is the best path $\tau_1=\tau^\star_x$. The rest are the paths with the highest coverage score $c_t(\tau)=\sum_{f\in\tau}(1+m_t(f))^{-1/2}$, where $m_t(f)$ is the number of earlier iterations in which step $f$ was trained. These extra paths rehearse rarely trained steps so they are not forgotten (Proposition~\ref{prop:forget}). The model then takes one gradient step on the per-token log-likelihood of the selected paths, averaged over the set $\mathcal{X}^{+}_t$ of problems that have at least one:
\begin{equation}
\mathcal{L}_t(\theta)=-\frac{1}{|\mathcal{X}^{+}_t|}\sum_{x\in\mathcal{X}^{+}_t}\frac{1}{|\basis_{x,t}|}\sum_{\tau\in\basis_{x,t}}\frac{1}{T_\tau}\log P_\theta(\tau\mid x),
\label{eq:loss}
\end{equation}
where $T_\tau$ is the number of tokens in $\tau$. It has no reward weights, no baseline, and no term that pushes any path down.



%% file: chapters/4_related_work.tex
\begin{figure}[t!]
  \centering
  \includegraphics[width=\linewidth]{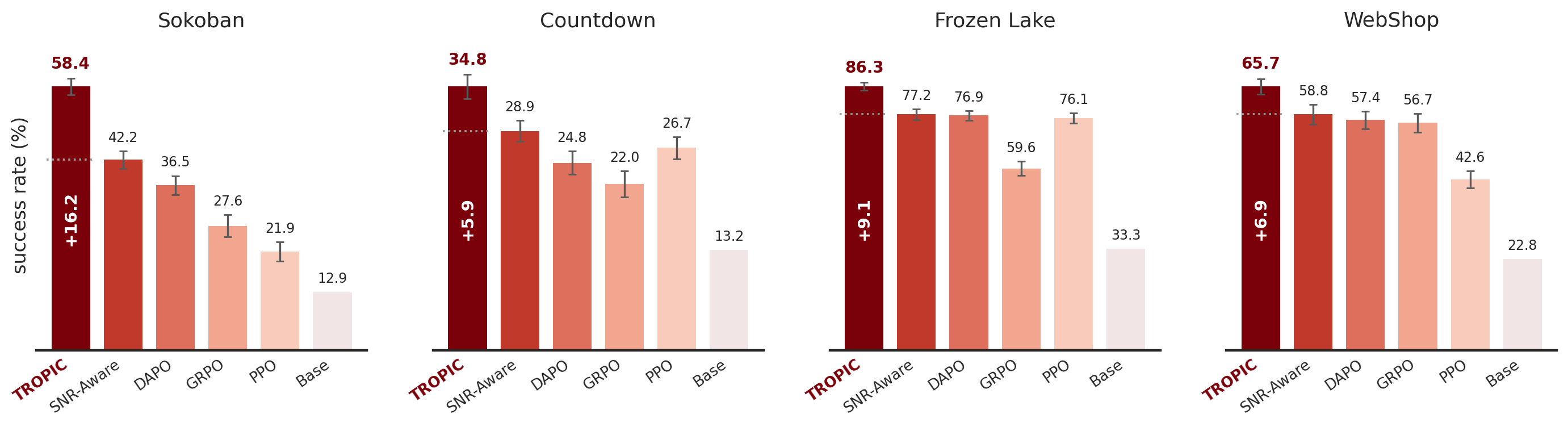}
  \vspace{-2em}
  \caption{\textsc{Tropic} performance against the baselines under the same compute budget.}
  \label{fig:stitch}
  \vspace{-1em}
\end{figure}
\vspace{-1em}
\section{Related Work}
\vspace{-1em}
Recent work shows that RL post-training can teach language models to compose skills they already have~\citep{yuan2026composing, abdulsalam2026compositional, fu2026outcome, kong2026modules} and can improve reasoning beyond the base model~\citep{wen2026reinforcement}, although its gains may stay within the base model's sampling support~\citep{yue2025does}.
Many methods improve RL training for LLMs: DAPO~\citep{yu2025dapoopensourcellmreinforcement} and SNR-aware filtering~\citep{wang2026ragen2reasoningcollapseagentic} address training collapse, GiGPO~\citep{feng2025groupingrouppolicyoptimizationllm} and VinePPO~\citep{kazemnejad2024vineppo} refine step-level credit, process supervision rewards intermediate steps~\citep{uesato2022solving, lightman2024verify}, and resets have been used for credit assignment~\citep{samanta2026credit}.
All of these works optimize expected return, so a composition must be sampled in one piece before it is reinforced. They are complementary to ours, which changes how alternative trajectories are aggregated.

Other works move beyond expected return. Several optimize pass@$k$ directly~\citep{brown2024monkeys, tang2025optimizing, walder2026pass}, and MaxRL~\citep{tajwar2026maximum} maximizes the logarithm of expected return, giving more weight to rarely solved problems.
Expected return sums the probabilities of verified paths, MaxRL takes the logarithm of this sum, and Tropical RL takes its maximum, the zero-temperature limit in which alternatives are selected rather than summed.

Closer to our method, tree search retains and extends promising partial trajectories~\citep{yao2023tree, zhang2024restmcts}, T-STAR~\citep{li2026reason} grafts corrected steps onto shared trajectory structure, and \citet{liu2026save} preserve correct prefixes when penalizing errors.
These methods reuse trajectories that were sampled; \textsc{Tropic} also composes verified solutions that were never sampled in one piece. We compare against T-STAR in Section~\ref{sec:ablation_results}.
Finally, max-plus methods have been used to compute or approximate value functions of known deterministic control problems~\citep{akian2008maxplus, mceneaney2007, berthier2020maxplus, goncalves2021maxplus, bach2019max}. We instead use the max-plus semiring to define the learning objective of a language-model policy.

%% file: chapters/results.tex
\begin{table}[t]
    \centering
    \caption{\textbf{Results (Qwen2.5-3B-Instruct \& Gemma 4 E4B-it).}
    We report Success Rate ($\uparrow$, mean $\pm$ standard deviation over three independent training seeds) and Generated Tokens per Episode ($\downarrow$) on the held-out evaluation set, together with end-to-end Training Wall-Clock Time in hours ($\downarrow$).}
    \label{tab:main_results}
    \vspace{0.06in}
    \small
    \ccfont
    \setlength{\tabcolsep}{3.8pt}
    \renewcommand{\arraystretch}{1.1}

    \resizebox{0.98\textwidth}{!}{%
    \begin{tabular}{ll ccc c ccc}
    \toprule
    \multirow{2.5}{*}{\textbf{Task}} &
    \multirow{2.5}{*}{\textbf{Method}} &
    \multicolumn{3}{c}{\textbf{Qwen2.5-3B-Instruct}} & &
    \multicolumn{3}{c}{\textbf{Gemma 4 E4B-it}} \\
    \cmidrule{3-5} \cmidrule{7-9}
    & &
    \small{Success Rate ($\uparrow$)} &
    \small{Tokens ($\downarrow$)} &
    \small{Time (h) ($\downarrow$)} & &
    \small{Success Rate ($\uparrow$)} &
    \small{Tokens ($\downarrow$)} &
    \small{Time (h) ($\downarrow$)} \\
    \midrule

    \multirow{6}{*}{\textbf{Sokoban}}
    & Base Model & 12.89               & 254 & --   & & 45.50 & 205 & -- \\
    & PPO        & \pmv{21.90}{2.14}   & 231 & 2.70 & & \pmv{71.68}{1.71} & 209 & 8.85 \\
    & GRPO       & \pmv{27.60}{2.47}   & 205 & 2.90 & & \pmv{66.53}{2.32} & 254 & 8.20 \\
    & DAPO       & \pmv{36.55}{2.11}   & 174 & 2.80 & & \pmv{72.65}{1.35} & 219 & 8.40 \\
    & SNR-Aware  & \pmv{42.20}{1.96}   & 201 & 2.10 & & \pmv{76.55}{1.94} & 237 & 8.30 \\
    \rowcolor{anthropicorange!30} \cellcolor{white}
    & TROPIC     & \pmvb{58.40}{1.83}  & 187 & 1.60 & & \pmvb{83.10}{1.88} & 224 & 8.10 \\

    \midrule

    \multirow{6}{*}{\textbf{Countdown}}
    & Base Model & 13.23               & 150 & --   & & 42.18 & 204 & -- \\
    & PPO        & \pmv{26.71}{1.46}   & 164 & 2.50 & & \pmv{66.53}{1.80} & 211 & 5.10 \\
    & GRPO       & \pmv{21.97}{1.72}   & 152 & 2.00 & & \pmv{67.65}{1.10} & 203 & 4.80 \\
    & DAPO       & \pmv{24.75}{1.51}   & 158 & 2.10 & & \pmv{63.10}{2.25} & 208 & 5.20 \\
    & SNR-Aware  & \pmv{28.93}{1.38}   & 149 & 1.75 & & \pmv{58.20}{1.95} & 201 & 4.50 \\
    \rowcolor{anthropicorange!30} \cellcolor{white}
    & TROPIC     & \pmvb{34.79}{1.62}  & 147 & 1.20 & & \pmvb{73.63}{1.48} & 186 & 4.30 \\

    \midrule

    \multirow{6}{*}{\textbf{Frozen Lake}}
    & Base Model & 33.34               & 192 & --   & & 71.30 & 218 & -- \\
    & PPO        & \pmv{76.08}{1.65}   & 110 & 2.55 & & \pmv{84.40}{1.33} & 231 & 3.70 \\
    & GRPO       & \pmv{59.57}{2.28}   & 101 & 2.20 & & \pmv{83.90}{1.85} & 290 & 3.50 \\
    & DAPO       & \pmv{76.88}{1.57}   & 106 & 2.30 & & \pmv{85.10}{2.12} & 281 & 3.90 \\
    & SNR-Aware  & \pmv{77.24}{1.74}   & 103 & 1.80 & & \pmv{79.50}{2.45} & 220 & 3.20 \\
    \rowcolor{anthropicorange!30} \cellcolor{white}
    & TROPIC     & \pmvb{86.35}{1.29}  & 100  & 1.35 & & \pmvb{92.30}{1.10} & 227 & 3.30 \\

    \midrule

    \multirow{6}{*}{\textbf{WebShop}}
    & Base Model & 22.75               & 686 & --   & & 55.27 & 521 & -- \\
    & PPO        & \pmv{42.57}{2.11}   & 413 & 9.00 & & \pmv{68.42}{1.94} & 496 & 18.20 \\
    & GRPO       & \pmv{56.68}{2.34}   & 375 & 7.90 & & \pmv{73.16}{1.81} & 472 & 16.60 \\
    & DAPO       & \pmv{57.39}{2.19}   & 390 & 8.00 & & \pmv{74.58}{1.69} & 481 & 17.10 \\
    & SNR-Aware  & \pmv{58.79}{2.51}   & 382 & 5.90 & & \pmv{76.84}{1.73} & 458 & 14.80 \\
    \rowcolor{anthropicorange!30} \cellcolor{white}
    & TROPIC     & \pmvb{65.72}{1.96}  & 366 & 4.80 & & \pmvb{81.40}{1.55} & 439 & 14.50 \\

    \bottomrule
    \end{tabular}
    }
\vspace{-1em}
\end{table}

\vspace{-0.5em}
\section{Experiments and Results}
\label{sec:experiments}
\vspace{-0.5em}
\textbf{Environments.}
We evaluate \textsc{Tropic} on four interactive tasks from RAGEN-2~\citep{wang2026ragen2reasoningcollapseagentic}:
(i)~\emph{Sokoban}~\citep{weber2018imaginationaugmentedagentsdeepreinforcement}, a grid-based planning task in which the agent pushes boxes onto targets while avoiding irreversible dead ends so that solutions depend on reusing good partial prefixes,
(ii)~\emph{Countdown}~\citep{katz2026seeminglysimpleplanningproblems}, an exact-target arithmetic task with many valid solutions in which intermediate results from different attempts can be recombined,
(iii)~\emph{FrozenLake}~\citep{brockman2016openaigym}, a grid world with terminal holes and sparse rewards in which safe partial paths must be retained and extended, and
(iv)~\emph{WebShop}~\citep{yao2023webshopscalablerealworldweb}, a long-horizon web-shopping task in which search, navigation, and attribute selection must be composed under delayed feedback.
All environments are deterministic and resettable. 
Further details are provided in \refapp{app:RL_environments}.
\\[2pt]
\textbf{Training and Baselines.}
For each task, we train Qwen-2.5-3B-Instruct~\citep{qwen2025qwen25technicalreport} and Gemma 4 E4B-it~\citep{gemmateam2026gemma4} using PPO~\citep{schulman2017proximalpolicyoptimizationalgorithms}, GRPO~\citep{shao2024deepseekmath}, DAPO~\citep{yu2025dapoopensourcellmreinforcement}, RAGEN's SNR-Aware method~\citep{wang2026ragen2reasoningcollapseagentic}, and \textsc{Tropic}.
All methods are trained under a matched experimental setup, using the same GPU, task-relevant observations, rollout budget, environment workers, and the same observation representation.
We discuss the differences in restarts, replay, and stopping in~\refapp{app:training_and_evaluation_setup}.
We evaluate all methods across three independent training seeds on a predefined held-out split of unseen instances.
We report the task success rate, end-to-end training time, and the average number of generated tokens per episode.
\\[2pt]
\textbf{Results.}
Table~\ref{tab:main_results} and \refig{fig:stitch} show that \textsc{Tropic} achieves the highest success rate on every task and for both models, while generating a comparable or smaller number of tokens and training faster in all but one setting.
The largest gains appear on Sokoban, where irreversible dead ends make complete solutions rare. 
\textsc{Tropic} improves over the strongest baseline (SNR-Aware) by $16.2$ points with Qwen and by $6.6$ points with Gemma, and trains in $1.6$ instead of $2.1$ hours with Qwen.
On Countdown, where exact verification leaves most attempts without signal, \textsc{Tropic} gains $5.9$ points over SNR-Aware with Qwen and $6.0$ points over GRPO with Gemma.
On FrozenLake, where a single misstep ends an episode, retaining safe partial paths yields $9.1$ points over SNR-Aware with Qwen and $7.2$ points over DAPO with Gemma.
On WebShop, where late mistakes hide correct search and selection steps, \textsc{Tropic} improves over SNR-Aware by $6.9$ points with Qwen and $4.6$ points with Gemma, while training fastest with both models.
Among the non-tropical baselines, no method is consistently strongest. 
SNR-Aware is the best baseline on every task with Qwen but the weakest on Countdown and FrozenLake with Gemma, and the ordering of PPO, GRPO, and DAPO changes from task to task, suggesting that refinements within the expected-return objective do not transfer reliably across tasks and models.
\textsc{Tropic}, in contrast, is best in every setting and for both model families.

\vspace{-1em}
\subsection{Ablation Results}
\vspace{-0.5em}
\label{sec:ablation_results}
\begin{wraptable}{r}{0.52\textwidth}
    \vspace{-2em}
    \centering
    \caption{\textbf{Ablations and Cross-Comparison (Sokoban, Qwen2.5-3B-Instruct).}
    Success rate, tokens per episode, training time, and \emph{First-Solve}.}
    \label{tab:sokoban_ablation}
    \vspace{0.06in}
    \ccfont
    \setlength{\tabcolsep}{3.8pt}
    \renewcommand{\arraystretch}{1.1}
    \resizebox{\linewidth}{!}{%
    \begin{tabular}{l cccc}
    \toprule
    \textbf{Method} &
    \small{Success Rate ($\uparrow$)} &
    \small{Tokens ($\downarrow$)} &
    \small{Time (h) ($\downarrow$)} &
    \small{First-Solve ($\uparrow$)} \\
    \midrule
    Base Model     & 12.89             & 254 & --   & -- \\
    \midrule
    MaxRL          & \pmv{51.30}{2.31} & 202 & 2.53 & -- \\
    GiGPO          & \pmv{47.60}{2.18} & 199 & 3.11 & -- \\
    GiGPO + T-STAR & \pmv{53.10}{2.18} & 194 & 3.80 & -- \\
    \midrule
    TROPIC (No Comp.)   & \pmv{40.80}{2.26} & 205 & 1.45 & 0.0 \\
    TROPIC (Root-Only)  & \pmv{51.80}{2.12} & 196 & 1.92 & \pmv{24.7}{3.2} \\
    TROPIC (Succ.-Only) & \pmv{53.30}{1.97} & 192 & 1.51 & 0.0 \\
    \rowcolor{anthropicorange!30}
    Full TROPIC         & \pmvb{58.40}{1.83} & 187 & 1.58 & \pmvb{31.6}{3.5} \\
    \bottomrule
    \end{tabular}
    }
    \vspace{-1em}
\end{wraptable}
To isolate the contribution of each core component in \textsc{Tropic}, we evaluate three ablations: 
(i) \emph{No Composition} retains persistent memory and frontier acquisition but disables cross-rollout graph recombination, restricting updates strictly to directly sampled paths, 
(ii) \emph{Root-Only} preserves compositional reuse but forces all rollouts to initialize at $s_0$, removing frontier continuation, and 
(iii) \emph{Successful-Only} maintains frontier exploration but restricts recombination to fragments originating from successful trajectories, ignoring useful transitions in failed rollouts. 
Additionally, we track \emph{First-Solve}, the proportion of solved instances whose first verified solution was synthesized via cross-rollout composition before any complete successful trajectory was observed. 
By definition, \emph{First-Solve} is zero for \emph{No Composition} (which lacks recombination) and \emph{Successful-Only} (which requires a pre-existing successful trajectory for fragment reuse).
\\[2pt]
\textbf{Component Ablations.}
Table~\ref{tab:sokoban_ablation} reports ablations for Qwen on Sokoban.
The results show that composition is the main source of improvement. 
Disabling it reduces success from $58.4\%$ to $40.8\%$, the largest drop and roughly the level of the strongest baseline, SNR-Aware ($42.2\%$), even though the archive and frontier restarts are kept.
Removing frontier restarts (\emph{Root-Only}) reduces success to $51.8\%$ and \emph{First-Solve} from $31.6\%$ to $24.7\%$, since every attempt must regenerate known prefixes before exploring new suffixes.
Restricting composition to successful trajectories reduces success to $53.3\%$, showing that fragments from failed rollouts remain useful building blocks.
\\[2pt]
\textbf{Comparison with Alternative Methods.}
We further ask whether stronger success-oriented objectives or cross-trajectory credit assignment can match \textsc{Tropic}.
To this end, we compare against MaxRL~\citep{tajwar2026maximum}, which replaces expected reward with a maximum-likelihood objective, GiGPO~\citep{feng2025groupingrouppolicyoptimizationllm}, which combines trajectory- and step-level advantages, and T-STAR~\citep{li2026reason}, which grafts corrected steps onto shared trajectory structure, instantiated on top of GiGPO.
They reach $51.3\%$, $47.6\%$, and $53.1\%$, respectively, while \textsc{Tropic} reaches $58.4\%$, $5.3$ points above the strongest of them, with the fewest tokens and the shortest training time.

\vspace{-1.5em}
\section{Conclusion and Discussion}
\label{sec:conclusion_and_discussion}
\vspace{-1em}
RL has moved from tabular dynamic programming to deep function approximation and now to post-training large language models. 
Each wave changed how values are represented and estimated, but all kept the same objective. 
This objective measures how often a sampled attempt succeeds. 
Compositional reasoning instead asks which verified solution can be assembled from what the model has already produced.
We showed that answering this question requires only changing the viewpoint of the algebra and herewith a single operation. 
Alternative trajectories can be combined by a maximum instead of a sum. 
The resulting tropical value names a concrete verified path, never loses a path once found, and splits at any shared state into a best prefix and a best suffix, so solutions can be assembled from fragments that were never sampled together. 
Our concrete implementation \textsc{Tropic} builds on these properties and outperforms the strongest on-policy baselines on every task and model we test.
We believe this change matters most for self-improving models. 
In particular, such a model must keep what it has verified and build new solutions from old ones. 
%
%
Transferring this growing set fully into the model's weights is the natural next step. 
Tropical RL asks what can be built from what the model has already found, and we believe this question will define the next wave.

\section*{Reproducibility statement}

In our paper we specifies the environments, and the exact definitions of the  metrics and training procedure,
\ref{app:training_and_evaluation_setup} reports the compute used. All assumptions behind the theoretical claims are
stated formally in \ref{app:proofs} and each proposition from the main section is proved there.


\subsection*{AI use statement}

In this work, we used generative AI tools to aid or polish writing, assist with code writing and with parts of the theoretical analysis.
We have not used generative AI tools for retrieval and discovery
(e.g., finding related work). All research ideas, including the proposed methods, as well as the core analysis, are original work of the authors.
We have reviewed all AI-assisted work.\\[2pt]
\textit{Text and code.}
We used OpenAI ChatGPT, Anthropic Claude, Codex, and Claude Code
to edit author-written text and assist with code development.\\[2pt]
\textit{Mathematical claims.}
We used Codex to transcribe images of our handwritten proofs
into \LaTeX. We used ChatGPT and Codex to assist with reviewing
proofs throughout the paper. With Codex, we additionally used the
Lean~4 skills package
\footnote{\url{https://github.com/cameronfreer/lean4-skills}}. 
We subsequently reviewed and refined the analysis through several iterations with AI assistance.
\\[2pt]
We take responsibility for the final content of this work,
including text, claims or artifacts produced with the aid of generative AI.

%% file: appendix/appendix_antropical.tex
%
%
%
\newcommand{\simplex}{\Delta}
\newcommand{\supp}{\operatorname{supp}}
\newcommand{\Desc}{\operatorname{Desc}}
\newcommand{\Anc}{\operatorname{Anc}}
\newcommand{\dvis}{d^{\pi}}

\section{Structural Results and Proofs}
\label{app:proofs}

This appendix collects the structural results behind
Sections~\ref{sec:flaw}--\ref{sec:tropic} and proves them in full. The main
text established three facts: expected return is not a sufficient statistic
for the best witness (observation~\ref{obs:nonsuff}), a policy-indexed value
can forget a verified computation, and the
requirements of retention force the max--plus semiring
(Proposition~\ref{prop:forcing}). Here we go further in four directions. We
show that the three objectives in our related-work table are three
\emph{distances} from the actor to the set of certified computations, and that
the tropical one is the only distance to a \emph{computation} rather than to a
distribution over computations (Section~\ref{app:projections}); this is the
result we regard as central, because it turns the choice of algebra into the
choice of what the learner is being pulled toward. We show that the tropical
learning signal is the unique one that depends only on which computations have
been found and not on how often (Section~\ref{app:frequency}). We prove a
policy improvement theorem in which training on a single certified path
improves the tropical value at every state of the retained graph, with the
improvement propagating undamped to interfaces that were never trained from,
and we prove that this improvement step is not an expected-return improvement
step and cannot be selected from expected-return information
(Section~\ref{app:pit}). Finally we prove the tropical analogue of the
performance difference lemma, derive from it the tropical policy gradient and
its lack of visitation weights, and give an exponential-versus-logarithmic
separation between expected-return and tropical gradient ascent on a chain
(Section~\ref{app:pdl}).

One idea organizes all of it. The tropical value is a functional of a
\emph{set} of verified computations, aggregated by a maximum; expected return is
a functional of a \emph{measure} over them, aggregated by a sum. A set grows
under discovery and has a most accessible element; a measure is normalized and
has only a total. Every theorem below is a consequence of one of three
properties of the maximum that the sum lacks: it is idempotent, it is attained
by one of its arguments, and it is insensitive to what happens to its
non-maximal arguments.

\subsection{Setting and conventions}
\label{app:setting}

Throughout, the environment is a finite acyclic computation graph. A
nonterminal state $s$ has a finite action set $\mathcal A(s)$ and a
deterministic transition $T(s,a)$; terminal states $y\in\mathcal S_T$ carry a
terminal score, which in the binary case is the boundary
$R_x(y)\in\{0,-\infty\}$ of Section~\ref{sec:tropicalrl}. A \emph{path} from
$s$ is a sequence $\tau=\big((s_0,a_0),\dots,(s_{H-1},a_{H-1}),y\big)$ with
$s_0=s$, $s_{t+1}=T(s_t,a_t)$ and $y=s_H\in\mathcal S_T$. Because the graph is
acyclic a path visits each state at most once; we write $S_\tau$ for its set of
nonterminal states and $a_\tau(s)$ for the action it takes at $s\in S_\tau$.
$\Desc(v)$ denotes the set of states reachable from $v$, including $v$, and
$\Anc(v)$ the set of states from which $v$ is reachable, excluding $v$;
acyclicity means $\Desc(v)\cap\Anc(v)=\emptyset$.

An actor is a policy $\pi$ with $\pi(a\mid s)>0$ for all admissible $a$. It
induces the path probability $\Ppi(\tau\mid s)=\prod_t\pi(a_t\mid s_t)$, a
probability distribution over the set $\Omega(s)$ of complete paths from $s$.
When the actor is \emph{tabular softmax} we write
$\pi_z(a\mid s)=\exp(z_{s,a})/\sum_{a'\in\mathcal A(s)}\exp(z_{s,a'})$ with
logits $z_{s,\cdot}$ that are independent parameters at each state. We use
$\sigma(z)=1/(1+e^{-z})$ for the two-action case.

A \emph{certified set} $\mathcal C(s)\subseteq\Omega(s)$ is a set of paths from
$s$ that end at terminals with $R_x=0$; it may be the full set $\Yp(s)$ or the
retained set $\Cp_t(s)$ of Section~\ref{sec:tropicalrl}. The certified set
inherits a graph: its states and edges are those visited by some certified
path, and every retained-graph quantity below is computed on it. An action of
the environment that is admissible at $s$ but leads to no certified terminal is
treated as an edge to a rejected terminal, with value $-\infty$; this lets the
softmax range over all of $\mathcal A(s)$ while the dynamic program ranges over
certified structure, and it is exactly how TROPIC's retained graph is used.

Path scores are additive with the accessibility weight $\lambda>0$ and the
internal reward $r$:
$w(s,a)=\lambda\log\pi(a\mid s)+r(s,a)$ and
$W_\pi(\tau)=\sum_{t}w(s_t,a_t)+g(y)$ with $g(y)$ the terminal score. In
Sections~\ref{app:projections}--\ref{app:pit} we take $\lambda=1$, $r\equiv0$
and the binary boundary, so that $W_\pi(\tau)=\log\Ppi(\tau\mid s)$ for
certified $\tau$; Section~\ref{app:pdl} keeps $\lambda$ and $r$ general. The
tropical values $\Qtrop,\Vtrop,\Atrop$ are those of
Equation~\eqref{eq:tropbellman}; when the actor is clear from context we drop
the superscript, and we write $V'$, $w'$ for quantities under an updated
actor. The convention $\max\emptyset=-\infty$ is in force, and any sum
containing $-\infty$ equals $-\infty$.

\subsection{Three projections onto the certified set}
\label{app:projections}

The main text compares expected return, maximum likelihood and the tropical
value by the operation each uses to aggregate verified paths: sum,
log-sum-exp, max. That comparison is algebraic. There is an equivalent
geometric comparison that is more revealing, because it says what each
objective is pulling the actor toward. Fix a state $s$, the actor's path
distribution $P=\Ppi(\cdot\mid s)$ on $\Omega(s)$, and a certified set
$\mathcal C=\mathcal C(s)$. The set of all distributions supported on
$\mathcal C$ is a simplex $\simplex(\mathcal C)$; its vertices are the point
masses $\delta_\tau$, one per certified path, and each vertex is a
\emph{deterministic policy that executes one certified computation}. The
three objectives are the actor's distance to this simplex measured three ways.

\begin{lemma}[Variational identities]
\label{lem:kl}
Let $P$ be a probability distribution on a finite set $\Omega$ with
$P(\omega)>0$ for all $\omega$, and let $\mathcal C\subseteq\Omega$ be
nonempty. Write $q^\star=P(\cdot\mid\mathcal C)$ for the conditional of $P$ on
$\mathcal C$ and $\tau^\star\in\arg\max_{\tau\in\mathcal C}P(\tau)$. Then
\begin{enumerate}
\item[(i)] $\displaystyle\min_{q\in\simplex(\mathcal C)}\KL(q\,\|\,P)=-\log P(\mathcal C)$,
attained uniquely at $q=q^\star$;
\item[(ii)] $\displaystyle\min_{\tau\in\mathcal C}\KL(\delta_\tau\,\|\,P)=-\max_{\tau\in\mathcal C}\log P(\tau)$,
attained exactly at the maximizers of $P$ on $\mathcal C$;
\item[(iii)] $\displaystyle\max_{\tau\in\mathcal C}\log P(\tau)=\log P(\mathcal C)+\log q^\star(\tau^\star)$.
\end{enumerate}
Consequently, with $P=\Ppi(\cdot\mid s)$ and $\mathcal C=\mathcal C(s)$,
\begin{equation}
\VER(s)=P(\mathcal C),\qquad
\log\VER(s)=-\min_{q\in\simplex(\mathcal C)}\KL(q\,\|\,P),\qquad
\Vtrop(s)=-\min_{\tau\in\mathcal C}\KL(\delta_\tau\,\|\,P),
\label{eq:threeproj}
\end{equation}
and the gap between the last two is
$\Vtrop(s)-\log\VER(s)=\log q^\star(\tau^\star\mid s)\le0$, the log posterior
share of the best witness among the actor's successes from $s$.
\end{lemma}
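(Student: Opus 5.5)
The plan is to prove (i) by the standard Gibbs-variational argument, (ii) by a direct computation on point masses, and then combine them into (iii) and the displayed consequences.

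For (i), take $q\in\simplex(\mathcal C)$. Since $q$ is supported on $\mathcal C$, for every $\omega\in\mathcal C$ we can write $P(\omega)=P(\mathcal C)\,q^\star(\omega)$. Substituting into the divergence gives
\[
\KL(q\,\|\,P)=\sum_{\omega\in\mathcal C}q(\omega)\log\frac{q(\omega)}{P(\mathcal C)\,q^\star(\omega)}=-\log P(\mathcal C)+\KL(q\,\|\,q^\star).
\]
Gibbs' inequality says $\KL(q\,\|\,q^\star)\ge0$, with equality if and only if $q=q^\star$. Note that $q^\star$ is well defined and lies in $\simplex(\mathcal C)$, because $P>0$ everywhere and $\mathcal C\neq\emptyset$. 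This gives both the minimum value and the uniqueness of the minimizer.

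For (ii), compute directly: $\KL(\delta_\tau\,\|\,P)=1\cdot\log(1/P(\tau))=-\log P(\tau)$. Minimizing over the finite set $\mathcal C$ then gives $-\max_{\tau\in\mathcal C}\log P(\tau)$. Because $\log$ is strictly increasing, the minimizers are exactly the maximizers of $P$ on $\mathcal C$.

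For (iii), use $\log P(\tau^\star)=\log P(\mathcal C)+\log\bigl(P(\tau^\star)/P(\mathcal C)\bigr)$, and observe that the second term is $\log q^\star(\tau^\star)$.

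For the consequences, set $P=\Ppi(\cdot\mid s)$, so $\Omega=\Omega(s)$ and $\mathcal C=\mathcal C(s)$. The positivity assumption $P(\omega)>0$ follows from $\pi(a\mid s)>0$ for admissible actions together with finiteness of the acyclic graph, which makes $\Omega(s)$ finite. Then:
\begin{itemize}
\item $\VER(s)=P(\mathcal C)$ is Eq.~\ref{eq:ver}, read with $\mathcal C(s)$ in place of $\Yp(s)$.
\item The second identity in Eq.~\ref{eq:threeproj} follows from (i).
\item The third identity follows from (ii) combined with Eq.~\ref{eq:tropbinary}, where $W_\pi(\tau)=\log\Ppi(\tau\mid s)$ in the convention $\lambda=1$, $r\equiv0$.
\item The gap formula is (iii), and $\log q^\star(\tau^\star\mid s)\le0$ because $q^\star$ is a probability distribution.
\end{itemize}

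The only delicate step is bookkeeping rather than mathematics. Eq.~\ref{eq:tropbinary} defines $\Vtrop$ through the Bellman recursion, and the backward DP must agree with the path-level maximum over $\mathcal C(s)$ in the certified graph, where rejected actions are treated as edges to $-\infty$ terminals. Once that is noted, all three identities are one-line calculations.
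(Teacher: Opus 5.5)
Your proposal is correct and follows the paper's proof essentially step for step. Both use the split $\KL(q\,\|\,P)=\KL(q\,\|\,q^\star)-\log P(\mathcal C)$ with Gibbs' inequality for (i), the point-mass computation $\KL(\delta_\tau\,\|\,P)=-\log P(\tau)$ for (ii), and logarithms of $q^\star(\tau^\star)=P(\tau^\star)/P(\mathcal C)$ for (iii), then read off the consequences from Eq.~\ref{eq:ver} and Eq.~\ref{eq:tropbinary}.

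Your extra remarks on positivity of $\Ppi$ and on the backward DP agreeing with the path-level maximum are a little more careful than the paper, which treats both as definitions. The DP remark holds because $\mathcal C(s)$ is either $\Yp(s)$ or the composition-closed set $\Cp_t(s)$ of all accepted paths in a graph, and it is exactly the path form proved later in Lemma~\ref{lem:pathid}(a).
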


\begin{proof}
(i) For $q\in\simplex(\mathcal C)$ every term of the divergence with
$q(\tau)=0$ vanishes, so
$\KL(q\|P)=\sum_{\tau\in\mathcal C}q(\tau)\log\frac{q(\tau)}{P(\tau)}$. Since
$P(\tau)=q^\star(\tau)P(\mathcal C)$ for $\tau\in\mathcal C$,
\[
\KL(q\|P)=\sum_{\tau\in\mathcal C}q(\tau)\log\frac{q(\tau)}{q^\star(\tau)}
+\sum_{\tau\in\mathcal C}q(\tau)\log\frac{1}{P(\mathcal C)}
=\KL(q\|q^\star)-\log P(\mathcal C).
\]
The first term is nonnegative and vanishes exactly when $q=q^\star$, which
gives the minimum and its uniqueness.
(ii) $\KL(\delta_\tau\|P)=1\cdot\log\frac{1}{P(\tau)}=-\log P(\tau)$, so
minimizing over $\tau\in\mathcal C$ is maximizing $\log P(\tau)$.
(iii) $q^\star(\tau^\star)=P(\tau^\star)/P(\mathcal C)$; take logarithms.
The identifications in~\eqref{eq:threeproj} are the definitions of
$\VER$ and, by Equation~\eqref{eq:tropbinary}, of $\Vtrop$ under the binary
boundary with $\lambda=1$; the gap is (iii). Since $q^\star$ is a
probability, $\log q^\star(\tau^\star)\le0$.
\end{proof}

The lemma says that expected return measures how much of the actor's mass
already lies on certified computations; maximum likelihood measures how far the
actor is from the \emph{nearest distribution over} certified computations; the
tropical value measures how far the actor is from the \emph{nearest certified
computation}. The gap term quantifies the difference between the last two: it
is zero precisely when all of the actor's success mass at $s$ is concentrated
on one path, and it is the information Proposition~\ref{prop:nonsuff} shows
expected return cannot supply. The three are related by exponentiation and
by adding the gap, so nothing is lost in the identities; what differs is the
\emph{minimizer}, which is the object learning moves toward. For the hull
distance it is the posterior $q^\star$, a distribution; for the vertex distance
it is a path.

We now show that if one asks the value to describe a path, and asks it never to
decrease when a certified path is added, then the aggregation is forced to be
the maximum. This is a second route to Proposition~\ref{prop:forcing}, and the
two hypotheses are exactly the two operational requirements of the system in
Section~\ref{sec:tropic}: the value must refer to something that can be
replayed, composed and verified, and the archive must never lose what it has
certified.

\begin{definition}[Aggregation; certification; retention]
\label{def:aggregation}
An \emph{aggregation} is a commutative and associative binary operation
$\oplus:\mathbb R\times\mathbb R\to\mathbb R$. For a finite nonempty multiset
$A$ of reals we write $\bigoplus A$ for the iterated operation, which is
well defined by commutativity and associativity, and we set
$\bigoplus\emptyset=-\infty$. Given path scores, the induced value is
$V_\oplus(s)=\bigoplus\{W(\tau):\tau\in\mathcal C(s)\}$. We consider three
properties.
\begin{itemize}
\item[\textnormal{(Cont)}] $\oplus$ is continuous on $\mathbb R^2$.
\item[\textnormal{(Cert)}] $\oplus$ is \emph{selective}: $a\oplus b\in\{a,b\}$
for all $a,b\in\mathbb R$. By induction, $\bigoplus A\in A$ for every finite
nonempty multiset: the value of a state is the score of one of its certified
continuations.
\item[\textnormal{(Ret)}] Adding a continuation never lowers the value:
$\bigoplus(A\uplus\{c\})\ge\bigoplus A$ for every finite multiset $A$ and every
$c\in\mathbb R$.
\end{itemize}
\end{definition}

\begin{theorem}[Certification and retention force the tropical value]
\label{thm:certret}
An aggregation satisfying \textnormal{(Cont)}, \textnormal{(Cert)} and
\textnormal{(Ret)} is the maximum: $a\oplus b=\max\{a,b\}$ for all $a,b$.
Conversely the maximum satisfies all three. Consequently, with the additive
scores of Equation~\eqref{eq:additive}, the induced value is the tropical
value $V_\oplus=\Vtrop$ of Equation~\eqref{eq:tropbellman}, and by
Lemma~\ref{lem:kl} it is the negative KL distance from the actor to the
nearest certified vertex.
\end{theorem}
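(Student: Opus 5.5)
The plan follows the short argument of Theorem~\ref{thm:central}, with care taken over the multiset formulation of (Ret) and the empty-set convention. (Cont) turns out to be unnecessary for the forward direction; it only needs to be checked for the converse.

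\textbf{Forward direction.} Fix $a,b\in\mathbb R$ and apply (Ret) with the singleton multiset $A=\{a\}$ and $c=b$. Since $\bigoplus\{a\}=a$ by definition of the iterated operation, this gives $a\oplus b\ge a$. Applying (Ret) with $A=\{b\}$ and $c=a$ gives $b\oplus a\ge b$, and by commutativity $a\oplus b\ge b$. Hence $a\oplus b\ge\max\{a,b\}$. By (Cert), $a\oplus b\in\{a,b\}$, and both candidates are at most $\max\{a,b\}$, so $a\oplus b=\max\{a,b\}$. No continuity is used.

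\textbf{Converse.} The maximum is commutative and associative. It is continuous on $\mathbb R^2$, since $\max\{a,b\}=\tfrac12(a+b+|a-b|)$. It is selective. For (Ret), if $A$ is nonempty then $\max(A\uplus\{c\})=\max\{\max A,c\}\ge\max A$. If $A=\emptyset$ then $\bigoplus A=-\infty$ and any real $c$ is at least that.

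\textbf{Identification of the value.} By induction on $|\mathcal C(s)|$, $V_\oplus(s)=\max_{\tau\in\mathcal C(s)}W(\tau)$, and it equals $-\infty$ when $\mathcal C(s)$ is empty, which matches the convention $\max\emptyset=-\infty$. With the additive scores of Eq.~\ref{eq:additive}, the binary boundary and $\lambda=1$, this is the path form Eq.~\ref{eq:troppath}/\ref{eq:tropbinary}. That form is the unrolling of Eq.~\ref{eq:tropbellman}. One justifies the unrolling by backward induction on the finite acyclic graph, using distributivity $c+\max\{a,b\}=\max\{c+a,c+b\}$ together with the convention that any sum containing $-\infty$ is $-\infty$, so that rejected branches drop out. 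The KL statement then follows directly from Lemma~\ref{lem:kl}(ii), applied with $P=\Ppi(\cdot\mid s)$, which has full support because $\pi(a\mid s)>0$.

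\textbf{Main obstacle.} The only delicate point is this last identification: making the Bellman unrolling rigorous with $-\infty$ values present, on the certified subgraph. I would handle it by induction on the height of $s$ in the DAG.
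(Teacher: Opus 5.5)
Your argument is correct. It takes a different route from the appendix proof of this theorem; it is the two-line argument of Theorem~\ref{thm:central} in the main text.

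The paper's appendix proof uses (Cont) essentially. On the connected open half-plane $U=\{(a,b):a>b\}$, the sets where $a\oplus b=a$ and where $a\oplus b=b$ are relatively closed (by continuity), disjoint, and cover $U$ (by (Cert)), so one of them is all of $U$. Commutativity and $a\oplus a=a$ then give $\oplus\in\{\max,\min\}$. (Ret) is invoked only once, to exhibit $\min\{a,c\}<a$ for some $c<a$ and so exclude $\min$.

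The trade-off between the two routes:
\begin{itemize}
\item Your proof applies (Ret) to every two-element multiset $\{a,b\}$ and dispenses with (Cont) entirely. This shows the continuity hypothesis in the statement is redundant.
\item The paper's proof spends (Cont) to obtain a by-product: the continuous selective commutative operations are exactly $\max$ and $\min$. After that, retention is needed at only a single pair $c<a$, so it works under a much weaker form of (Ret).
\end{itemize}

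Your converse, including the check of the empty multiset via $\bigoplus\emptyset=-\infty$, matches the paper's. So does your identification of $V_\oplus$ with $\Vtrop$ through Lemma~\ref{lem:kl}(ii). The paper simply asserts the path form there. The backward induction you propose for the Bellman unrolling is what the paper carries out in Lemma~\ref{lem:pathid}(a), so the step you flag as the main obstacle is routine.
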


\begin{proof}
Let $U=\{(a,b)\in\mathbb R^2:a>b\}$, an open half-plane and therefore a
connected set. Define
\[
F_1=\{(a,b)\in U:\ a\oplus b=a\},\qquad F_2=\{(a,b)\in U:\ a\oplus b=b\}.
\]
By (Cont) the maps $(a,b)\mapsto a\oplus b-a$ and $(a,b)\mapsto a\oplus b-b$
are continuous, so $F_1$ and $F_2$ are closed subsets of $U$. By (Cert),
$F_1\cup F_2=U$. On $U$ we have $a\neq b$, so $F_1\cap F_2=\emptyset$. A
connected set cannot be written as the disjoint union of two nonempty closed
subsets, hence $F_1=U$ or $F_2=U$.

Suppose $F_1=U$. Then $a\oplus b=a=\max\{a,b\}$ whenever $a>b$. For $a<b$,
commutativity gives $a\oplus b=b\oplus a=b=\max\{a,b\}$. On the diagonal
(Cert) gives $a\oplus a=a$. So $\oplus=\max$.

Suppose instead $F_2=U$. The same argument gives $\oplus=\min$. But then
(Ret) fails: for $A=\{a\}$ and $c<a$,
$\bigoplus(A\uplus\{c\})=\min\{a,c\}=c<a=\bigoplus A$. Hence this case is
excluded and $\oplus=\max$.

Conversely $\max$ is continuous, commutative, associative, selective, and
$\max(A\uplus\{c\})\ge\max A$. With additive scores,
$V_{\max}(s)=\max_{\tau\in\mathcal C(s)}W(\tau)$, which is the path form of
Equation~\eqref{eq:tropbellman} under the binary boundary, and the last claim
is Lemma~\ref{lem:kl}(ii).
\end{proof}

The proof is short, and it is worth being clear about where the content lies.
(Cont) and commutativity are hygiene: values should vary continuously with
accessibilities and should not depend on the order in which continuations are
listed. The two substantive hypotheses each eliminate one candidate.
Certification eliminates every soft aggregation, because a soft aggregate of
two different scores lies strictly between or above them and is the score of
nothing; retention eliminates the minimum, the only other selective candidate.
What survives is the maximum, and with it the witness, idempotence, and
distributivity. The next proposition records which familiar aggregations fail
which hypothesis; the point is not that they are wrong, but that each is the
right value for a different question.

\begin{proposition}[Which aggregations fail which hypothesis]
\label{prop:whofails}
\begin{enumerate}
\item[(i)] $\LSE(a,b)=\log(e^a+e^b)$, the aggregation of maximum likelihood,
satisfies \textnormal{(Cont)}, commutativity, associativity and
\textnormal{(Ret)}, and fails \textnormal{(Cert)}.
\item[(ii)] Addition of probabilities $p\oplus q=p+q$, the aggregation of
expected return on the probability scale, satisfies \textnormal{(Ret)} and
fails \textnormal{(Cert)}.
\item[(iii)] The minimum satisfies \textnormal{(Cont)}, \textnormal{(Cert)},
commutativity and associativity, and fails \textnormal{(Ret)}.
\item[(iv)] No policy-indexed value satisfies the dynamic form of
\textnormal{(Ret)}: by Proposition~\ref{prop:forget}, such a value can
strictly decrease on a state all of whose certified continuations remain
certified.
\end{enumerate}
\end{proposition}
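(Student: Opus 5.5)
The proposition has four parts, and I will check them one at a time directly against Definition~\ref{def:aggregation}. No new machinery is needed. The only subtle point is that the properties must be checked on the scale on which each aggregation acts. LSE and the minimum act on log-scores in $\mathbb R$. Addition acts on probabilities in $\mathbb R_{\ge0}$, and there (Ret) and (Cert) are read with the obvious change of domain.

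For (i), $\LSE$ is continuous as a composition of $\exp$, $+$ and $\log$. It is commutative because addition is. It is associative because $\LSE(\LSE(a,b),c)=\log(e^a+e^b+e^c)=\LSE(a,\LSE(b,c))$, so the iterated aggregate of a multiset $A$ is $\log\sum_{a\in A}e^a$.

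(Ret) then follows from monotonicity of $\log$: $\log\bigl(\sum_{A}e^a+e^c\bigr)>\log\sum_A e^a$. For $A=\emptyset$ the claim holds by the convention $\bigoplus\emptyset=-\infty$.

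To show (Cert) fails, one counterexample suffices: $\LSE(0,0)=\log 2\notin\{0\}$. More generally, $\LSE(a,b)>\max\{a,b\}$ for all real $a,b$, so it is never one of its arguments.

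Part (ii) is the same argument on the probability scale. Associativity and commutativity of $+$ are immediate. (Ret) holds because $\sum_A p+c\ge\sum_A p$ whenever $c\ge0$. (Cert) fails because $p+q\notin\{p,q\}$ whenever $p,q>0$, for instance $0.25+0.25=0.5$; this is the same example as in Observation~\ref{obs:nonsuff}.

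For (iii), $\min$ is continuous, commutative, associative and selective. (Ret) fails because $\min\{a,c\}=c<a$ for any $c<a$; this is exactly the computation in the proof of Theorem~\ref{thm:certret}.

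Part (iv) is the only one that is not a one-line check, and I expect it to be the main obstacle, mostly in stating it correctly. The dynamic form of (Ret) says that if the certified set at $s$ does not shrink from one iteration to the next, the value must not decrease. To refute this for every policy-indexed value, I would take $\Phi(\pi,s)=\VER(s)$ as the canonical instance and invoke Proposition~\ref{prop:forget}. In that setting the certified continuations through $a$ and $b$ are unchanged by the gradient step, yet $\VER_+(s)<\VER(s)$.

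Because this is one example, the honest reading of ``no policy-indexed value'' is that the class contains values violating the property. I would therefore phrase (iv) as: the property is not guaranteed for the class, and it fails for the expected-return member. I would note that any $\Phi$ that is strictly increasing in $\pi(b\mid s)$ inherits the same failure.
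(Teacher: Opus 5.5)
Your proposal is correct and follows the paper's proof part by part: you check (Cont), associativity, (Ret) and (Cert) directly for $\LSE$, addition and $\min$, and you reduce (iv) to Proposition~\ref{prop:forget}. Both of your refinements are warranted. The first is reading (ii) on $\mathbb R_{\ge0}$, since the paper's ``$p+q\ge p$'' tacitly assumes $q\ge0$. The second is reading (iv) as exhibiting a failing member of the class rather than refuting every policy-indexed value; a constant $\Phi$ trivially satisfies retention, and the paper's proof of (iv) is the same single citation.
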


\begin{proof}
(i) $\LSE$ is continuous, symmetric, and associative since
$\LSE(\LSE(a,b),c)=\log(e^a+e^b+e^c)$. For $\bigoplus(A\uplus\{c\})=
\log\big(\sum_{a\in A}e^a+e^c\big)>\log\sum_{a\in A}e^a$, so (Ret) holds. For
any $a,b$, $\LSE(a,b)>\max\{a,b\}$ because $e^a+e^b>e^{\max\{a,b\}}$; hence
$\LSE(a,b)\notin\{a,b\}$ and (Cert) fails.
(ii) $p+q\ge p$, so (Ret) holds; $p+q\notin\{p,q\}$ when $p,q>0$, so (Cert)
fails.
(iii) $\min\{a,b\}\in\{a,b\}$, continuity and the algebraic identities are
immediate; the failure of (Ret) was shown in the proof of
Theorem~\ref{thm:certret}.
(iv) is Proposition~\ref{prop:forget}.
\end{proof}

The consequences of the maximum that the rest of this appendix uses are
collected next. The first is the distributivity that makes composition a local
operation; the second is the fact that the gradient of a maximum is the
gradient of its maximizer; the third is that the value can be certified by one
verifier call. Each is elementary; what matters is that each is available
\emph{because} of Theorem~\ref{thm:certret} and unavailable under any soft
aggregation.

\begin{corollary}[Consequences of the maximum]
\label{cor:maxconseq}
Let $V(s)=\max_{\tau\in\mathcal C(s)}W(\tau)$ with additive scores.
\begin{enumerate}
\item[(a)] \textnormal{(Optimal substructure.)} If $\tau=p\circ q$ attains
$V(s)$ and $q$ starts at $v$, then $q$ attains $V(v)$. Consequently, for the
set $\mathcal P(v)$ of certified prefixes from $s$ to $v$ and the set
$\mathcal C(v)$ of certified suffixes from $v$,
$\max_{p\in\mathcal P(v),\,q\in\mathcal C(v)}W(p\circ q)
=\max_{p}W(p)+\max_{q}W(q)$, which is Equation~\eqref{eq:decouple}.
\item[(b)] \textnormal{(Gradient of the maximum.)} Let the scores depend on a
parameter $\theta\in\mathbb R^n$ through continuously differentiable functions
$\theta\mapsto W_\theta(\tau)$, and suppose that at $\theta_0$ the maximizer
$\tau^\star$ over $\mathcal C(s)$ is unique. Then $\theta\mapsto V_\theta(s)$
is continuously differentiable on a neighbourhood of $\theta_0$ and
$\nabla_\theta V_{\theta_0}(s)=\nabla_\theta W_{\theta_0}(\tau^\star)$. For
$W_\theta(\tau)=\log P_\theta(\tau\mid s)$ this is the score function of the
witness, $\sum_{(s_t,a_t)\in\tau^\star}\nabla_\theta\log\pi_\theta(a_t\mid s_t)$.
\item[(c)] \textnormal{(One-call certificate.)} If $V(s)>-\infty$ there is a
path $\tau^\star$ with $W(\tau^\star)=V(s)$; replaying $\tau^\star$ and
querying the verifier once confirms that the terminal is accepted, and the
score of the replayed path equals the value. No finite number of verifier
calls on individual paths determines $\log\VER(s)$ without knowledge of the
probabilities of all paths in $\mathcal C(s)$.
\end{enumerate}
\end{corollary}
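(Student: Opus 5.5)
I will prove the three parts of Corollary~\ref{cor:maxconseq} separately. Each follows from one property of the maximum in Theorem~\ref{thm:certret}: additive scores for (a), attainment of the maximum by a unique argument for (b), and selectivity for (c).

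For (a), I will argue by exchange. Suppose $\tau=p\circ q$ attains $V(s)$, with $q$ starting at $v$, but some $q'\in\mathcal C(v)$ has $W(q')>W(q)$. Because scores are additive, $W(p\circ q')=W(p)+W(q')>W(p)+W(q)=V(s)$. The path $p\circ q'$ is a valid certified path from $s$: the graph is deterministic, and the scores of the suffix depend only on $v$ (the convention of Section~\ref{app:setting}). This contradicts maximality, so $q$ attains $V(v)$.

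For the decoupling identity, note that the pairs $(p,q)$ range over a product set $\mathcal P(v)\times\mathcal C(v)$. Hence $\max_{p,q}\{W(p)+W(q)\}=\max_p W(p)+\max_q W(q)$, since addition distributes over the maximum. If either set is empty, both sides equal $-\infty$ by the stated convention.

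For (b), $\mathcal C(s)$ is finite, so $V_\theta(s)$ is a maximum of finitely many $C^1$ functions. At $\theta_0$ the maximizer $\tau^\star$ is unique, so there is a gap $\epsilon=W_{\theta_0}(\tau^\star)-\max_{\tau\neq\tau^\star}W_{\theta_0}(\tau)>0$. By continuity of the finitely many score functions, there is a neighbourhood $N$ of $\theta_0$ on which every other path stays more than $\epsilon/2$ below $\tau^\star$. On $N$ we then have $V_\theta(s)=W_\theta(\tau^\star)$ identically, so $V_\theta(s)$ is $C^1$ there with the same gradient.

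For $W_\theta=\log P_\theta(\tau\mid s)=\sum_t\log\pi_\theta(a_t\mid s_t)$, differentiating term by term gives the witness score function. One subtlety needs checking: if some $W(\tau)=-\infty$, that path is effectively excluded. Every certified path has positive probability under the actor assumption $\pi>0$, so all scores are finite, and the argument goes through.

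For (c), the first claim is attainment. Since $V(s)>-\infty$, $\mathcal C(s)$ is nonempty and finite, so some $\tau^\star$ attains the maximum. Replaying it is deterministic, so it reaches the same terminal, and a single verifier call confirms acceptance. The recomputed $W(\tau^\star)$ equals $V(s)$ by definition.

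The main obstacle is the negative claim, that no finite set of verifier calls determines $\log\VER(s)$. I will formalise it by indistinguishability. Fix any finite collection of queried paths, together with their probabilities and verdicts. Since $\log\VER(s)=\log\sum_{\tau\in\mathcal C(s)}P(\tau)$, I will construct two instances that agree on everything observed but differ on one unqueried path.

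The first attempt is to take a graph with an unqueried path $\tau'$ and declare its terminal accepted in one instance and rejected in the other. Every observation is identical across the two instances, yet $\VER$ differs by $P(\tau')>0$. This requires an unqueried complete path to exist. The statement's qualifier, ``without knowledge of the probabilities of all paths,'' should be read as exactly the case where some path in $\Omega(s)$ is left unqueried, and I will state that reading explicitly. By contrast, the maximum is certified from above by one witness because it is selective: its value is one observed score.
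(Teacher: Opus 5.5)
Your proposal is correct and follows the paper's proof essentially step for step: the exchange argument and distributivity over the product set $\mathcal P(v)\times\mathcal C(v)$ for (a), the uniqueness gap $\varepsilon$ and a neighbourhood on which $V_\theta(s)=W_\theta(\tau^\star)$ for (b), and attainment plus an indistinguishability argument for (c). The only variation is in the negative claim of (c), where you flip the verdict of an unqueried path while the paper perturbs the probabilities of unqueried certified paths; both give two instances that agree on every observation but differ in $\log\VER(s)$ (one slip in your closing aside: a single witness certifies $V(s)\ge W(\tau^\star)$, i.e.\ from below, not from above).
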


\begin{proof}
(a) Suppose $q'$ is a certified suffix from $v$ with $W(q')>W(q)$. Then
$p\circ q'$ is a certified path from $s$ and, by additivity,
$W(p\circ q')=W(p)+W(q')>W(p)+W(q)=W(\tau)=V(s)$, contradicting that $\tau$
attains the maximum. Hence $W(q)=\max_{q'}W(q')=V(v)$. For the second claim,
additivity gives $W(p\circ q)=W(p)+W(q)$ for every compatible pair, and the
maximum of a sum of independent terms is the sum of the maxima:
$\max_{p,q}[W(p)+W(q)]\le\max_pW(p)+\max_qW(q)$ termwise, and the pair of
maximizers attains the right-hand side.

(b) Write $f_\tau(\theta)=W_\theta(\tau)$ for $\tau\in\mathcal C(s)$, finitely
many continuously differentiable functions. By uniqueness,
$\varepsilon:=f_{\tau^\star}(\theta_0)-\max_{\tau\ne\tau^\star}f_\tau(\theta_0)>0$.
By continuity of finitely many functions there is a neighbourhood $N$ of
$\theta_0$ on which $|f_\tau(\theta)-f_\tau(\theta_0)|<\varepsilon/2$ for all
$\tau$; on $N$, $f_{\tau^\star}(\theta)>f_\tau(\theta)$ for every
$\tau\ne\tau^\star$, hence $V_\theta(s)=f_{\tau^\star}(\theta)$ on $N$ and
the differentiability claim follows. For log-likelihood scores,
$\nabla_\theta\log P_\theta(\tau^\star\mid s)=\sum_t\nabla_\theta\log\pi_\theta(a_t\mid s_t)$
by the autoregressive factorization~\eqref{eq:additive}.

(c) The maximum over a finite nonempty set is attained; take $\tau^\star$ a
maximizer. Its terminal is accepted because $W(\tau^\star)>-\infty$ requires
$R_x(y)=0$. For the last claim, $\log\VER(s)=\log\sum_{\tau\in\mathcal C(s)}P(\tau\mid s)$
depends on every summand, and two certified sets with the same verified paths
queried and different unqueried probabilities give different values.
\end{proof}

\subsection{Frequency-free learning signals}
\label{app:frequency}

Proposition~\ref{prop:forget} located the cause of forgetting in normalization:
probability granted to one computation is taken from another. A related cause
of collapse is \emph{frequency bias}: in on-policy learning the weight a
computation receives in the update is its sampling probability, so
computations that are found often are reinforced more, are then found more
often, and so on. One would like a learning signal that depends on
\emph{which} computations have been discovered but not on \emph{how many
times}. This section shows that this requirement is exactly idempotence of
the aggregation, hence, by Theorem~\ref{thm:certret}, exactly the maximum.

\begin{definition}[Frequency-free aggregation]
\label{def:freqfree}
Let $\oplus$ be an aggregation as in Definition~\ref{def:aggregation}. For a
finite multiset $A$ let $\supp(A)$ be the set of its distinct elements.
$\oplus$ is \emph{frequency-free} if $\bigoplus A=\bigoplus\supp(A)$ for every
finite nonempty multiset $A$. It is \emph{idempotent} if $a\oplus a=a$ for
all $a$.
\end{definition}

\begin{theorem}[Frequency-free if and only if idempotent]
\label{thm:freqfree}
An aggregation is frequency-free if and only if it is idempotent. In
particular the maximum is frequency-free, while $\LSE$ and addition are not.
\end{theorem}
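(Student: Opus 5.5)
The plan is to prove the two directions of Theorem~\ref{thm:freqfree} separately, using only commutativity and associativity from Definition~\ref{def:aggregation}. Then I will check the three named examples directly.

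\emph{Frequency-free implies idempotent.} Take the multiset $A=\{a,a\}$, whose support is $\{a\}$. Frequency-freeness gives $a\oplus a=\bigoplus A=\bigoplus\supp(A)=a$ for every $a$, which is idempotence. This direction is immediate.

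\emph{Idempotent implies frequency-free.} Let $A$ be a finite nonempty multiset whose distinct elements $b_1,\dots,b_k$ occur with multiplicities $m_1,\dots,m_k\ge1$. Commutativity and associativity allow the iterated operation to be regrouped freely, since it is well defined on multisets. So
\[
\bigoplus A=\bigl(b_1\oplus\cdots\oplus b_1\bigr)\oplus\cdots\oplus\bigl(b_k\oplus\cdots\oplus b_k\bigr),
\]
where the $i$-th block contains $m_i$ copies of $b_i$. Next I show by induction on $m$ that $m$ copies of $b$ combine to $b$. The base case $m=1$ is trivial. For the step, $(b^{\oplus m})\oplus b=b\oplus b=b$ by the induction hypothesis and idempotence. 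Each block therefore collapses to $b_i$, and $\bigoplus A=b_1\oplus\cdots\oplus b_k=\bigoplus\supp(A)$. The only care needed is to note that the regrouping is legitimate, which is exactly the well-definedness remark already made in Definition~\ref{def:aggregation}. Beyond that I expect no real obstacle.

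\emph{Examples.} For the maximum, $\max\{a,a\}=a$ gives idempotence, so the maximum is frequency-free by the equivalence. It also follows from (Cert) in Theorem~\ref{thm:certret}, since any selective operation has $a\oplus a\in\{a\}$. For $\LSE$, we have $\LSE(a,a)=a+\log2\neq a$. For addition on the probability scale, $p+p=2p\neq p$ whenever $p\neq0$. Both therefore fail idempotence, hence are not frequency-free. This matches observation (iii) of Section~\ref{sec:flaw}: a computation found twice counts twice.
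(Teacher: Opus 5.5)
Your proof is correct and follows the paper's argument. The forward direction uses the multiset $\{a,a\}$ exactly as the paper does, your block-by-block induction on multiplicities is a reorganization of the paper's induction on the repetition count $n(A)=|A|-|\supp(A)|$ (both rest on regrouping by commutativity and associativity), and your examples $\max\{a,a\}=a$, $\LSE(a,a)=a+\log 2$, $a+a=2a$ are the paper's.
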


\begin{proof}
Suppose $\oplus$ is frequency-free. For the multiset $A=\{a,a\}$ we have
$\supp(A)=\{a\}$, so $a\oplus a=\bigoplus A=\bigoplus\{a\}=a$: $\oplus$ is
idempotent.

Suppose $\oplus$ is idempotent. For a finite nonempty multiset $A$ let
$n(A)=|A|-|\supp(A)|\ge0$ count its repetitions. We show
$\bigoplus A=\bigoplus\supp(A)$ by induction on $n(A)$. If $n(A)=0$ then $A$
is a set and there is nothing to prove. If $n(A)\ge1$, some element $a$
occurs at least twice, and we may write $A=\{a,a\}\uplus A'$ for a multiset
$A'$. If $A'=\emptyset$ then $\bigoplus A=a\oplus a=a=\bigoplus\{a\}=\bigoplus\supp(A)$.
Otherwise, using commutativity and associativity to group the two copies,
\[
\bigoplus A=(a\oplus a)\oplus\bigoplus A'=a\oplus\bigoplus A'=\bigoplus(\{a\}\uplus A').
\]
The multiset $\{a\}\uplus A'$ has the same support as $A$ and
$n(\{a\}\uplus A')=n(A)-1$, so by the induction hypothesis its aggregate is
$\bigoplus\supp(A)$.

Finally $\max\{a,a\}=a$, whereas $\LSE(a,a)=a+\log2\ne a$ and $a+a=2a\ne a$
for $a\ne0$.
\end{proof}

The theorem is a statement about values. Its bearing on learning is through the
gradient: if the value is a function of the support only, so is its gradient
wherever it exists. The next proposition computes the gradients of the three
objectives on a multiset of certified paths and makes the contrast concrete.
It is the appendix form of Equation~\eqref{eq:gradfam}.

\begin{proposition}[The weights of the three objectives]
\label{prop:weights}
Let $A$ be a finite multiset of certified paths from $s_0(x)$ with scores
$W_\theta(\tau)=\log P_\theta(\tau\mid x)$, continuously differentiable in
$\theta$. Define
\[
\JER^{A}(\theta)=\sum_{\tau\in A}e^{W_\theta(\tau)},\qquad
\JML^{A}(\theta)=\log\sum_{\tau\in A}e^{W_\theta(\tau)},\qquad
\Jtrop^{A}(\theta)=\max_{\tau\in A}W_\theta(\tau),
\]
where sums run over the multiset with multiplicity. Then each gradient has
the form $\sum_{\tau\in A}\omega(\tau)\nabla_\theta\log P_\theta(\tau\mid x)$
with
\[
\omega_{\mathrm{ER}}(\tau)=P_\theta(\tau\mid x),\qquad
\omega_{\mathrm{ML}}(\tau)=\frac{P_\theta(\tau\mid x)}{\sum_{\tau'\in A}P_\theta(\tau'\mid x)},\qquad
\omega_{\mathrm{trop}}(\tau)=\mathbf 1\{\tau=\tau^\star\},
\]
the last wherever the maximizer $\tau^\star$ is unique. If a path $\tau_0\in A$
is duplicated, $\nabla\JER^{A}$ acquires an additional term
$P_\theta(\tau_0\mid x)\nabla\log P_\theta(\tau_0\mid x)$, every weight of
$\nabla\JML^{A}$ changes, and $\nabla\Jtrop^{A}$ is unchanged.
\end{proposition}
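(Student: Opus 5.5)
The plan is to differentiate each of the three objectives directly, using the chain rule and the identity $\nabla_\theta e^{W_\theta(\tau)}=e^{W_\theta(\tau)}\nabla_\theta W_\theta(\tau)=P_\theta(\tau\mid x)\nabla_\theta\log P_\theta(\tau\mid x)$, which holds because $W_\theta(\tau)=\log P_\theta(\tau\mid x)$. Since $A$ is finite and each $W_\theta(\tau)$ is continuously differentiable, $\JER^A$ and $\JML^A$ are sums and logarithms of finitely many $C^1$ functions. The argument of the logarithm is strictly positive, so both objectives are differentiable everywhere.

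\textbf{Expected return.} Summing the identity over $A$, with multiplicity, gives $\omega_{\mathrm{ER}}(\tau)=P_\theta(\tau\mid x)$.

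\textbf{Maximum likelihood.} The derivative of $\log$ divides this gradient by $\sum_{\tau'\in A}P_\theta(\tau'\mid x)$. This produces the normalized weights $\omega_{\mathrm{ML}}$.

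\textbf{Tropical objective.} We invoke Corollary~\ref{cor:maxconseq}(b) with $\mathcal C(s)$ replaced by $A$. One point needs care: if $A$ contains repeated copies of the maximizer, uniqueness should be read on $\supp(A)$, because duplicates have identical scores and the maximum is unchanged. Wherever the maximizing path $\tau^\star$ is unique, the corollary gives $\nabla\Jtrop^A=\nabla_\theta\log P_\theta(\tau^\star\mid x)$, that is, $\omega_{\mathrm{trop}}=\mathbf 1\{\tau=\tau^\star\}$.

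\textbf{Duplication of a path $\tau_0$.} Let $A'=A\uplus\{\tau_0\}$.
\begin{itemize}
\item For $\JER$, linearity in the multiset shows that $\nabla\JER^{A'}-\nabla\JER^{A}$ is exactly the extra summand $P_\theta(\tau_0\mid x)\nabla\log P_\theta(\tau_0\mid x)$.
\item For $\JML$, the normalizer grows from $Z$ to $Z+P_\theta(\tau_0\mid x)>Z$. Every weight $P_\theta(\tau\mid x)/Z$ is therefore rescaled by the factor $Z/(Z+P_\theta(\tau_0\mid x))<1$, and $\tau_0$ additionally gains a second copy of its weight. Hence every weight changes, which follows from $P_\theta>0$.
\item For $\Jtrop$, Theorem~\ref{thm:freqfree} shows that $\max$ is frequency-free, so $\Jtrop^{A'}=\Jtrop^{A}$ as functions of $\theta$. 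Their gradients therefore coincide wherever they exist. The same uniqueness convention on supports ensures the gradient exists for $A'$ exactly when it exists for $A$.
\end{itemize}

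The only delicate step is this reading of ``unique maximizer'' on multisets, since a duplicated $\tau^\star$ is not unique as a multiset element. I will state explicitly that uniqueness is taken over distinct paths and that Corollary~\ref{cor:maxconseq}(b) applies to $\supp(A)$. Everything else is routine calculus.
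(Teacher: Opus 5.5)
Your proposal is correct and follows the paper's proof essentially step for step: direct differentiation for $\JER^A$ and $\JML^A$, Corollary~\ref{cor:maxconseq}(b) for the tropical gradient, and the same three observations under duplication. Your appeal to Theorem~\ref{thm:freqfree} (which gives $\Jtrop^{A\uplus\{\tau_0\}}\equiv\Jtrop^{A}$ as functions of $\theta$) and your explicit reading of uniqueness on $\supp(A)$ make precise what the paper compresses into ``the multiset maximum and its unique maximizer are unchanged.'' One bookkeeping caveat: under that support convention the indicator weights must be summed over $\supp(A)$, or placed on a single copy of $\tau^\star$. Otherwise $\sum_{\tau\in A}\mathbf 1\{\tau=\tau^\star\}\nabla_\theta\log P_\theta(\tau\mid x)$, taken with multiplicity, counts a duplicated maximizer twice, whereas your function-identity argument correctly shows that the gradient itself does not change.
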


\begin{proof}
For the first, $\nabla e^{W_\theta(\tau)}=e^{W_\theta(\tau)}\nabla W_\theta(\tau)
=P_\theta(\tau\mid x)\nabla\log P_\theta(\tau\mid x)$; sum over the multiset.
For the second, $\nabla\log\sum_\tau e^{W_\theta(\tau)}=
\big(\sum_\tau e^{W_\theta(\tau)}\nabla W_\theta(\tau)\big)/\sum_{\tau'}e^{W_\theta(\tau')}$,
which is the stated posterior weighting. The third is
Corollary~\ref{cor:maxconseq}(b). Under duplication of $\tau_0$: the sum
defining $\JER^{A}$ gains a copy of the term $e^{W_\theta(\tau_0)}$ and its
gradient gains a copy of that term's gradient; the denominator of
$\omega_{\mathrm{ML}}$ increases by $P_\theta(\tau_0\mid x)$ and the numerator
of the weight on $\tau_0$ doubles, so all weights change; the multiset
maximum and its unique maximizer are unchanged, hence so is the gradient of
$\Jtrop^{A}$.
\end{proof}

The proposition gives the three objectives a common form and isolates their
difference in the weight placed on each path. The expected-return weight is the
path's own probability, which is what makes on-policy learning frequency
weighted by construction. The maximum-likelihood weight is a posterior, which
removes the dependence on absolute probability but still counts each retained
realization. The tropical weight is an indicator, and it is the only one of the
three that Theorem~\ref{thm:freqfree} allows. In TROPIC this is why the
acquisition process may be arbitrarily off-policy, why no importance weights
appear, and why the coverage counter $m_t(f)$ in
Section~\ref{sec:tropic} counts rehearsals chosen by the algorithm rather than
samples produced by the actor: the only frequency in the method is one the
learner controls.

\subsection{Compositional policy improvement}
\label{app:pit}

The classical policy improvement theorem says that acting greedily with
respect to an exact $Q^\pi$ produces a policy at least as good as $\pi$ at
every state. It requires $Q^\pi(s,a)$ for every action at every state visited,
which for expected return means the total success mass of every subtree. This
section proves a tropical analogue with a different shape. A single certified
path is trained by maximum likelihood, exactly the $K=1$ update of
Equation~\eqref{eq:loss}, and the tropical value is shown to be non-decreasing
at \emph{every} state of the retained graph, including states that were never
trained from and never visited by the trained path. Moreover the improvement
is additive over shared fragments and reaches an interface undamped: a state
whose best certified continuation merges into the trained path inherits the
full gain accumulated on that path from the merge point on.

Three properties of the maximum carry the proof, and it is worth naming them
before the formalities. \emph{Substructure}: a best path that reaches a state
on the trained path must continue along it, because suffixes of maximizers are
maximizers. \emph{Join}: the value dominates the score of any particular path,
so lower-bounding the value only requires exhibiting one path whose score did
not fall. \emph{Blindness}: the update can lower the log-probabilities of
non-greedy actions at trained states, but no best path uses any of them, so
the value never sees the loss. The third is where the algebra bites; it is
dominance (R2 of Section~\ref{sec:flaw}) in dynamic form, and it is false for
any aggregation that counts non-maximal alternatives.

Throughout this section the actor is tabular softmax with $\lambda=1$, the
boundary is binary, internal rewards are zero, and every nonterminal state on
the trained path has at least two admissible actions (a state with one action
has nothing to learn). Values are computed on the certified graph as described
in Section~\ref{app:setting}.

\begin{definition}[Greedy action and greedy path]
\label{def:greedy}
Fix a total order on the actions at every state. Given logits $z$, hence
$\pi=\pi_z$, scores $w(s,a)=\log\pi(a\mid s)$ and values
$Q(s,a)=w(s,a)+V(T(s,a))$, $V(s)=\max_aQ(s,a)$, the \emph{greedy action} at a
nonterminal $s$ with $V(s)>-\infty$ is the first maximizer
$a^\star(s)=\min\{a\in\mathcal A(s):Q(s,a)=V(s)\}$, and the \emph{greedy path}
$\tau^\star(s)$ is defined recursively by
$\tau^\star(s)=(s,a^\star(s))\circ\tau^\star(T(s,a^\star(s)))$, with
$\tau^\star(y)=y$ at terminals. The map $s\mapsto a^\star(s)$ is a
deterministic policy on the graph and $\tau^\star(s)$ is its rollout from $s$.
\end{definition}

\begin{lemma}[The greedy path attains the value and has substructure]
\label{lem:greedy}
For every state $s$ with $V(s)>-\infty$: the greedy path $\tau^\star(s)$ ends
at a terminal $y$ with $R_x(y)=0$ and $W(\tau^\star(s))=V(s)$. For every state
$u\in S_{\tau^\star(s)}$ the action taken there is $a^\star(u)$ and the suffix
of $\tau^\star(s)$ from $u$ is $\tau^\star(u)$. In particular, if two greedy
paths visit a common state they take the same action there and coincide from
that state on.
\end{lemma}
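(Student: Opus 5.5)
The proof is an induction over the acyclic certified graph, ordered by the length of the longest path to a terminal. Acyclicity makes the greedy recursion of Definition~\ref{def:greedy} well founded: each step moves to $T(s,a^\star(s))$, a state strictly closer to the terminals. So $\tau^\star(s)$ is a finite path that ends at a terminal, and a statement proved for all successors of $s$ transfers to $s$.

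\emph{Attainment.} For a terminal $y$ with $V(y)>-\infty$ the binary boundary forces $R_x(y)=0$, and $W(\tau^\star(y))=g(y)=0=V(y)$. For a nonterminal $s$ with $V(s)>-\infty$, let $a=a^\star(s)$ and $s'=T(s,a)$. Since $Q(s,a)=V(s)$, we have $w(s,a)+V(s')=V(s)>-\infty$. Because $w(s,a)=\log\pi(a\mid s)$ is finite (the policy is positive), this forces $V(s')>-\infty$, and the induction hypothesis applies at $s'$: the path $\tau^\star(s')$ ends at an accepted terminal and satisfies $W(\tau^\star(s'))=V(s')$. Additivity of scores then gives
\[
W(\tau^\star(s))=w(s,a)+W(\tau^\star(s'))=Q(s,a)=V(s),
\]
and $\tau^\star(s)$ ends at the same accepted terminal as $\tau^\star(s')$.

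\emph{Substructure.} By the recursive definition, the states of $\tau^\star(s)$ are $s$ together with the states of $\tau^\star(s')$. At $s$ itself the action taken is $a^\star(s)$ and the suffix from $s$ is the whole path $\tau^\star(s)$, so the claim holds there. For any other $u\in S_{\tau^\star(s)}$, we have $u\in S_{\tau^\star(s')}$, and the induction hypothesis at $s'$ says that $\tau^\star(s')$ takes $a^\star(u)$ at $u$ and that its suffix from $u$ is $\tau^\star(u)$. The suffix of $\tau^\star(s)$ from $u$ equals the suffix of $\tau^\star(s')$ from $u$, because acyclicity guarantees $u$ occurs only once along the path. This proves the claim at $s$. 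Every $u$ on the greedy path also has $V(u)>-\infty$, since its value is the score of a suffix of an accepted path, so $a^\star(u)$ is defined.

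\emph{Coincidence.} The final claim follows directly. If $\tau^\star(s_1)$ and $\tau^\star(s_2)$ both visit $u$, the substructure claim applied to each path says both take $a^\star(u)$ at $u$ and both have suffix $\tau^\star(u)$ from $u$.

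The main point needing care is the finiteness bookkeeping. One must check that $V(s)>-\infty$ propagates to the greedy successor, which uses positivity of $\pi$ and the convention that $-\infty$ absorbs sums. One must also check that the "first maximizer" tie-break is a fixed deterministic function of the state, so that the suffix of a greedy path is exactly $\tau^\star(u)$ and not merely some other maximizer. The rest is routine once the induction is set up on the topological order.
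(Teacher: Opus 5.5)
Your proof is correct and follows the same route as the paper. Both argue by induction in reverse topological order, using $Q(s,a^\star(s))=V(s)>-\infty$ to propagate finiteness to the greedy successor and additivity of scores to get $W(\tau^\star(s))=V(s)$. The only difference is that you carry the substructure claim through the induction explicitly, where the paper reads it off the recursive definition of $\tau^\star$ as immediate.
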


\begin{proof}
Induct in reverse topological order. At a terminal $V(y)=R_x(y)$, and
$V(y)>-\infty$ forces $R_x(y)=0=W(y)$. At a nonterminal $s$ with
$V(s)>-\infty$, the maximum $V(s)=\max_aQ(s,a)$ is attained by $a^\star(s)$,
so $V(T(s,a^\star(s)))>-\infty$ and by induction
$W(\tau^\star(T(s,a^\star(s))))=V(T(s,a^\star(s)))$ with an accepted
terminal. Then $W(\tau^\star(s))=w(s,a^\star(s))+W(\tau^\star(T(s,a^\star(s))))
=Q(s,a^\star(s))=V(s)$. The substructure statements are immediate from the
recursive definition: at each state visited, the path takes $a^\star$ of that
state and continues with the greedy path of the successor, so the suffix from
any visited $u$ is by construction $\tau^\star(u)$, which depends on $u$
alone.
\end{proof}

\begin{definition}[Tropical greedy update]
\label{def:update}
Let $\tau=\tau^\star(s_0)$ with nonterminal state set $S_\tau$, and let
$\eta>0$. The tropical greedy update is one gradient step on the
log-likelihood of $\tau$ with respect to the logits,
\[
z'=z+\eta\,\nabla_z\sum_{(s,a)\in\tau}\log\pi_z(a\mid s).
\]
Because a path visits each state once and logits at different states are
independent, this reads $z'_{s,\cdot}=z_{s,\cdot}$ for $s\notin S_\tau$ and
\[
z'_{s,a}=z_{s,a}+\eta\big(\mathbf 1\{a=a^\star(s)\}-\pi(a\mid s)\big)\qquad\text{for }s\in S_\tau,\ a\in\mathcal A(s).
\]
It is the $K=1$ update of Equation~\eqref{eq:loss} for one problem, without the
token normalization. Primes denote quantities under $z'$, and
$\Delta w(s,a)=w'(s,a)-w(s,a)$.
\end{definition}

\begin{lemma}[Local effect of the update]
\label{lem:local}
\begin{enumerate}
\item[(a)] $\Delta w(s,a)=0$ for every $s\notin S_\tau$ and every $a$.
\item[(b)] For $s\in S_\tau$ and $a\ne a^\star(s)$,
$\Delta w(s,a^\star(s))-\Delta w(s,a)=\eta\big(1-\pi(a^\star(s)\mid s)+\pi(a\mid s)\big)>0$.
\item[(c)] For $s\in S_\tau$, $\Delta w(s,a^\star(s))>0$.
\item[(d)] Consequently $\Delta w(s,a)\le\Delta w(s,a^\star(s))$ for all $s\in S_\tau$
and all $a$, with $\Delta w(s,a^\star(s))\ge0$ for all $s$ on or off $\tau$.
\end{enumerate}
\end{lemma}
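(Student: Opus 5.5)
The plan is to compute $\Delta w$ directly from the update rule of Definition~\ref{def:update}. For a tabular softmax, $w(s,a)=\log\pi_z(a\mid s)=z_{s,a}-\LSE_{a'}(z_{s,a'})$. Hence
\[
\Delta w(s,a)=\Delta z_{s,a}-\bigl[\LSE(z'_{s,\cdot})-\LSE(z_{s,\cdot})\bigr].
\]
The bracket, which we denote $c_s$, does not depend on $a$. So at a fixed state $s$ all score changes are the logit changes shifted by one common constant. Parts (a), (b) and (d) follow from this observation alone; part (c) needs one convexity estimate.

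For (a), take $s\notin S_\tau$. Then $z'_{s,\cdot}=z_{s,\cdot}$, so $\Delta z_{s,a}=0$ and $c_s=0$. For (b), the constant $c_s$ cancels in the difference:
\[
\Delta w(s,a^\star)-\Delta w(s,a)=\Delta z_{s,a^\star}-\Delta z_{s,a}=\eta(1-\pi(a^\star\mid s))-\eta(0-\pi(a\mid s)).
\]
This equals $\eta\bigl(1-\pi(a^\star\mid s)+\pi(a\mid s)\bigr)$. It is positive because $\eta>0$, $\pi(a^\star\mid s)\le1$ and $\pi(a\mid s)>0$ by positivity of the actor. Here I use that the action of $\tau$ at $s\in S_\tau$ is $a^\star(s)$, which is Lemma~\ref{lem:greedy} applied with $s=s_0$.

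Part (c) is the only real obstacle, since a softmax step can in principle overshoot. I would argue by convexity of $\LSE$. Its gradient at $z_{s,\cdot}$ is $\pi(\cdot\mid s)$, so the tangent bound gives
\[
c_s\ge\langle\pi,\Delta z_{s,\cdot}\rangle=\eta\bigl(\pi(a^\star)-\textstyle\sum_a\pi(a)^2\bigr).
\]
That is the wrong direction, so I need an upper bound on $c_s$ instead. Apply the same tangent inequality at $z'$ and use $\Delta z_{s,\cdot}=\eta(e_{a^\star}-\pi)$:
\[
c_s\le\langle\pi',\Delta z_{s,\cdot}\rangle=\eta\bigl(\pi'(a^\star)-\langle\pi',\pi\rangle\bigr).
\]
It then suffices to show $\pi'(a^\star)-\langle\pi',\pi\rangle<1-\pi(a^\star)$, because $\Delta w(s,a^\star)=\eta(1-\pi(a^\star))-c_s$. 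Since $\langle\pi',\pi\rangle\ge\pi'(a^\star)\pi(a^\star)$, the left side is at most $\pi'(a^\star)(1-\pi(a^\star))$. This is strictly less than $1-\pi(a^\star)$, using $\pi'(a^\star)<1$ and $\pi(a^\star)<1$, both guaranteed by having at least two actions with positive probability. If this estimate turns out too loose, a fallback is a direct argument: every non-$a^\star$ logit strictly decreases relative to $z_{s,a^\star}$ (by part (b)), so the softmax mass on $a^\star$ strictly increases.

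For (d), the first claim at $s\in S_\tau$ is (b) when $a\ne a^\star$ and trivial when $a=a^\star$. The nonnegativity of $\Delta w(s,a^\star(s))$ comes from (c) on $\tau$ and from (a) off $\tau$, where it equals $0$.
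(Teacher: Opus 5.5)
Your proof is correct. For (a), (b) and (d) it is the paper's argument: the log-normalizer shift is common to all actions at $s$ and cancels in differences. In (b) you justify positivity by $\pi(a\mid s)>0$ where the paper uses $\pi(a^\star(s)\mid s)<1$; either suffices.

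The genuine difference is in (c). The paper does not use convexity. It writes $\pi(a^\star\mid s)=\bigl(1+\sum_{a\ne a^\star}e^{z_{s,a}-z_{s,a^\star}}\bigr)^{-1}$ and observes that, by (b), every exponent strictly decreases, so $\pi'(a^\star\mid s)>\pi(a^\star\mid s)$. That is exactly your fallback.

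Your main route, the tangent-line inequality for $\LSE$ at $z'$, is also valid and gives slightly more. Written without specializing, it reads
$\Delta w(s,a^\star)\ge\sum_{a\ne a^\star}\pi'(a\mid s)\bigl(\Delta z_{s,a^\star}-\Delta z_{s,a}\bigr)\ge\eta\bigl(1-\pi(a^\star\mid s)\bigr)\bigl(1-\pi'(a^\star\mid s)\bigr)$.
This is a quantitative lower bound on the gain, whereas the paper's route gives only its sign. The paper's route is shorter and shows directly that (c) is a consequence of (b) alone: any step that raises $z_{s,a^\star}$ strictly relative to every other logit raises $\pi(a^\star\mid s)$, whatever the step size. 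So the overshoot you worried about cannot occur for the trained action's own log-probability. Your bound shares this generality in its first form, since that form uses only $\Delta z_{s,a^\star}>\Delta z_{s,a}$.
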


\begin{proof}
(a) The logits at $s$ are unchanged, hence so is $\pi(\cdot\mid s)$.
(b) At any state, $w(s,a)=z_{s,a}-\log\sum_{a'}e^{z_{s,a'}}$, so
$\Delta w(s,a)=\Delta z_{s,a}-\big[\log\sum_{a'}e^{z'_{s,a'}}-\log\sum_{a'}e^{z_{s,a'}}\big]$,
and the bracket does not depend on $a$. Subtracting the expressions for
$a^\star(s)$ and $a$,
$\Delta w(s,a^\star(s))-\Delta w(s,a)=\Delta z_{s,a^\star(s)}-\Delta z_{s,a}
=\eta(1-\pi(a^\star(s)\mid s))+\eta\pi(a\mid s)$, which is positive since
$\pi(a^\star(s)\mid s)<1$ when $s$ has at least two actions.
(c) Write $\pi(a^\star\mid s)=\big(1+\sum_{a\ne a^\star}e^{z_{s,a}-z_{s,a^\star}}\big)^{-1}$
and likewise for $z'$. By (b) each exponent $z'_{s,a}-z'_{s,a^\star}$ is
strictly smaller than $z_{s,a}-z_{s,a^\star}$, so the sum strictly decreases
and $\pi'(a^\star\mid s)>\pi(a^\star\mid s)$, i.e.\ $\Delta w(s,a^\star(s))>0$.
(d) is (b), (c) and (a) together.
\end{proof}

Part (b) is the exact statement of blindness: whatever normalization does to
the probabilities of the other actions at a trained state, the greedy action
gains strictly more than any of them, in log units, by an amount that does not
depend on the value function at all. Part (c) says the greedy action's own
log-probability rises. Nothing is claimed about the other actions
individually; with three or more actions some of them may gain probability
from the dominant one. The proofs below never need them to lose.

\begin{lemma}[Gain bound]
\label{lem:gainbound}
For every path $\tau'$ in the certified graph,
\[
W'(\tau')-W(\tau')\ \le\ \sum_{x\in S_{\tau'}\cap S_\tau}\Delta w(x,a^\star(x)),
\]
and for every state $v$,
\[
V'(v)\ \le\ V(v)+\sum_{x\in S_\tau\cap\Desc(v)}\Delta w(x,a^\star(x)).
\]
\end{lemma}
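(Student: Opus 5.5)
The first inequality is a direct computation, and I would derive the second from it by maximising over paths.

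\emph{First inequality.} Since scores are additive with $r\equiv0$, $\lambda=1$ and a binary boundary that the update leaves unchanged, the gain of any certified path $\tau'$ is
\[
W'(\tau')-W(\tau')=\sum_{x\in S_{\tau'}}\Delta w\bigl(x,a_{\tau'}(x)\bigr).
\]
I would split this sum according to whether $x\in S_\tau$. For $x\notin S_\tau$ the term vanishes by Lemma~\ref{lem:local}(a). For $x\in S_\tau$, Lemma~\ref{lem:local}(d) gives $\Delta w(x,a_{\tau'}(x))\le\Delta w(x,a^\star(x))$, and this holds whether or not $\tau'$ follows the greedy action at $x$. Summing over $x\in S_{\tau'}\cap S_\tau$ gives the first claim.

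\emph{Second inequality.} If $V'(v)=-\infty$ there is nothing to prove. The certified set does not change under the update, since it depends only on the graph and the verifier; hence $V(v)=-\infty$ if and only if $V'(v)=-\infty$. Otherwise, take $\tau'$ attaining $V'(v)$; it exists because the certified set from $v$ is finite. Then
\[
V'(v)=W'(\tau')\le W(\tau')+\sum_{x\in S_{\tau'}\cap S_\tau}\Delta w(x,a^\star(x))\le V(v)+\sum_{x\in S_{\tau'}\cap S_\tau}\Delta w(x,a^\star(x)).
\]
Every nonterminal state of $\tau'$ is reachable from $v$, so $S_{\tau'}\subseteq\Desc(v)$. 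The summands $\Delta w(x,a^\star(x))$ are nonnegative by Lemma~\ref{lem:local}(c),(d), so enlarging the index set from $S_{\tau'}\cap S_\tau$ to $S_\tau\cap\Desc(v)$ can only increase the sum. This gives the second bound.

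\emph{Main obstacle.} The delicate step is the pointwise domination $\Delta w(x,a)\le\Delta w(x,a^\star(x))$ at trained states. Lemma~\ref{lem:local}(b) gives it for non-greedy actions, and it is trivial for the greedy action itself. I also have to check that the greedy action $a^\star(x)$ in Lemma~\ref{lem:local} is the action that $\tau$ takes at $x$. This holds by Lemma~\ref{lem:greedy}, because $\tau=\tau^\star(s_0)$ uses $a^\star$ at each of its states. The remaining points are bookkeeping: the $-\infty$ convention, and the fact that the nonnegativity of the summands is what permits enlarging the index set.
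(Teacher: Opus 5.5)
Your proof is correct and follows the paper's argument: the same termwise decomposition via Lemma~\ref{lem:local}(a) and (d) for the first bound, and for the second the inclusion $S_{\tau'}\subseteq\Desc(v)$ together with nonnegativity of $\Delta w(x,a^\star(x))$ on $S_\tau$ to enlarge the index set. The only cosmetic difference is that you bound the score of a maximizer of $V'(v)$ directly, whereas the paper bounds every certified path from $v$ uniformly and then takes the maximum.
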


\begin{proof}
$W'(\tau')-W(\tau')=\sum_{(x,a)\in\tau'}\Delta w(x,a)$. For $x\notin S_\tau$
the term is $0$ by Lemma~\ref{lem:local}(a); for $x\in S_\tau$ it is at most
$\Delta w(x,a^\star(x))$ by Lemma~\ref{lem:local}(d). This gives the first
inequality. For the second, if $V(v)=-\infty$ then no certified path leaves
$v$ and $V'(v)=-\infty$ as well, since the graph and the boundary are
unchanged. Otherwise every path $\tau'$ from $v$ has $S_{\tau'}\subseteq\Desc(v)$,
and since each $\Delta w(x,a^\star(x))\ge0$,
\[
V'(v)=\max_{\tau'}W'(\tau')\le\max_{\tau'}\Big[W(\tau')+\sum_{x\in S_\tau\cap\Desc(v)}\Delta w(x,a^\star(x))\Big]
=V(v)+\sum_{x\in S_\tau\cap\Desc(v)}\Delta w(x,a^\star(x)).
\]
\end{proof}

\begin{lemma}[Stability along the trained path, with exact gain]
\label{lem:stability}
Write $\tau=\big((u_0,a_0),\dots,(u_{m-1},a_{m-1}),u_m\big)$ with $u_0=s_0$,
$a_k=a^\star(u_k)$ and $u_m$ an accepted terminal, and set
$\Delta_{\ge k}=\sum_{j=k}^{m-1}\Delta w(u_j,a_j)$ for $0\le k\le m$, so
$\Delta_{\ge m}=0$. Then for every $k$:
\[
V'(u_k)=V(u_k)+\Delta_{\ge k},
\]
and for $k<m$ the action $a_k$ is the unique maximizer of $Q'(u_k,\cdot)$. In
particular the greedy policy is unchanged on $S_\tau$, the greedy path from
$s_0$ under $z'$ is again $\tau$, and $\Delta_{\ge k}>0$ for $k<m$.
\end{lemma}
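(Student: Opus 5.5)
We argue by backward induction on $k$, from $k=m$ down to $k=0$, proving at each step two claims together: the value identity $V'(u_k)=V(u_k)+\Delta_{\ge k}$, and, for $k<m$, that $a_k$ is the unique maximizer of $Q'(u_k,\cdot)$. The base case $k=m$ is immediate. There $u_m$ is an accepted terminal, so $V'(u_m)=V(u_m)=0=V(u_m)+\Delta_{\ge m}$, since the boundary is unchanged.

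For the inductive step, fix $k<m$ and assume the identity at $u_{k+1}$. We first bound $V'(u_k)$ from below. By Lemma~\ref{lem:greedy}, the suffix of $\tau$ from $u_k$ is $\tau^\star(u_k)$, and $W(\tau^\star(u_k))=V(u_k)$. Its new score is
\[
W'(\tau^\star(u_k))=V(u_k)+\Delta_{\ge k}.
\]
Equivalently, $Q'(u_k,a_k)=w'(u_k,a_k)+V'(u_{k+1})=w(u_k,a_k)+\Delta w(u_k,a_k)+V(u_{k+1})+\Delta_{\ge k+1}=V(u_k)+\Delta_{\ge k}$. Because the value dominates the score of any certified path (the join property), this gives $V'(u_k)\ge V(u_k)+\Delta_{\ge k}$.

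We then bound $V'(u_k)$ from above, and this is where the main work lies. The tool is Lemma~\ref{lem:gainbound}: $V'(u_k)\le V(u_k)+\sum_{x\in S_\tau\cap\Desc(u_k)}\Delta w(x,a^\star(x))$. The step that needs care is identifying $S_\tau\cap\Desc(u_k)$ with $\{u_k,\dots,u_{m-1}\}$. Acyclicity gives $u_j\in\Anc(u_k)$ for $j<k$, and since $\Desc(u_k)\cap\Anc(u_k)=\emptyset$, no earlier $u_j$ is a descendant of $u_k$. Every $u_j$ with $j\ge k$ is reachable from $u_k$ along $\tau$. So the sum equals $\Delta_{\ge k}$, and combining with the lower bound gives equality.

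Uniqueness of the maximizer comes from the same bounds applied to the other actions. Take $a\ne a_k$ and write $v=T(u_k,a)$. Then
\[
Q'(u_k,a)=w(u_k,a)+\Delta w(u_k,a)+V'(v).
\]
Lemma~\ref{lem:gainbound} gives $V'(v)\le V(v)+\Delta_{\ge k+1}$: $v$ is a descendant of $u_k$ distinct from it, so $S_\tau\cap\Desc(v)\subseteq\{u_{k+1},\dots,u_{m-1}\}$, and each $\Delta w(x,a^\star(x))\ge0$ by Lemma~\ref{lem:local}(d). Lemma~\ref{lem:local}(b) gives the strict inequality $\Delta w(u_k,a)<\Delta w(u_k,a_k)$. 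Together these yield
\[
Q'(u_k,a)<Q(u_k,a)+\Delta w(u_k,a_k)+\Delta_{\ge k+1}\le V(u_k)+\Delta_{\ge k}=Q'(u_k,a_k),
\]
whenever $Q(u_k,a)>-\infty$. If $Q(u_k,a)=-\infty$, then $Q'(u_k,a)=-\infty$ and the inequality is trivially strict, because $Q'(u_k,a_k)$ is finite.

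Uniqueness then fixes the greedy action at every $u_k$ regardless of the tie-breaking order, so the greedy policy is unchanged on $S_\tau$. By the recursive definition, the new greedy path from $s_0$ is again $\tau$. Finally, $\Delta_{\ge k}>0$ for $k<m$ because each term $\Delta w(u_j,a_j)$ is strictly positive by Lemma~\ref{lem:local}(c).
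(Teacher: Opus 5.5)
Your proof is correct and follows the paper's argument. Both use backward induction, the exact computation $Q'(u_k,a_k)=V(u_k)+\Delta_{\ge k}$, and the gain bound at $v=T(u_k,a)$ combined with acyclicity and Lemma~\ref{lem:local}(b) to show that $a_k$ strictly dominates. The one difference is that you sandwich $V'(u_k)$ with an extra gain-bound application at $u_k$ itself, whereas the paper reads the value identity off uniqueness via $V'(u_k)=Q'(u_k,a_k)$; your step is valid but redundant once uniqueness is proved.
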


\begin{proof}
Backward induction on $k$. For $k=m$, $V'(u_m)=R_x(u_m)=0=V(u_m)$ and
$\Delta_{\ge m}=0$. Assume the claim for $k+1$ and consider $u_k$.

For the greedy action, since $a_k$ attains $V(u_k)$ we have
$V(u_k)=w(u_k,a_k)+V(u_{k+1})$, and therefore
\[
Q'(u_k,a_k)=w'(u_k,a_k)+V'(u_{k+1})
=w(u_k,a_k)+\Delta w(u_k,a_k)+V(u_{k+1})+\Delta_{\ge k+1}
=V(u_k)+\Delta_{\ge k}.
\]

For any other action $a\ne a_k$, let $v=T(u_k,a)$. The states
$u_0,\dots,u_k$ are ancestors of $v$: $u_k$ is a parent of $v$, and each
$u_j$ with $j<k$ is an ancestor of $u_k$. By acyclicity none of them lies in
$\Desc(v)$, so $S_\tau\cap\Desc(v)\subseteq\{u_{k+1},\dots,u_{m-1}\}$ and
Lemma~\ref{lem:gainbound} gives $V'(v)\le V(v)+\Delta_{\ge k+1}$. Hence, using
$Q(u_k,a)=w(u_k,a)+V(v)\le V(u_k)$,
\[
Q'(u_k,a)=w'(u_k,a)+V'(v)\le w(u_k,a)+\Delta w(u_k,a)+V(v)+\Delta_{\ge k+1}
\le V(u_k)+\Delta w(u_k,a)+\Delta_{\ge k+1}.
\]
Subtracting,
\[
Q'(u_k,a_k)-Q'(u_k,a)\ \ge\ \Delta w(u_k,a_k)-\Delta w(u_k,a)
=\eta\big(1-\pi(a_k\mid u_k)+\pi(a\mid u_k)\big)>0
\]
by Lemma~\ref{lem:local}(b). (If $V(v)=-\infty$ the inequality is trivial.)
So $a_k$ is the unique maximizer of $Q'(u_k,\cdot)$ and
$V'(u_k)=Q'(u_k,a_k)=V(u_k)+\Delta_{\ge k}$. Since the first maximizer is the
unique maximizer, $a'^{\star}(u_k)=a_k$; applying this along $\tau$ shows the
greedy path from $s_0$ is unchanged. Finally $\Delta_{\ge k}>0$ for $k<m$
because each summand is positive by Lemma~\ref{lem:local}(c).
\end{proof}

\begin{theorem}[Compositional policy improvement]
\label{thm:cpi}
Let $\tau=\tau^\star(s_0)$ and let $z'$ be the tropical greedy update along
$\tau$ with any step $\eta>0$.
\begin{enumerate}
\item[(i)] \textnormal{(Pointwise improvement from one path.)} For every state
$s$ of the certified graph, $V'(s)\ge V(s)$.
\item[(ii)] \textnormal{(Additivity over shared fragments.)} For every $s$ with
$V(s)>-\infty$,
\[
V'(s)-V(s)\ \ge\ \sum_{x\in S_{\tau^\star(s)}\cap S_\tau}\Delta w(x,a^\star(x)).
\]
\item[(iii)] \textnormal{(Undamped propagation to interfaces.)} If
$\tau^\star(s)$ visits a state of $\tau$ and $u_k$ is the first such state,
then $\tau^\star(s)$ coincides with $\tau$ from $u_k$ on and
\[
V'(s)-V(s)\ \ge\ V'(u_k)-V(u_k)=\Delta_{\ge k}>0.
\]
\item[(iv)] \textnormal{(Iteration.)} Repeating the update produces, at every
state, a non-decreasing sequence of values, with the same trained path at
every step.
\end{enumerate}
\end{theorem}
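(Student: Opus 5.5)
The argument rests on one observation: the greedy path $\tau^\star(s)$ is a single certified path whose score under $z'$ is easy to track. By the join property, $V'(s)\ge W'(\tau^\star(s))$. Lemma~\ref{lem:greedy} gives $W(\tau^\star(s))=V(s)$, so it is enough to lower-bound
\[
W'(\tau^\star(s))-W(\tau^\star(s))=\sum_{(x,a)\in\tau^\star(s)}\Delta w(x,a).
\]
Lemma~\ref{lem:local}(a) makes every term with $x\notin S_\tau$ vanish. The real task is to show that at each $x\in S_{\tau^\star(s)}\cap S_\tau$, the action $\tau^\star(s)$ takes is exactly $a^\star(x)$, which is the action the update trained at $x$.

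\emph{Matching at shared states.} By Lemma~\ref{lem:greedy}, the action $\tau^\star(s)$ takes at a visited state $x$ is $a^\star(x)$. The path $\tau=\tau^\star(s_0)$ likewise takes $a^\star(x)$ at every $x\in S_\tau$, and Definition~\ref{def:update} raises exactly $a^\star(x)$ there. So each shared state contributes $\Delta w(x,a^\star(x))$, and the sum collapses to
\[
\sum_{x\in S_{\tau^\star(s)}\cap S_\tau}\Delta w(x,a^\star(x)).
\]
This proves (ii). Every term is strictly positive by Lemma~\ref{lem:local}(c), so (i) follows for $V(s)>-\infty$. For $V(s)=-\infty$, (i) is trivial.

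\emph{Interfaces.} For (iii), the last clause of Lemma~\ref{lem:greedy} says two greedy paths that share a state coincide from that state on. Hence $\tau^\star(s)$ agrees with $\tau$ from its first shared state $u_k$. Because $u_k$ is the first shared state, $S_{\tau^\star(s)}\cap S_\tau=\{u_k,\dots,u_{m-1}\}$, and the bound in (ii) equals $\Delta_{\ge k}$. Lemma~\ref{lem:stability} identifies this with $V'(u_k)-V(u_k)$ and shows it is positive. The point needing care is that the shared states form exactly a suffix of $\tau$. After $u_k$ the two paths coincide, and before $u_k$ there are no shared states by minimality.

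\emph{Iteration.} For (iv), Lemma~\ref{lem:stability} shows the greedy path from $s_0$ under $z'$ is again $\tau$, so the next update trains the same path. Applying (i) at each step gives values that are non-decreasing at every state. By induction the trained path stays $\tau$ forever.

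\emph{Main obstacle.} I expect the main obstacle to be the tie-breaking subtlety in the matching step. The greedy actions used to define $\tau^\star(s)$ must be the same first-maximizer selections used to build $\tau$. This holds because both are read off the single deterministic policy $a^\star$ of Definition~\ref{def:greedy}, computed under the pre-update logits $z$. I would state this explicitly so the inequality is not mistakenly evaluated against greedy paths under $z'$.
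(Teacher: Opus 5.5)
Your proposal is correct and follows the paper's proof essentially step for step. For (i)--(ii) you use the join bound $V'(s)\ge W'(\tau^\star(s))$ together with Lemma~\ref{lem:greedy} and Lemma~\ref{lem:local}(a),(c); for (iii) you use the coincidence of greedy paths from the first shared state $u_k$ plus Lemma~\ref{lem:stability}; and for (iv) you use the invariance of the greedy path from $s_0$ under the update, with your tie-breaking remark (both $\tau^\star(s)$ and $\tau$ are read off the same pre-update greedy policy) being exactly what the paper's argument relies on.
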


\begin{proof}
(i)--(ii). If $V(s)=-\infty$ then $V'(s)=-\infty$ and (i) holds with equality.
Otherwise let $\tau^\star(s)$ be the greedy path from $s$ under $z$; by
Lemma~\ref{lem:greedy} it is certified and $W(\tau^\star(s))=V(s)$. By the
join property $V'(s)\ge W'(\tau^\star(s))$, and
$W'(\tau^\star(s))-W(\tau^\star(s))=\sum_{(x,a)\in\tau^\star(s)}\Delta w(x,a)$.
For $x\notin S_\tau$ the term is $0$ (Lemma~\ref{lem:local}(a)). For
$x\in S_\tau$, Lemma~\ref{lem:greedy} says the action taken by
$\tau^\star(s)$ at $x$ is $a^\star(x)$, so the term is
$\Delta w(x,a^\star(x))>0$ (Lemma~\ref{lem:local}(c)). Hence
$V'(s)\ge V(s)+\sum_{x\in S_{\tau^\star(s)}\cap S_\tau}\Delta w(x,a^\star(x))\ge V(s)$,
which is (ii) and (i).

(iii). The states of $\tau^\star(s)$ before $u_k$ lie outside $S_\tau$, by
the choice of $u_k$. From $u_k$ on, $\tau^\star(s)$ equals $\tau^\star(u_k)$
by Lemma~\ref{lem:greedy}, and so does the suffix of $\tau$ from $u_k$, since
$\tau=\tau^\star(s_0)$ visits $u_k$. Therefore
$S_{\tau^\star(s)}\cap S_\tau=\{u_k,\dots,u_{m-1}\}$ and the sum in (ii)
equals $\Delta_{\ge k}$, which is $V'(u_k)-V(u_k)$ by
Lemma~\ref{lem:stability} and is positive.

(iv). By Lemma~\ref{lem:stability} the greedy path from $s_0$ under $z'$ is
again $\tau$, so the next update is a tropical greedy update along the same
path and (i) applies again.
\end{proof}

Read (iii) as a statement about composition. The state $s$ was never trained
from and the path $\tau^\star(s)$ was never sampled; it is a prefix from $s$ to
$u_k$, discovered in some rollout, composed at the interface $u_k$ with a
suffix that was trained as part of a different path. The theorem says that all
of the training on that suffix is inherited by $s$, in full, in log units.
Under expected return the change at $s$ from training at $u_k$ is scaled by the
share of $s$'s success mass that flows through $u_k$, and is further reduced
by the mass lost on suppressed alternatives; under the maximum neither
attenuation exists, because the value at $s$ is the score of one path and
that path does not use any suppressed alternative.

The next proposition establishes that the improvement step of
Theorem~\ref{thm:cpi} is genuinely a different step from classical policy
improvement, in two senses: the action it reinforces cannot be determined from
expected-return information, and acting on expected-return information can
destroy the tropical value while the tropical step can destroy expected
return. The example is the one from Proposition~\ref{prop:nonsuff}, lifted
from values to decisions.

\begin{proposition}[Incompatibility with expected-return improvement]
\label{prop:incompat}
\begin{enumerate}
\item[(a)] \textnormal{(Non-identifiability.)} There exist two instances that
agree on a state $u$, on $\pi(\cdot\mid u)$, on $\VER(T(u,a))$ for every
action, and on $\VER(u)$, yet have different tropical greedy actions at $u$.
Hence no rule that selects an action from these quantities selects the
tropical greedy action on all instances.
\item[(b)] \textnormal{(Mutual harm.)} Let $u$ have two actions $a,a'$ with
$a$ the unique tropical greedy action and $a'$ the unique expected-return
greedy action, $\VER(T(u,a'))>\VER(T(u,a))$. Any update that increases
$\pi(a\mid u)$ and changes no other state strictly decreases $\VER(u)$. Any
update that increases $\pi(a'\mid u)$ and changes no other state strictly
decreases $\Vtrop(u)$ provided
$\log\frac{\pi'(a'\mid u)}{\pi(a'\mid u)}<-\Atrop(u,a')$, in particular for
all sufficiently small steps.
\end{enumerate}
\end{proposition}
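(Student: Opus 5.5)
For (a) I would give two concrete instances, building directly on the construction behind Observation~\ref{obs:nonsuff}. Let $u$ have two actions $a,b$ with $\pi(a\mid u)=\pi(b\mid u)=\tfrac12$. In both instances the successor $T(u,a)$ has a single certified continuation of probability $\tfrac12$, so that $\VER(T(u,a))=\tfrac12$ and $\Vtrop(T(u,a))=\log\tfrac12$. The two instances differ only below $T(u,b)$. In instance~1 that successor also has a single certified continuation of probability $\tfrac12$. In instance~2 it has two certified continuations of probability $\tfrac14$ each. Then $\VER(T(u,b))=\tfrac12$ and $\VER(u)=\tfrac12$ in both instances, and $\pi(\cdot\mid u)$ is identical. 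The tropical values differ, however: $\Qtrop(u,a)=\log\tfrac14$ in both, while $\Qtrop(u,b)$ is $\log\tfrac14$ in instance~1 and $\log\tfrac18$ in instance~2. This alone does not separate the greedy actions, because instance~1 has a tie.

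To remove the tie and obtain genuinely different greedy actions, I would make instance~1 favor $b$. Take $b$'s branch in instance~1 to be a single path of probability $\tfrac12$, and $a$'s branch in both instances to be two paths of probability $\tfrac14$. Instance~2 swaps the roles: $a$'s branch is a single path of $\tfrac12$ and $b$'s branch is two paths of $\tfrac14$. All expected-return quantities listed in (a) are then equal, namely $\tfrac12$ for each branch and $\tfrac12$ at $u$. The unique tropical greedy action, however, is $b$ in instance~1 and $a$ in instance~2. Any rule that is a function of the listed data outputs the same action on both instances, so it errs on one of them.

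For (b) I would argue as in Proposition~\ref{prop:forget}. With two actions, $\VER(u)=\pi(a\mid u)\VER(T(u,a))+(1-\pi(a\mid u))\VER(T(u,a'))$. Since the successors are unchanged, this is affine in $\pi(a\mid u)$ with slope $\VER(T(u,a))-\VER(T(u,a'))<0$, so increasing $\pi(a\mid u)$ strictly decreases $\VER(u)$. For the second claim, $\pi'(a\mid u)=1-\pi'(a'\mid u)$ decreases. Because $a$ is the unique greedy action, $\Atrop(u,a')=\Qtrop(u,a')-\Vtrop(u)<0$. After the update, $Q'(u,a)=Q(u,a)+\log\frac{\pi'(a\mid u)}{\pi(a\mid u)}<\Vtrop(u)$, and $Q'(u,a')=\Vtrop(u)+\Atrop(u,a')+\log\frac{\pi'(a'\mid u)}{\pi(a'\mid u)}$, which is $<\Vtrop(u)$ exactly under the stated condition. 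Hence $\Vtrop'(u)=\max\{Q'(u,a),Q'(u,a')\}<\Vtrop(u)$. The "sufficiently small steps" clause follows because the log-ratio tends to $0$ while $-\Atrop(u,a')>0$.

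The main obstacle is making the instances in (a) genuinely well-formed graphs with the right successor values. The branch structures must be realized by concrete subtrees with an appropriate $\pi$, for example a chain with probability $\tfrac12$ at a single branching that leads to one rejected terminal. Everything else is a short calculation.
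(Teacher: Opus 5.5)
Your proof is correct and follows the paper's argument. Part (a) uses two instances at a uniform two-action state whose subtrees have equal success mass but swapped best-path probabilities, and part (b) uses the same affine computation for $\VER(u)$ and the same comparison of the two updated tropical $Q$-values. The only difference is the choice of instances in (a): yours have an expected-return tie at $u$, whereas the paper's (successor success masses $\tfrac12$ and $\tfrac34$) has expected return strictly preferring $a'$ in both instances, which connects (a) to the setting of (b). Part (a) as stated does not require this, so your construction suffices.
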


\begin{proof}
(a) Take $\pi(a\mid u)=\pi(a'\mid u)=\tfrac12$ in both instances. In the first,
$T(u,a)$ has a single certified continuation of probability $\tfrac12$ and
$T(u,a')$ has three of probability $\tfrac14$ each. In the second, $T(u,a)$
has two of probability $\tfrac14$ and $T(u,a')$ one of probability
$\tfrac34$. In both, $\VER(T(u,a))=\tfrac12$, $\VER(T(u,a'))=\tfrac34$ and
$\VER(u)=\tfrac12\cdot\tfrac12+\tfrac12\cdot\tfrac34=\tfrac58$. The tropical
$Q$-values are $\log\pi(a\mid u)+\max\log P$ over the subtree: in the first
instance $\log(\tfrac12\cdot\tfrac12)=\log\tfrac14$ for $a$ against
$\log(\tfrac12\cdot\tfrac14)=\log\tfrac18$ for $a'$, so the tropical greedy
action is $a$; in the second, $\log\tfrac18$ for $a$ against
$\log(\tfrac12\cdot\tfrac34)=\log\tfrac38$ for $a'$, so it is $a'$. A rule
that is a function of the listed quantities returns the same action in both
instances and is therefore wrong in one of them.

(b) With two actions, $\pi(a'\mid u)=1-\pi(a\mid u)$ and downstream values are
unchanged by an update at $u$, so
\[
\VER(u)=\pi(a\mid u)\VER(T(u,a))+\big(1-\pi(a\mid u)\big)\VER(T(u,a'))
\]
\[
=\VER(T(u,a'))-\pi(a\mid u)\big[\VER(T(u,a'))-\VER(T(u,a))\big].
\]
The bracket is positive by hypothesis, so increasing $\pi(a\mid u)$ strictly
decreases $\VER(u)$.

For the second claim write $A=w(u,a)+\Vtrop(T(u,a))$ and
$B=w(u,a')+\Vtrop(T(u,a'))$, so $\Vtrop(u)=\max\{A,B\}=A>B$ and
$\Atrop(u,a')=B-A<0$. An update increasing $\pi(a'\mid u)$ decreases
$\pi(a\mid u)$, so $w'(u,a)<w(u,a)$ and $A'<A$, while
$B'=B+\log\frac{\pi'(a'\mid u)}{\pi(a'\mid u)}$. Then
$\Vtrop(u)'=\max\{A',B'\}<A=\Vtrop(u)$ as soon as $B'<A$, i.e.\ as soon as
$\log\frac{\pi'(a'\mid u)}{\pi(a'\mid u)}<A-B=-\Atrop(u,a')$. Since the left
side tends to $0$ with the step size and the right side is a fixed positive
number, the condition holds for all sufficiently small steps.
\end{proof}

\begin{remark}[On the coverage path]
Theorem~\ref{thm:cpi} concerns the $K=1$ update. The coverage path $\tau_2$ of
Section~\ref{sec:tropic} may take a non-greedy action at a state shared with
$\tau_1$, in which case Lemma~\ref{lem:stability} need not hold at that state
and strict monotonicity may fail there. This is a deliberate trade: the second
path is chosen for fragment coverage, not accessibility, and the price is a
possible local decrease in the best-witness accessibility in exchange for
rehearsing fragments that would otherwise erode. The theorem tells us exactly
what is given up and where.
\end{remark}

\subsection{The tropical performance difference lemma}
\label{app:pdl}

The performance difference lemma expresses the change in value between two
policies as an expectation, under the new policy's trajectory distribution, of
the old policy's advantage. Everything that is hard in classical policy
optimization lives in that expectation: it is over a distribution one does not
have, its gradient carries the visitation weight $\dvis(s)\pi(a\mid s)$ that
makes rare good paths invisible, and convergence rates inherit a distribution
mismatch coefficient from it. This section derives the tropical analogue and
shows that the expectation is replaced by a sum over a single path. In this
section $\lambda>0$ and the internal reward $r$ are general and terminal
scores $g(y)$ may be graded; the binary boundary is the case $g=R_x$.

\begin{lemma}[Path form and path identity]
\label{lem:pathid}
For an actor $\pi$ let $Q^\pi,V^\pi,A^\pi$ be given by
Equation~\eqref{eq:tropbellman} with $V^\pi(y)=g(y)$ at terminals, and let
$W_\pi(\tau)=\sum_t\big[\lambda\log\pi(a_t\mid s_t)+r(s_t,a_t)\big]+g(y)$.
\begin{enumerate}
\item[(a)] $V^\pi(s)=\max_{\tau\text{ from }s}W_\pi(\tau)$ for every $s$.
\item[(b)] For every path $\tau'$ from $s$ to a terminal $y$ with
$g(y)>-\infty$,
\[
W_\pi(\tau')=V^\pi(s)+\sum_{(s_t,a_t)\in\tau'}A^\pi(s_t,a_t).
\]
\end{enumerate}
\end{lemma}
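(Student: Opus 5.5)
Part (a) will be proved by backward induction in reverse topological order on the finite acyclic graph, exactly as in Lemma~\ref{lem:greedy}. At a terminal $y$ the only path is $y$ itself, so $W_\pi(y)=g(y)=V^\pi(y)$. At a nonterminal $s$, every path from $s$ has the form $(s,a)\circ\tau''$ with $\tau''$ a path from $T(s,a)$. Additivity then gives $W_\pi((s,a)\circ\tau'')=\lambda\log\pi(a\mid s)+r(s,a)+W_\pi(\tau'')$. We take the maximum over $\tau''$ first and apply the induction hypothesis at $T(s,a)$, which turns the inner maximum into $Q^\pi(s,a)$. The outer maximum over $a$ is then $V^\pi(s)$ by Equation~\eqref{eq:tropbellman}. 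The only step needing justification is that a maximum over a disjoint union of path sets is the maximum of the maxima. We also need that adding a constant commutes with $\max$, which is the distributivity $c+\max\{a,b\}=\max\{c+a,c+b\}$ already noted in the main text. The convention that sums containing $-\infty$ equal $-\infty$ handles rejected branches.

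For (b) we will use a telescoping argument. Write $\tau'=((s_0,a_0),\dots,(s_{H-1},a_{H-1}),y)$ with $s_0=s$ and $s_{t+1}=T(s_t,a_t)$. By definition $A^\pi(s_t,a_t)=Q^\pi(s_t,a_t)-V^\pi(s_t)$. Substituting the tropical Bellman equation $Q^\pi(s_t,a_t)=\lambda\log\pi(a_t\mid s_t)+r(s_t,a_t)+V^\pi(s_{t+1})$ gives
\[
\sum_{t=0}^{H-1}A^\pi(s_t,a_t)=\sum_{t=0}^{H-1}\bigl[\lambda\log\pi(a_t\mid s_t)+r(s_t,a_t)\bigr]+\sum_{t=0}^{H-1}\bigl[V^\pi(s_{t+1})-V^\pi(s_t)\bigr].
\]
The second sum telescopes to $V^\pi(y)-V^\pi(s)=g(y)-V^\pi(s)$. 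Rearranging yields $W_\pi(\tau')=V^\pi(s)+\sum_tA^\pi(s_t,a_t)$.

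The main obstacle is the extended-real arithmetic. The telescoping is only legitimate if every $V^\pi(s_t)$ is finite, since otherwise we would be forming expressions like $-\infty-(-\infty)$. We will secure finiteness from the hypothesis $g(y)>-\infty$, together with $\pi>0$ and finite internal rewards. Under these conditions $W_\pi$ of the suffix of $\tau'$ from $s_t$ is finite. By part (a), $V^\pi(s_t)$ dominates that score, so $V^\pi(s_t)>-\infty$. Finiteness from above holds because the graph is finite and all scores are finite or $-\infty$. Consequently every $Q^\pi$ and $A^\pi$ along $\tau'$ is a real number, and the algebra above is valid termwise. We will state this finiteness check explicitly before the telescoping step.
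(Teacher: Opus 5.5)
Your proposal is correct and follows the paper's proof essentially step for step. Part (a) is reverse topological induction using additivity and the decomposition of paths from $s$ into $(s,a)\circ\tau''$. Part (b) telescopes the Bellman identity along $\tau'$, after first using (a) and $g(y)>-\infty$ to show that every $V^\pi(s_t)$ on the path is finite; your explicit remark about finiteness from above is a minor bit of care the paper leaves implicit.
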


\begin{proof}
(a) Reverse topological induction. At terminals both sides equal $g(y)$. At a
nonterminal $s$, every path from $s$ is $(s,a)$ followed by a path from
$T(s,a)$, and additivity gives
$\max_\tau W_\pi(\tau)=\max_a\big[\lambda\log\pi(a\mid s)+r(s,a)+\max_{\tau''\text{ from }T(s,a)}W_\pi(\tau'')\big]
=\max_aQ^\pi(s,a)=V^\pi(s)$ by the induction hypothesis.

(b) Along $\tau'$ every value $V^\pi(s_t)$ is finite, since the suffix of
$\tau'$ from $s_t$ is a path with finite score and (a) gives
$V^\pi(s_t)\ge$ that score. Then, with $s_H=y$,
\[
\sum_{t=0}^{H-1}A^\pi(s_t,a_t)
=\sum_{t=0}^{H-1}\big[\lambda\log\pi(a_t\mid s_t)+r(s_t,a_t)+V^\pi(s_{t+1})-V^\pi(s_t)\big]
\]
\[
=\big[W_\pi(\tau')-g(y)\big]+V^\pi(y)-V^\pi(s),
\]
by telescoping the value terms. Since $V^\pi(y)=g(y)$, this equals
$W_\pi(\tau')-V^\pi(s)$.
\end{proof}

Part (b) has no classical counterpart. It says that the deficit of \emph{any}
certified path relative to the value is exactly the sum of tropical
advantages along it: a deterministic accounting, path by path, with no
expectation. Every path has non-positive deficit because every tropical
advantage is non-positive, and the witness has deficit zero because every
action on it is a maximizer. The lemma is the reason the tropical advantage
never needs to be estimated: it is a local quantity computable by the dynamic
program, and on the path one trains it vanishes identically.

\begin{theorem}[Tropical performance difference lemma]
\label{thm:pdl}
For actors $\pi,\pi'$ and a state $s$ with $V^{\pi'}(s)>-\infty$, let
$\tau^\star_{\pi'}(s)$ be a path attaining $V^{\pi'}(s)$. Then
\begin{equation}
V^{\pi'}(s)-V^{\pi}(s)=\sum_{(s_t,a_t)\in\tau^\star_{\pi'}(s)}\Big[A^{\pi}(s_t,a_t)+\lambda\log\frac{\pi'(a_t\mid s_t)}{\pi(a_t\mid s_t)}\Big].
\label{eq:pdl}
\end{equation}
\end{theorem}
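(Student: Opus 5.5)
The plan is to apply Lemma~\ref{lem:pathid}(b) to the single path $\tau^\star_{\pi'}(s)$, but under the \emph{old} actor $\pi$, and then relate $W_\pi$ to $W_{\pi'}$ on that same path. Write $\tau'=\tau^\star_{\pi'}(s)$ with terminal $y$.

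First, I check that Lemma~\ref{lem:pathid}(b) applies. Since $V^{\pi'}(s)=W_{\pi'}(\tau')>-\infty$ and every policy gives positive probability to admissible actions, every term of $W_{\pi'}(\tau')$ is finite. In particular $g(y)>-\infty$. The lemma, applied to actor $\pi$, then gives
\[
W_\pi(\tau')=V^\pi(s)+\sum_{(s_t,a_t)\in\tau'}A^\pi(s_t,a_t).
\]
The finiteness of $V^\pi$ along $\tau'$, which the telescoping in the lemma needs, is already handled inside its proof, because $\tau'$ has finite score under $\pi$ as well.

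Second, I compare scores of the same path under the two actors. The rewards $r(s_t,a_t)$ and the terminal score $g(y)$ do not depend on the actor. Only the likelihood terms differ, so
\[
W_{\pi'}(\tau')-W_\pi(\tau')=\sum_t\lambda\log\frac{\pi'(a_t\mid s_t)}{\pi(a_t\mid s_t)}.
\]
Finally, $\tau'$ attains $V^{\pi'}(s)$. By Lemma~\ref{lem:pathid}(a), this means $W_{\pi'}(\tau')=V^{\pi'}(s)$. Substituting the first identity into the second yields Equation~\eqref{eq:pdl}.

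I expect no step to be a real obstacle. The only care needed is bookkeeping of $-\infty$: every quantity in the sum must be finite so that the telescoping and the subtraction are legitimate. This is guaranteed by the hypothesis $V^{\pi'}(s)>-\infty$ together with the strict positivity of actors assumed in Section~\ref{app:setting}.
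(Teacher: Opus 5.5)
Your proposal is correct and is essentially the paper's proof: both write $W_{\pi'}(\tau')=W_\pi(\tau')+\lambda\sum_t\log\frac{\pi'(a_t\mid s_t)}{\pi(a_t\mid s_t)}$, identify $W_{\pi'}(\tau^\star_{\pi'}(s))=V^{\pi'}(s)$ via Lemma~\ref{lem:pathid}(a), and expand $W_\pi(\tau')$ by Lemma~\ref{lem:pathid}(b). Your finiteness bookkeeping ($g(y)>-\infty$ and a finite score under $\pi$) is a correct expansion of the paper's brief remark that the path has an accepted terminal.
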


\begin{proof}
Rewards and terminal scores do not depend on the actor, so for any path
$W_{\pi'}(\tau')=W_\pi(\tau')+\lambda\sum_t\log\frac{\pi'(a_t\mid s_t)}{\pi(a_t\mid s_t)}$.
Apply this to $\tau'=\tau^\star_{\pi'}(s)$, which by
Lemma~\ref{lem:pathid}(a) satisfies $W_{\pi'}(\tau')=V^{\pi'}(s)$ and has an
accepted terminal, and expand $W_\pi(\tau')$ by Lemma~\ref{lem:pathid}(b).
\end{proof}

Set the two lemmas side by side. Classically,
$V^{\pi'}(s)-V^\pi(s)=\mathbb E_{\tau\sim\pi'}[\sum_tA^\pi(s_t,a_t)]$: the old
advantage, averaged over the new policy's trajectories. Tropically the average
is over one path, and a second term appears, the log-ratio of the two actors
along it. It appears because in tropical RL the per-step reward is the actor's
own log-likelihood, so changing the actor changes the rewards; there is no
analogue in classical RL, where rewards are exogenous. The log-ratio that
trust-region methods introduce by hand as an importance weight is here part of
the identity, additively, along a single path.

\begin{corollary}[Sandwich, and a witness for the slack]
\label{cor:sandwich}
Under the hypotheses of Theorem~\ref{thm:pdl}, with $\tau^\star_\pi(s)$ a
path attaining $V^\pi(s)$,
\begin{equation}
\lambda\sum_{(s_t,a_t)\in\tau^\star_\pi(s)}\log\frac{\pi'(a_t\mid s_t)}{\pi(a_t\mid s_t)}
\ \le\ V^{\pi'}(s)-V^\pi(s)\ \le\
\lambda\sum_{(s_t,a_t)\in\tau^\star_{\pi'}(s)}\log\frac{\pi'(a_t\mid s_t)}{\pi(a_t\mid s_t)}.
\label{eq:sandwich}
\end{equation}
The slack of the left inequality is
$W_{\pi'}(\tau^\star_{\pi'}(s))-W_{\pi'}(\tau^\star_\pi(s))\ge0$, the amount by
which the new witness beats the old witness under the new actor; the slack of
the right inequality is $-\sum_{\tau^\star_{\pi'}(s)}A^\pi(s_t,a_t)\ge0$, the
old-actor deficit of the new witness.
\end{corollary}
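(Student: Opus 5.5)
The proof will rest entirely on Theorem~\ref{thm:pdl} and Lemma~\ref{lem:pathid}, with one fact used twice: every tropical advantage is non-positive, since $V^\pi(s)=\max_a Q^\pi(s,a)$ gives $A^\pi(s,a)=Q^\pi(s,a)-V^\pi(s)\le0$.

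\emph{Right inequality.} Start from the identity \eqref{eq:pdl}. It writes $V^{\pi'}(s)-V^\pi(s)$ as a sum over the new witness $\tau^\star_{\pi'}(s)$ of $A^\pi$ plus the $\lambda$-weighted log-ratio. Dropping the advantage terms gives the upper bound, because each is $\le 0$. The slack is exactly $-\sum_{\tau^\star_{\pi'}(s)}A^\pi(s_t,a_t)\ge0$, which is the stated old-actor deficit of the new witness.

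\emph{Left inequality.} Avoid the advantage entirely and use the join property of the maximum. By Lemma~\ref{lem:pathid}(a), $V^{\pi'}(s)\ge W_{\pi'}(\tau^\star_\pi(s))$, since $\tau^\star_\pi(s)$ is a path from $s$. Next apply the reweighting identity from the proof of Theorem~\ref{thm:pdl}, namely $W_{\pi'}(\tau')=W_\pi(\tau')+\lambda\sum_t\log\frac{\pi'(a_t\mid s_t)}{\pi(a_t\mid s_t)}$, with $\tau'=\tau^\star_\pi(s)$, and use $W_\pi(\tau^\star_\pi(s))=V^\pi(s)$. Subtracting $V^\pi(s)$ yields the lower bound. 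The slack is
\[
V^{\pi'}(s)-W_{\pi'}(\tau^\star_\pi(s))=W_{\pi'}(\tau^\star_{\pi'}(s))-W_{\pi'}(\tau^\star_\pi(s)),
\]
which is non-negative by maximality of $\tau^\star_{\pi'}(s)$ under $\pi'$.

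\emph{Finiteness.} The only point needing care is that every quantity used is finite, so that no $-\infty-(-\infty)$ expressions arise. The hypothesis $V^{\pi'}(s)>-\infty$ makes $\tau^\star_{\pi'}(s)$ end at an accepted terminal. Both actors have full support, so path scores are finite exactly when the terminal score is finite, and this condition does not depend on the actor. Hence $V^\pi(s)>-\infty$ as well, $\tau^\star_\pi(s)$ exists with a finite score, and all log-ratios are finite. I expect this bookkeeping to be the main, if minor, obstacle; the inequalities themselves are immediate.
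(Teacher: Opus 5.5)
Your proof is correct and is essentially the paper's own: the upper bound comes from dropping the non-positive advantages in Eq.~\ref{eq:pdl}, and the lower bound from $V^{\pi'}(s)\ge W_{\pi'}(\tau^\star_\pi(s))$ combined with the reweighting identity, giving the same two slack expressions. Your finiteness check is a small piece of bookkeeping the paper leaves implicit: full support makes the finiteness of a path score independent of the actor, so $V^{\pi'}(s)>-\infty$ forces $V^\pi(s)>-\infty$.
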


\begin{proof}
Lower bound: by Lemma~\ref{lem:pathid}(a),
$V^{\pi'}(s)\ge W_{\pi'}(\tau^\star_\pi(s))=W_\pi(\tau^\star_\pi(s))+\lambda\sum_{\tau^\star_\pi(s)}\log\frac{\pi'}{\pi}
=V^\pi(s)+\lambda\sum_{\tau^\star_\pi(s)}\log\frac{\pi'}{\pi}$, and the slack
is $V^{\pi'}(s)-W_{\pi'}(\tau^\star_\pi(s))=W_{\pi'}(\tau^\star_{\pi'}(s))-W_{\pi'}(\tau^\star_\pi(s))$,
nonnegative since $\tau^\star_{\pi'}(s)$ attains the maximum under $\pi'$.
Upper bound: drop the non-positive advantages in~\eqref{eq:pdl}; the slack is
their negative sum.
\end{proof}

The lower bound is what TROPIC's improvement step maximizes with
$\pi=\bar\pi_t$: the log-likelihood gain on the current witness. It is a valid
lower bound on the true improvement for \emph{any} $\pi'$, with no penalty
term. A penalty-free bound of this kind also exists for the hull distance,
where it is the evidence lower bound; what is specific to the vertex distance
is that both ends of the sandwich are single-path log-likelihood ratios and
that the slack is not a divergence between posteriors but a concrete path, the
new witness. When discovery or composition has made a new path the best one
under the new actor, that path is the slack, and the slack is the contribution
of composition to improvement. Nothing in the classical lemma isolates this
quantity.

\begin{corollary}[Tropical policy gradient, and the classical weights]
\label{cor:pg}
Let the actor be tabular softmax and let the maximizer $\tau^\star$ of
$V^{\pi_z}(s_0)$ be unique. Then
\begin{equation}
\frac{\partial V^{\pi_z}(s_0)}{\partial z_{s,a}}=
\begin{cases}
\lambda\big(\mathbf 1\{a=a^\star(s)\}-\pi_z(a\mid s)\big), & s\in S_{\tau^\star},\\[2pt]
0, & \text{otherwise},
\end{cases}
\label{eq:troppg}
\end{equation}
where $a^\star(s)$ is the action of $\tau^\star$ at $s$. For comparison, with
zero internal reward and terminal reward $R\in\{0,1\}$, the expected-return
value $\VER(s_0)=\sum_\tau P_z(\tau)R(\tau)$ satisfies
\begin{equation}
\frac{\partial\VER(s_0)}{\partial z_{s,a}}=\dvis(s)\,\pi_z(a\mid s)\,A_{\mathrm{ER}}(s,a),
\qquad
A_{\mathrm{ER}}(s,a)=\VER(T(s,a))-\VER(s),
\label{eq:classicalpg}
\end{equation}
where $\dvis(s)$ is the probability that a rollout from $s_0$ visits $s$.
\end{corollary}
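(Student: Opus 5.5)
The tropical formula \eqref{eq:troppg} is a direct specialization of Corollary~\ref{cor:maxconseq}(b), and the classical formula \eqref{eq:classicalpg} comes from the standard tabular softmax computation. I will handle them in turn.

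\textbf{Tropical part.} The scores are $W_z(\tau)=\sum_t\big[\lambda\log\pi_z(a_t\mid s_t)+r(s_t,a_t)\big]+g(y)$. These are finitely many paths, each score continuously differentiable in $z$. The maximizer $\tau^\star$ is assumed unique, so Corollary~\ref{cor:maxconseq}(b) applies: $V^{\pi_z}(s_0)$ coincides with $W_z(\tau^\star)$ on a neighbourhood, hence
\[
\nabla_z V^{\pi_z}(s_0)=\nabla_z W_z(\tau^\star)=\lambda\sum_{(s_t,a_t)\in\tau^\star}\nabla_z\log\pi_z(a_t\mid s_t).
\]
The corollary was stated for $\lambda=1$, $r\equiv0$. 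The same neighbourhood argument works verbatim for general $\lambda$ and $r$, because $r$ and $g$ do not depend on $z$.

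Next, acyclicity means $\tau^\star$ visits each state at most once, and logits at different states are independent. So the derivative with respect to $z_{s,a}$ picks up at most one term. That term is $\lambda\,\partial\log\pi_z(a^\star(s)\mid s)/\partial z_{s,a}=\lambda(\mathbf 1\{a=a^\star(s)\}-\pi_z(a\mid s))$ when $s\in S_{\tau^\star}$, and $0$ otherwise. This gives \eqref{eq:troppg}.

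\textbf{Classical part.} I write $\VER(s_0)=\sum_\tau P_z(\tau)R(\tau)$ and differentiate:
\[
\frac{\partial\VER(s_0)}{\partial z_{s,a}}=\sum_\tau P_z(\tau)R(\tau)\,\frac{\partial\log P_z(\tau)}{\partial z_{s,a}}.
\]
A path visits $s$ at most once, so $\partial\log P_z(\tau)/\partial z_{s,a}=\mathbf 1\{s\in\tau\}\big(\mathbf 1\{a_\tau(s)=a\}-\pi_z(a\mid s)\big)$.

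I then split the sum over paths through $s$. Determinism and the Markov factorization give two identities:
\[
\sum_{\tau\ni s}P_z(\tau)R(\tau)=\dvis(s)\VER(s),\qquad \sum_{\tau\ni(s,a)}P_z(\tau)R(\tau)=\dvis(s)\,\pi_z(a\mid s)\,\VER(T(s,a)).
\]
The path probability factors as prefix to $s$, times the action probability at $s$, times the suffix. Summing the prefixes gives $\dvis(s)$, and summing the suffixes against $R$ gives $\VER(\cdot)$. Combining,
\[
\frac{\partial\VER(s_0)}{\partial z_{s,a}}=\dvis(s)\,\pi_z(a\mid s)\,\VER(T(s,a))-\pi_z(a\mid s)\,\dvis(s)\,\VER(s)=\dvis(s)\,\pi_z(a\mid s)\,A_{\mathrm{ER}}(s,a).
\]

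\textbf{Main obstacle.} The delicate point is the factorization step: summing over prefixes must give exactly the visitation probability $\dvis(s)$. This requires that distinct prefixes to $s$ are disjoint events and that the suffix distribution depends only on $s$. Both hold because the transitions are deterministic and the policy is Markov in the state. Everything else is the routine softmax derivative.
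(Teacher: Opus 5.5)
Your proposal is correct and follows the paper's proof essentially step for step. For \eqref{eq:troppg} you use Corollary~\ref{cor:maxconseq}(b) together with the one-visit-per-state softmax derivative, and for \eqref{eq:classicalpg} you use the prefix--action--suffix factorization, in which summing the prefixes gives $d^{\pi}(s)$ and summing the suffixes gives $\VER$ of the successor.

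One small correction: Corollary~\ref{cor:maxconseq}(b) is already stated for arbitrary continuously differentiable scores $W_\theta(\tau)$, with $\lambda=1$ appearing only in its final specialization. So no extension beyond $\lambda=1$, $r\equiv0$ is needed.
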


\begin{proof}
For~\eqref{eq:troppg}, Corollary~\ref{cor:maxconseq}(b) applies to
$W_z(\tau)=\sum_t[\lambda\log\pi_z(a_t\mid s_t)+r(s_t,a_t)]+g(y)$, which is
continuously differentiable in $z$, and gives
$\nabla_zV^{\pi_z}(s_0)=\lambda\sum_{(s_t,a_t)\in\tau^\star}\nabla_z\log\pi_z(a_t\mid s_t)$.
Since $\partial\log\pi_z(a_t\mid s_t)/\partial z_{s,a}=\mathbf 1\{s=s_t\}\big(\mathbf 1\{a=a_t\}-\pi_z(a\mid s)\big)$
and a path visits each state at most once, the stated formula follows.

For~\eqref{eq:classicalpg}, $\partial\log P_z(\tau)/\partial z_{s,a}=
\mathbf 1\{s\in S_\tau\}\big(\mathbf 1\{a_\tau(s)=a\}-\pi_z(a\mid s)\big)$, so
\[
\frac{\partial\VER(s_0)}{\partial z_{s,a}}=\sum_{\tau\ni s}P_z(\tau)R(\tau)\big(\mathbf 1\{a_\tau(s)=a\}-\pi_z(a\mid s)\big).
\]
Every path through $s$ factors as a prefix from $s_0$ to $s$, an action at
$s$, and a suffix from $T(s,a_\tau(s))$, with
$P_z(\tau)=P_z(\text{prefix})\pi_z(a_\tau(s)\mid s)P_z(\text{suffix})$ and
$R(\tau)$ determined by the suffix. Summing over prefixes gives $\dvis(s)$ and
summing over suffixes gives $\VER$ of the successor, so
$\sum_{\tau\ni s,\,a_\tau(s)=a}P_z(\tau)R(\tau)=\dvis(s)\pi_z(a\mid s)\VER(T(s,a))$
and $\sum_{\tau\ni s}P_z(\tau)R(\tau)=\dvis(s)\VER(s)$. Substituting,
\[
\frac{\partial\VER(s_0)}{\partial z_{s,a}}=\dvis(s)\pi_z(a\mid s)\VER(T(s,a))-\pi_z(a\mid s)\dvis(s)\VER(s)
=\dvis(s)\pi_z(a\mid s)A_{\mathrm{ER}}(s,a).
\]
\end{proof}

The two formulas differ in exactly the way the two performance difference
lemmas differ. The classical gradient at $(s,a)$ carries the probability of
reaching $s$, the probability of taking $a$, and, inside $A_{\mathrm{ER}}$, the
probability of completing the task afterwards. On a path whose steps each have
probability $\rho$, every one of these factors is a power of $\rho$ and the
gradient at every logit on the path is of order $\rho^d$. The tropical gradient
on the witness carries none of them: it is $\lambda(\mathbf 1-\pi)$, of order
one however improbable the path. Natural policy gradient removes the upstream
factor by preconditioning and keeps the completion factor; the tropical
gradient removes both, because the path it differentiates is certified and its
advantage is identically zero. The next theorem turns this into a rate
separation on the simplest instance, with exact gradients, so that neither
sampling noise nor value estimation is involved.

\begin{theorem}[Exponential versus logarithmic escape on a chain]
\label{thm:plateau}
Let the graph be a chain $s_0\to s_1\to\dots\to s_d$, where at each $s_k$
($k<d$) the action $a^\star$ leads to $s_{k+1}$ and the action $a'$ leads to a
rejected terminal, and $s_d$ is an accepted terminal. Let the actor be tabular
softmax with one free logit per state, $\pi_k=\pi(a^\star\mid s_k)=\sigma(z_k)$,
initialized at $\pi_k^{(0)}=\rho\le\tfrac12$ for all $k$, and take
$\lambda=1$, $r\equiv0$. Consider gradient ascent with step $\eta>0$ on the
logits.
\begin{enumerate}
\item[(a)] \textnormal{(Expected return.)} For $J_{\mathrm{ER}}(z)=\VER(s_0)=\prod_{k<d}\pi_k$,
the first iteration $N$ at which $J_{\mathrm{ER}}\ge2\rho^d$ satisfies
\[
N>\frac{\log2}{2\,d\,\eta\,\rho^{d}}.
\]
\item[(b)] \textnormal{(Tropical.)} For $J_{\mathrm{trop}}(z)=\Vtrop(s_0)=\sum_{k<d}\log\pi_k$,
after $N_0=\lceil\tfrac{2}{\eta}\log\tfrac1\rho\rceil$ iterations every
$\pi_k\ge\tfrac12$, so $J_{\mathrm{trop}}\ge-d\log2$, and for every $n>N_0$,
\[
-J_{\mathrm{trop}}^{(n)}<\frac{2d}{\eta\,(n-N_0)}.
\]
\end{enumerate}
The bound in (a) is exponential in $d$; the bound in (b) is logarithmic in
$1/\rho$ and independent of $d$, followed by a $1/n$ rate.
\end{theorem}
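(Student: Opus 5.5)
The plan rests on one structural fact. On the chain, both objectives are explicit functions of the free logits $z_0,\dots,z_{d-1}$, with $\partial\pi_k/\partial z_k=\pi_k(1-\pi_k)$. Because $J_{\mathrm{ER}}=\prod_k\pi_k$ and $J_{\mathrm{trop}}=\sum_k\log\pi_k$, the exact gradients are
\[
\frac{\partial J_{\mathrm{ER}}}{\partial z_k}=J_{\mathrm{ER}}\,(1-\pi_k),\qquad
\frac{\partial J_{\mathrm{trop}}}{\partial z_k}=1-\pi_k .
\]
Both follow from $\partial\log\sigma(z)/\partial z=1-\sigma(z)$, and they are the chain instances of Corollary~\ref{cor:pg}. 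Everything else is bookkeeping on these two recursions, and no sampling is involved.

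For (a), I would bound how fast $\log J_{\mathrm{ER}}$ can grow while $J_{\mathrm{ER}}$ is still below $2\rho^d$. Let $N$ be the first iteration with $J_{\mathrm{ER}}^{(N)}\ge2\rho^d$, so $J^{(n)}<2\rho^d$ for every $n<N$. At such an $n$, each logit moves by $\Delta z_k=\eta J^{(n)}(1-\pi_k)<2\eta\rho^d$, and all moves are nonnegative. Since $z\mapsto\log\sigma(z)$ has derivative $1-\sigma<1$, the mean value theorem gives $\log\pi_k^{(n+1)}-\log\pi_k^{(n)}<\Delta z_k$. Summing over the $d$ coordinates and the $N$ iterations yields
\[
\log 2\le \log J^{(N)}-\log J^{(0)}<2dN\eta\rho^d ,
\]
which is the claimed inequality. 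I expect no obstacle here beyond keeping the inequality strict, and that comes from $1-\sigma<1$.

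For (b), the coordinates decouple completely: each logit follows $z\mapsto z+\eta(1-\sigma(z))$. This map is increasing in $z$, so each $\pi_k$ is nondecreasing. In the first phase, while $\pi_k<\tfrac12$, each step is at least $\eta/2$. The logit must travel from $z^{(0)}=\log\frac{\rho}{1-\rho}$ to $0$, a distance $\log\frac{1-\rho}{\rho}\le\log\frac1\rho$, so after $N_0=\lceil\frac2\eta\log\frac1\rho\rceil$ steps every $\pi_k\ge\tfrac12$. Monotonicity keeps them there, and this gives $J_{\mathrm{trop}}\ge-d\log2$.

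The second phase, the $1/n$ rate, is the step I expect to need the most care, and I would handle it through the potential $e^{z}$. After $N_0$ we have $e^z\ge1$, so $1-\sigma(z)=1/(1+e^z)\ge1/(2e^z)$. Using $e^{x}\ge1+x$,
\[
e^{z'}=e^z e^{\eta(1-\sigma(z))}\ge e^z+\frac{\eta}{2}.
\]
Hence $e^{z^{(n)}}\ge1+\eta(n-N_0)/2>\eta(n-N_0)/2$. Finally, $-\log\pi_k=\log(1+e^{-z_k})\le e^{-z_k}<2/(\eta(n-N_0))$. Summing over the $d$ coordinates gives $-J_{\mathrm{trop}}^{(n)}<2d/(\eta(n-N_0))$.
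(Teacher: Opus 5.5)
Your proposal is correct and follows the paper's proof essentially step for step. It uses the same exact gradients and the same per-iteration bound on $\log J_{\mathrm{ER}}$ (you invoke the mean value theorem where the paper invokes concavity of $\log\sigma$), and the same two-phase argument for (b), with the potential $e^{z}$ and $e^{x}\ge 1+x$. One wording fix: $\pi_k$ is nondecreasing along iterations because each step satisfies $z'-z=\eta(1-\sigma(z))>0$, not because the update map is increasing in $z$; that map is in fact not monotone when $\eta>4$.
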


\begin{proof}
(a) Since $\log\pi_k=\log\sigma(z_k)$ and $\frac{d}{dz}\log\sigma(z)=1-\sigma(z)$,
\[
\frac{\partial J_{\mathrm{ER}}}{\partial z_k}=J_{\mathrm{ER}}\,\frac{\partial\log J_{\mathrm{ER}}}{\partial z_k}=J_{\mathrm{ER}}(1-\pi_k)\in(0,J_{\mathrm{ER}}).
\]
Hence each $z_k$, each $\pi_k$ and $J_{\mathrm{ER}}$ are non-decreasing along
the iteration, and at any iteration $n$ with $J^{(n)}_{\mathrm{ER}}<2\rho^d$ every
logit moves by less than $2\eta\rho^d$. The function $\log\sigma$ is concave,
so for $\Delta>0$, $\log\sigma(z+\Delta)-\log\sigma(z)\le\Delta\,(1-\sigma(z))\le\Delta$.
Therefore, at any such iteration,
\[
\log J^{(n+1)}_{\mathrm{ER}}-\log J^{(n)}_{\mathrm{ER}}=\sum_{k<d}\big[\log\pi^{(n+1)}_k-\log\pi^{(n)}_k\big]<2\,d\,\eta\,\rho^d.
\]
All iterations before $N$ have $J_{\mathrm{ER}}<2\rho^d$, so summing,
$\log J^{(N)}_{\mathrm{ER}}-\log J^{(0)}_{\mathrm{ER}}<2d\eta\rho^dN$. Since
$J^{(0)}_{\mathrm{ER}}=\rho^d$ and $J^{(N)}_{\mathrm{ER}}\ge2\rho^d$, the left
side is at least $\log2$, which gives the claim.

(b) Here $\partial J_{\mathrm{trop}}/\partial z_k=1-\pi_k=\sigma(-z_k)$, so
$z^{(n+1)}_k=z^{(n)}_k+\eta\,\sigma(-z^{(n)}_k)$ and each $z_k$ is increasing.
The coordinates evolve independently and identically, so fix $k$.

Phase one. While $z^{(n)}_k<0$ we have $\sigma(-z^{(n)}_k)>\tfrac12$, so
$z^{(n+1)}_k-z^{(n)}_k>\eta/2$. The initial logit is
$z^{(0)}_k=\log\frac{\rho}{1-\rho}\ge\log\rho=-\log\frac1\rho$, since
$\frac{\rho}{1-\rho}\ge\rho$. Hence $z_k$ reaches $0$ within at most
$\lceil\frac{\log(1/\rho)}{\eta/2}\rceil=N_0$ iterations, and, being
increasing, stays nonnegative thereafter. Thus $\pi^{(n)}_k\ge\tfrac12$ for
all $n\ge N_0$ and $J^{(N_0)}_{\mathrm{trop}}\ge-d\log2$.

Phase two. For $z\ge0$, $1+e^{z}\le2e^{z}$, so
$\sigma(-z)=\frac{1}{1+e^{z}}\ge\tfrac12e^{-z}$. Let $y_n=e^{z^{(n)}_k}$ for
$n\ge N_0$, so $y_{N_0}\ge1$. Using $e^{x}\ge1+x$,
\[
y_{n+1}=y_ne^{z^{(n+1)}_k-z^{(n)}_k}\ge y_n\big(1+\eta\,\sigma(-z^{(n)}_k)\big)\ge y_n\Big(1+\frac{\eta}{2y_n}\Big)=y_n+\frac{\eta}{2}.
\]
Hence $y_n\ge1+\frac{\eta}{2}(n-N_0)$ for $n\ge N_0$. Finally
$-\log\pi^{(n)}_k=\log(1+e^{-z^{(n)}_k})\le e^{-z^{(n)}_k}=1/y_n\le\big(1+\tfrac{\eta}{2}(n-N_0)\big)^{-1}<\frac{2}{\eta(n-N_0)}$
for $n>N_0$. Summing over the $d$ coordinates gives the bound.
\end{proof}

\begin{remark}[Where the separation lies]
On this chain there is a single certified path, so
$J_{\mathrm{trop}}=\log J_{\mathrm{ER}}$ exactly, and part (b) also describes
gradient ascent on $\log J_{\mathrm{ER}}$, which is the maximum-likelihood
objective of MaxRL. The separation in Theorem~\ref{thm:plateau} is therefore
between expected return and its logarithm, and the obstruction in (a) is the
factor $J_{\mathrm{ER}}$ in the gradient, which is the weight
$\omega_{\mathrm{ER}}=P_\theta(\tau)$ of Proposition~\ref{prop:weights}. What
distinguishes the tropical objective from maximum likelihood is not this rate
but the object it moves toward, as established in
Section~\ref{app:projections} and in Proposition~\ref{prop:incompat}: on a
problem with several certified paths the maximum-likelihood gradient is spread
over them by posterior weight, while the tropical gradient is the score
function of one. The theorem is stated with exact gradients to make clear that
the exponential cost in (a) is a property of the objective's landscape, not of
sampling; no critic, baseline or additional data removes it. With sampled
gradients every expected-return method also pays $\rho^{-d}$ to observe the
path at all, and that cost is the one the archive removes.
\end{remark}

\subsection{How TROPIC realizes these results}
\label{app:tropic}

The algorithm of Section~\ref{sec:tropic} is the loop these results describe.
The archive is the set of certified vertices of Section~\ref{app:projections};
discovery adds vertices and composition adds vertices assembled from existing
ones, and by Theorem~\ref{thm:mono} the set never shrinks. Rescoring under the
frozen actor recomputes the distances from the current actor to each vertex.
Corollary~\ref{cor:maxconseq}(a) is why
the prefix and suffix passes may be run separately and joined. The
improvement step is maximum likelihood on that nearest vertex, which
Corollary~\ref{cor:maxconseq}(b) identifies as the exact gradient of the
tropical value, Corollary~\ref{cor:sandwich} identifies as maximizing a
penalty-free lower bound on the true improvement, and Theorem~\ref{thm:cpi}
shows to be a pointwise improvement step on the whole retained graph. By
Theorem~\ref{thm:freqfree} and Proposition~\ref{prop:weights} the step depends
on the archive only through its support, which is why acquisition may be as
off-policy as the frontier rule makes it, with no correction. By
Proposition~\ref{prop:incompat} it is not a step that expected-return
information could have selected. What the actor is pulled toward, at every
iteration and at every interface, is the nearest computation it has been shown
to be capable of; what it is never pulled toward is a distribution over
computations, and what it is never allowed to lose is a computation it has
already been shown.

%% file: appendix/appendix_environments.tex
\section{RL Environments}
\label{app:RL_environments}

\begin{figure}[htbp]
  \centering
  \includegraphics[width=\linewidth]{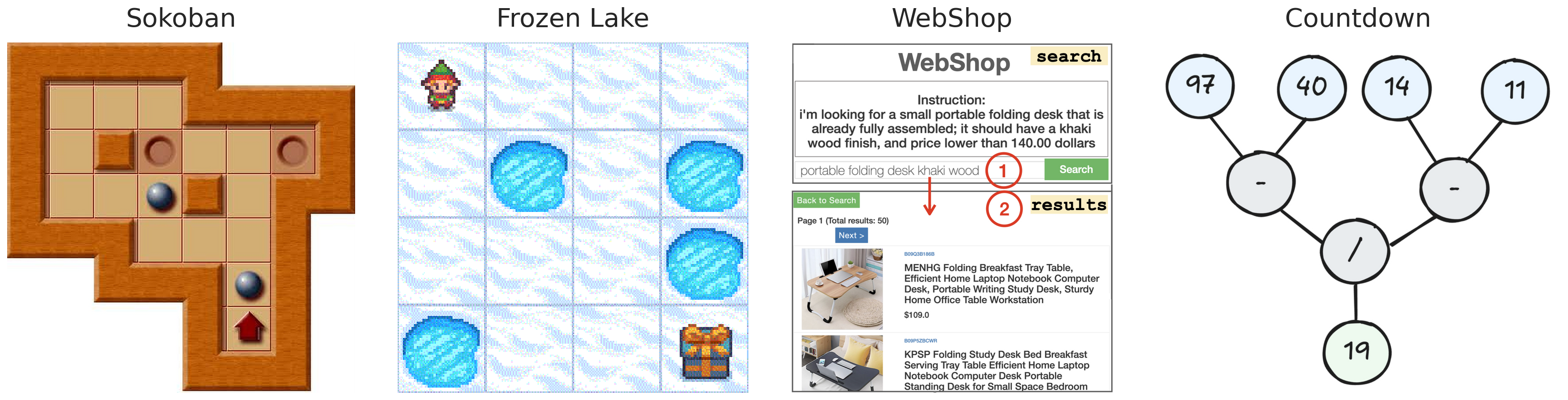}
  \vspace{-1em}
  \caption{\textbf{Environments.} From left to right: \textbf{Sokoban}, push the box onto the target without creating a dead end; \textbf{FrozenLake}, reach the goal without stepping into a hole; \textbf{WebShop}, search for and purchase a product that matches a natural-language instruction; and \textbf{Countdown}, combine the given numbers with arithmetic operations to reach the target, here $(97-40)/(14-11)=19$.}
  \vspace{-1em}
  \label{fig:Environments}
\end{figure}

All four environments in ~\refig{fig:Environments} are multi-turn agentic tasks from RAGEN-2~\citep{wang2026ragen2reasoningcollapseagentic}. 
In each, the agent observes a text rendering of the current environment state, emits an action, and receives a binary verifier signal only at the end of the episode. 
As described in the main body, \textsc{Tropic} prompts the actor with the problem and the current environment state only, and includes the step count in the state so that the graph of states is acyclic. 
Two attempts therefore share a state whenever they reach the same environment configuration at the same step, even if they got there by different actions. 
These shared states are the interfaces at which \textsc{Tropic} joins a prefix from one attempt with a suffix from another. 
For every task, we describe below what a state is, what the verifier checks, and why such interfaces arise.

\paragraph{Sokoban.}
The agent moves on a $6\times6$ grid and must push a single box onto a target cell. 
A state is the board configuration, i.e., the positions of the agent and the box. 
An episode succeeds when the box rests on the target. 
Pushing the box against a wall or into a corner can make the puzzle unsolvable, so many attempts fail late after a sequence of correct early pushes. 
For expected-return methods, such attempts yield zero reward and their correct prefixes receive no credit. 
\textsc{Tropic} keeps these prefixes in its graph: if another attempt, started from the same board configuration, reaches the target, the two are joined into a verified solution that neither attempt produced alone. 
Because different move orders often lead to the same board, such shared configurations are frequent.

\paragraph{Countdown.}
The agent is given three to six input numbers and a target, and must combine the numbers with $+$, $-$, $\times$, and $\div$, using each number exactly once, to reach the target. 
A state is the multiset of numbers still available, and each step replaces two numbers by the result of an operation on them. 
The verifier checks that the final number equals the target. 
Because arithmetic steps on disjoint numbers commute, different attempts frequently reach the same intermediate multiset, e.g., $\{57, 3\}$ from $97-40$ and $14-11$ in either order. 
An attempt that computes a useful intermediate result but then fails, and another that finishes correctly from the same intermediate multiset, can therefore be joined into a verified derivation. 
Training and evaluation use disjoint problem splits.

\paragraph{FrozenLake.}
The agent navigates a $4\times4$ grid from a start cell to a goal cell while avoiding holes, which end the episode. 
Each cell is safe with probability $0.9$ when the map is generated, and transitions are deterministic in our experiments. 
A state is the agent's position. Success requires an entire sequence of safe moves, and a single misstep anywhere ends the episode with zero reward, so failed attempts often contain long safe prefixes. 
\textsc{Tropic} retains these prefixes and restarts new attempts from the frontier of safe cells, so a safe route to the middle of the grid found in one attempt can be completed by a suffix found in another.

\paragraph{WebShop.}
The agent receives a natural-language shopping instruction, e.g., a portable folding desk with a given finish below a price limit, and must search the catalog, navigate result and product pages, select the required attributes, and purchase a matching item. 
We use the RAGEN-2 \texttt{webshop-minimal} adaptation. 
A state is the current page together with the selections made so far. 
An episode succeeds only if the purchased item satisfies all requested attributes. 
Episodes are long, and a mistake late in the episode, such as a wrong attribute, discards an otherwise correct search and navigation. 
\textsc{Tropic} keeps the correct search and navigation steps and can join them with a correct selection-and-purchase suffix found in another attempt from the same product page.

\paragraph{Why these Tasks Favor Composition.}
The four environments share the structure that makes expected return inefficient and tropical composition effective: the reward is sparse and terminal, failures typically occur after many correct steps, and different attempts pass through the same intermediate states. 
On-policy baselines must sample a complete successful episode before any of its steps is reinforced. 
\textsc{Tropic} instead retains every valid step, including steps from failed attempts, and assembles verified solutions at shared states. 
This is the mechanism measured directly by \textsc{First-Solve} in~\refsec{sec:ablation_results}.

%% file: appendix/appendix_training_setup.tex
\section{Training and Evaluation Setup}
\label{app:training_and_evaluation_setup}

Our implementation builds on RAGEN-2~\citep{wang2026ragen2reasoningcollapseagentic} and its VERL training backend~\citep{verl}. 
We follow the default RAGEN-2 configuration where applicable and describe all task- and method-specific settings below.

\paragraph{Models.}
We initialize all policies from Qwen2.5-3B-Instruct~\citep{qwen2025qwen25technicalreport}, with $3.1$B parameters, and Gemma-4-E4B-it~\citep{gemmateam2026gemma4}, with $4$B effective and $8$B total parameters including embeddings, i.e., more than twice as many parameters in total.
The two models also differ in family, architecture, and pretraining data, which lets us test whether \textsc{Tropic}'s gains depend on a particular model.
All models are fully fine-tuned without parameter-efficient adapters.

\paragraph{Baselines.}
We compare against PPO~\citep{schulman2017proximalpolicyoptimizationalgorithms}, GRPO~\citep{shao2024deepseekmath}, DAPO~\citep{yu2025dapoopensourcellmreinforcement}, and RAGEN-2's SNR-Aware Filtering~\citep{wang2026ragen2reasoningcollapseagentic}. 
All use AdamW with an actor learning rate of $10^{-6}$, $(\beta_1,\beta_2)=(0.9,0.999)$, and weight decay $0.01$. 
PPO and SNR-Aware use a learned critic with learning rate $10^{-5}$ and Generalized Advantage Estimation~\citep{schulman2018highdimensionalcontinuouscontrolusing} with $\gamma=1$ and $\lambda_{\mathrm{GAE}}=1$. 
GRPO and DAPO use group-relative advantages without a critic. 
For DAPO we use the standard implementation provided by RAGEN-2. 
The policy-ratio clipping interval is $[0.8, 1.28]$, entropy regularization uses coefficient $0.001$, and no reference-policy KL penalty is applied. 
SNR-Aware Filtering applies the RAGEN-2 rollout-group filter with cumulative probability threshold $p=0.9$, using the within-group reward standard deviation as its selection score. 
The other baselines apply no rollout filtering. 
The additional methods in Table~\ref{tab:sokoban_ablation} (MaxRL, GiGPO, and GiGPO\,+\,T-STAR) use the same backbone, optimizer, and rollout budget, with the remaining hyperparameters set as recommended by their authors.

\paragraph{\textsc{Tropic}.}
\textsc{Tropic} uses the same AdamW optimizer and actor learning rate ($10^{-6}$) as the baselines, and performs maximum-likelihood updates on verified paths.
It uses no critic, policy-ratio clipping, entropy regularization, or reference-policy KL penalty.
Each active training problem receives $G=2$ sequential groups of $n=8$ attempts per iteration.
The first group starts from the initial state; subsequent groups start from a selected frontier state when available, otherwise falling back to the initial state.
The resulting $G n=16$ rollout starts per problem match the baselines' nominal attempt budget; this does not imply equal generated-token counts or verification costs.
Valid transitions are retained up to the graph capacity even when the originating attempt fails, but only complete paths replayed from the initial state and accepted by the verifier are eligible as training targets.
Graph memory persists across iterations over a recurring pool of 128 training problems, with eight problems visited per iteration.
Table~\ref{tab:tropic_hparams} lists the \textsc{Tropic}-specific settings for the main experiments.

\begin{table}[h]
    \centering
    \caption{\textbf{\textsc{Tropic} hyperparameters.}
    Shared across tasks and models unless stated otherwise.}
    \label{tab:tropic_hparams}
    \vspace{0.06in}
    \small
    \ccfont
    \setlength{\tabcolsep}{3.8pt}
    \renewcommand{\arraystretch}{1.1}
    \resizebox{0.7\textwidth}{!}{%
    \begin{tabular}{l l c}
    \toprule
    \textbf{Hyperparameter} & \textbf{Description} & \textbf{Value} \\
    \midrule
    $G$ & Attempt groups per problem/iteration & $2$ \\
    $n$ & Attempts per group & $8$ \\
    $\kappa$ & Frontier exploration weight & $1.0$ \\
    --  & Training-rollout temperature & $1.0$ \\
    \midrule
    $L$ & Prefixes and suffixes per shared state & $4$ each \\
    --  & Composition candidates per problem/refresh & $64$ \\
    \midrule
    --  & Maximum retained edges per problem & $1{,}024$ \\
    --  & Maximum archived solutions per problem & $64$ \\
    --  & Edge-rescoring batch size & $64$ \\
    \midrule
    $K$ & Maximum training paths per problem & $2$ \\
    --  & Coverage-based path selection & Enabled \\
    --  & Update epochs per iteration & $2$ \\
    --  & Recurring training-problem pool size & $128$ \\
    \bottomrule
    \end{tabular}
    }
\end{table}

Composition is performed after each attempt group, giving two refreshes per iteration and at most $2 \times 64 = 128$ composition replays per active problem per iteration.
This limit does not include verification of sampled successful paths or any additional restart-prefix checks.

\paragraph{Training and Infrastructure.}
All experiments run on 8 NVIDIA A100 GPUs using Fully Sharded Data Parallel (FSDP) training with BF16 mixed precision. 
Rollouts are generated with vLLM, and distributed execution is coordinated through Ray. 
The optimization mini-batch size is 32 sequences, i.e., four sequences per GPU, with gradient accumulation where necessary. 
All methods train for at most 200 rollout--update iterations on Sokoban, Countdown, and FrozenLake, and at most 100 on WebShop.
Validation is performed before training and every 25 iterations.

\paragraph{Evaluation Protocol.}
Final evaluation uses a fixed set of 512 held-out instances per environment that are not seen during training. 
Decoding uses temperature $T=0.5$, top-$p=1$, and no top-$k$ truncation. 
At evaluation time, \textsc{Tropic} uses no archive, restarts, or composition: the trained policy must solve each instance in a single attempt, exactly like the baselines, so the score measures what training has transferred into the model. 
Each method is evaluated at its final checkpoint, and we report the mean and standard deviation across three independent training seeds.

\paragraph{Metrics.}
We report the following metrics.

\begin{enumerate}

    \item \textbf{Success rate.} The fraction of evaluation instances solved in a single attempt. Success means placing the box on the target in Sokoban, satisfying the arithmetic verifier in Countdown, reaching the goal in FrozenLake, and completing a fully correct purchase in WebShop.
    
    \item \textbf{pass@$k$.} The probability that at least one of $k$ attempts solves an evaluation instance (see \refig{fig:teaser}).
    
    \item \textbf{Generated tokens per episode.} The number of tokens generated across all turns of an evaluation episode, averaged over successful and unsuccessful episodes. We use each model's tokenizer and exclude prompts, environment observations, and padding. This metric measures inference cost only.
    
    \item \textbf{Training wall-clock time.} The elapsed time from the start to the end of training, including rollout generation, optimization, scheduled validation, and checkpointing, and excluding external environment setup and checkpoint preparation.
    
    \item \textbf{\textsc{First-Solve}} (ablations only). The fraction of solved training instances whose first verified solution was assembled by composition before any complete successful trajectory had been sampled. By definition, it is zero for \emph{No Composition}, which does not compose, and for \emph{Successful-Only}, which composes only from already successful trajectories.

\end{enumerate}

\paragraph{What the Evaluation Measures.}
We use \emph{compositional} as described in the main body: a solution is compositional when its fragments have been produced in separate attempts but never together in a single one. 
\textsc{Tropic} targets exactly this situation during training, and \textsc{First-Solve} measures it directly. 
On Sokoban, $31.6\%$ of solved training problems are first solved by a composition never sampled end to end. 
The held-out evaluation asks a separate question: whether the policy trained on these assembled solutions solves new instances on its own, in one attempt, without any of \textsc{Tropic}'s search machinery. 
Since every held-out instance is unseen during training, success requires producing a new solution rather than recalling a stored one. 
We do not claim systematic generalization in the sense of \citet{lake2018generalization}, i.e., to problems drawn from a different distribution of compositions, and leave such splits to future work.

\paragraph{On the Fairness of the Comparison.}
\textsc{Tropic} differs from the baselines in three respects that could affect the comparison: it restarts attempts from frontier states, it replays and verifies composed paths, and it uses its own stopping rule. We address each in turn.

\emph{Frontier restarts.} 
Restarting from a frontier state requires a deterministic, resettable environment. 
We regard this as the setting \textsc{Tropic} is designed for, not an advantage granted to it. To check that restarts alone do not explain the gains, the ablations in Table~\ref{tab:sokoban_ablation} separate them from composition. 
\textsc{Tropic} (No Composition) keeps the archive and frontier restarts and trains by maximum likelihood on sampled verified paths, which makes it closely related to archive-based exploration combined with rejection-sampling fine-tuning. 
It reaches $40.8\%$ on Sokoban, on par with the strongest baseline (SNR-Aware, $42.2\%$). 
Conversely, \textsc{Tropic} (Root-Only) removes frontier restarts but keeps composition, and reaches $51.8\%$, above every baseline. 
The improvement therefore comes from composing fragments across attempts, not from restarts.

\emph{Replay and verification.} 
Replaying a composed path executes its stored actions in the environment and queries the verifier. 
Accordingly, it generates no tokens. 
Since all methods generate the same number of rollouts per iteration, replay adds only environment calls, which are inexpensive relative to generation in all four tasks.

\emph{Stopping and Wall-Clock Time.}
All methods share the same iteration cap.
The baselines stop according to RAGEN-2's criteria, while \textsc{Tropic} stops early when no training problem yields a new update.
Reported wall-clock times therefore reflect the end-to-end cost of each method under its standard protocol, not a comparison at matched compute.
This does not favor \textsc{Tropic}. 
Since no method can exceed the shared cap, early stopping can only shorten \textsc{Tropic}'s training, never extend it, and each baseline trains for as long as its standard protocol allows. 
Running every method under its standard protocol is also the setting in which a practitioner would compare them.
Success rates are unaffected by this difference: all methods are evaluated at their final checkpoint on the same held-out instances.

%% file: appendix/appendix_limitations.tex
\section{Limitations}
Tropical RL changes a single operation of reinforcement learning: alternative trajectories are combined by a maximum instead of a sum. This one change gives the value three properties that expected return lacks. It names a concrete path, it never decreases as new verified paths are found, and it splits into a best prefix and a best suffix at any shared state. \textsc{Tropic} uses these properties to assemble solutions from fragments the actor never produced together. On four agentic tasks, this translates into higher single-attempt success than on-policy baselines.

It is worth being precise about the guarantee. The archive, and the capability it certifies, never shrink. For an actor with separate parameters per state, one step on the best path raises the tropical value of every state in the graph (Theorem~\ref{thm:cpi-main}). A language model, however, shares parameters across states and problems, so a step on one problem can still lower the likelihood of a verified path on another. The coverage paths of Section~\ref{sec:tropic} mitigate this, but they are a heuristic, not a proof. \textsc{Tropic} therefore guarantees that no verified computation is lost from \emph{memory}, not that the \emph{actor} never forgets.

This distinction points to the direction we find most promising: continual self-improvement, in which a capability, once verified, is kept and becomes reusable by later ones. Expected return has no mechanism for this, since it stores discoveries only in current probabilities and normalization makes them compete (Proposition~\ref{prop:forget}). The tropical value provides the missing bookkeeping: a set of verified computations that only grows and whose best elements compose. Whether this record can be transferred into shared weights without interference, at scale, remains open.